%% file: template.tex
\documentclass{article}

\usepackage{arxiv}
\input{math_commands}

\usepackage[utf8]{inputenc} 
\usepackage[T1]{fontenc}    
\usepackage{hyperref}       
\usepackage{url}            
\usepackage{booktabs}       
\usepackage{amsfonts}       
\usepackage{nicefrac}       
\usepackage{microtype}      
\usepackage{lipsum}		
\usepackage{graphicx}
\usepackage{natbib}
\usepackage{doi}

\title{Time-Anchored Diffusion Language Models: Latent-Space Caching for Fast Generation}

\author{ {Joel Anto Paul} \\
	UT Austin \\
	\texttt{joelanto@utexas.edu} \\
	\And
    {Litu Rout} \\
		UT Austin \\
	\texttt{litu.rout@utexas.edu} \\
	\And
    {Aditya Akella} \\
		UT Austin \\
	\texttt{akella@cs.utexas.edu} \\
    \And
    {Sanjay Shakkottai} \\
		UT Austin \\
	\texttt{sanjay.shakkottai@utexas.edu} \\
}

\renewcommand{\headeright}{Technical Report}
\renewcommand{\undertitle}{Technical Report}
\renewcommand{\shorttitle}{Time-Anchored Diffusion Language Models}

\hypersetup{
pdftitle={A template for the arxiv style},
pdfsubject={q-bio.NC, q-bio.QM},
pdfauthor={David S.~Hippocampus, Elias D.~Striatum},
pdfkeywords={First keyword, Second keyword, More},
}

\usepackage[utf8]{inputenc} 
\usepackage[T1]{fontenc}    
\usepackage{hyperref}       
\usepackage{url}            
\usepackage{booktabs}       
\usepackage{longtable}      
\usepackage{amsfonts}       
\usepackage{nicefrac}       
\usepackage{microtype}      
\usepackage{xcolor}         
\usepackage{graphicx}

\usepackage{amsthm}

\usepackage{colortbl}
\usepackage{graphicx}
\usepackage{placeins}
\usepackage[ruled,linesnumbered]{algorithm2e}
\usepackage{wasysym}
\usepackage{mathtools}
\usepackage{subcaption}
\usepackage{algorithmic}
\usepackage{adjustbox}
\usepackage{amssymb}

\newtheorem{theorem}{Theorem}[section]

\newtheorem{remark}[theorem]{Remark}

\usepackage[most]{tcolorbox}
\usepackage{multirow}
\usepackage{tabularx}
\usepackage{enumitem}
\usepackage{mdframed}

\usepackage{wrapfig} 
\tcbuselibrary{skins, breakable}

\tcbset{
    promptbox/.style={
        enhanced,
        breakable,
        colback=red!4,
        colframe=red!55!black,
        boxrule=0.7pt,
        arc=2pt,
        left=5pt,right=5pt,top=4pt,bottom=4pt,
        fonttitle=\bfseries\small,
        coltitle=black
    },
    responsebox/.style={
        enhanced,
        breakable,
        colback=yellow!10,
        colframe=yellow!55!black,
        boxrule=0.7pt,
        arc=2pt,
        left=5pt,right=5pt,top=4pt,bottom=4pt,
        fonttitle=\bfseries\small,
        coltitle=black
    }
}

\hypersetup{colorlinks=true,                
    breaklinks=true,                
    urlcolor= orange,                
    linkcolor= orange,   
    bookmarksopen=false,
    filecolor=black,
    citecolor=blue,
    linkbordercolor=orange
}

\begin{document}

\maketitle

\begin{abstract}
Recent work on anchored diffusion language models improves denoising by shaping an intermediate latent space with supervised important-token targets. In this work, we introduce \emph{time-based (self-supervised) anchoring}, which learns and reuses latent anchors without requiring such targets. Our key observation is that anchors encode persistent properties of the clean sequence, such as its semantic intent, global structure, or intermediate plan. Although their hidden representations become stale as the token canvas evolves, their semantic content remains useful across nearby diffusion times. 
This is implemented through a two-stage architecture consisting of a relatively expensive anchor network that generates the latent cache state and a lightweight denoising network that intelligently combines the cached latent state with the current state at each reverse step using a fusion module. This gives anchoring a latent-space caching interpretation: the anchor network is evaluated periodically, while its cached representation is reused across multiple reverse steps. We instantiate this framework as TADM:Post-train, which time-anchorizes pretrained DLMs, and TADM:Pretraining, which learns time-based anchors during pretraining. Applied to DiffusionGemma-26B, TADM:Post-train improves throughput by approximately $49\%$ to $79\%$ on several math, code, and STEM benchmarks (GSM8K, AIME26, GPQA-Diamond, LiveCodeBench-v6, HumanEval, MMLU-Pro). TADM:Pretraining reduces Transformer-layer computation by up to $38\%$ relative to a standard single-stage DLM, achieves up to $73\%$ higher measured throughput than ADLM.
\end{abstract}
\section{Introduction}
\label{sec:introduction}
Diffusion Language Models (DLMs) iteratively refine an entire sequence,
enabling bidirectional attention and parallel token generation
\citep{d3pm,sedd,mdlm,md4}. This iterative formulation enables revision and
error correction during generation~\citep{remdm,Duo,schiff2025simple}, and
has recently been scaled to large language models such as
LLaDA~\citep{llada}, Dream~\citep{dream}, and
DiffusionGemma~\citep{diffusiongemma}, demonstrating that diffusion-based
generation can support competitive reasoning, coding, and instruction
following at scale. Block diffusion models provide another direction,
combining autoregressive generation across blocks with parallel denoising
within each block to improve generation flexibility and efficiency
\citep{bd3lm,diffusiongemma}. Despite these advances, DLMs still require repeated
model evaluations across denoising steps, making inference substantially
more expensive than autoregressive decoding.

\begin{figure}[t]
    \centering
    \includegraphics[
        width=\columnwidth,
        height=8cm,
        keepaspectratio,
        trim=0 3cm 0 3cm,
        clip
    ]{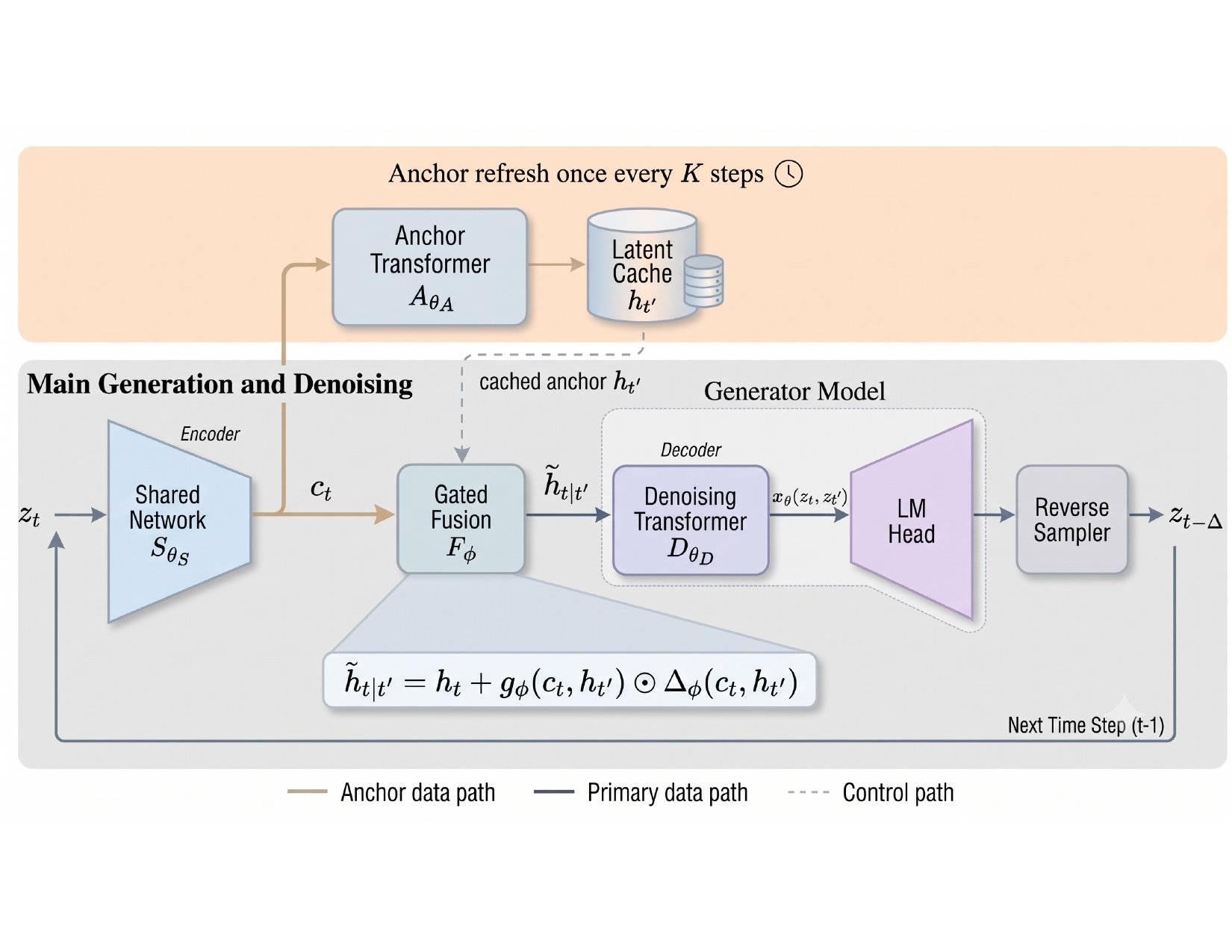}
    \caption{
        \textbf{Time-Anchored Diffusion Model (TADM).}
        The expensive anchor pathway is evaluated once every $K$ reverse
        steps to produce a latent anchor $\mathbf{h}_{t'}$, which is cached
        and reused. At each subsequent step, the lightweight shared network
        encodes the evolving state $\mathbf{z}_t$, and a gated fusion module
        combines its current representation $\mathbf{c}_t$ with the cached
        anchor before denoising. TADM therefore replaces repeated evaluation
        of the expensive anchor network with learned latent-space reuse.
    }
    \label{fig:tadm-overview}
    \vspace{-3ex}
\end{figure}

In recent work, the Anchored
Diffusion Language Model (ADLM) showed that denoising can be improved by first predicting
informative anchor tokens that reduce the conditional uncertainty of the
remaining sequence, and thus guide the denoiser~\citep{adlm}.
In this work, we introduce \emph{time-based (self-supervised) anchoring}, which learns and reuses latent anchors without requiring such targets. The key observation is that anchors describe persistent properties of the clean sequence, such as its semantic intent, global structure, or intermediate plan. Their semantic content should therefore remain useful across nearby diffusion times, even though their hidden representations become stale as the token canvas evolves.

We exploit this persistence by decomposing the model into four components: a lightweight shared network, an expensive anchor network, a fusion module, and a lightweight denoiser. The anchor representation changes slowly across nearby diffusion times and is therefore computed only at anchor-refresh steps and reused thereafter. At each reverse step, the shared network encodes the current canvas, and the fusion module combines this representation with the cached anchor before denoising. The anchor refresh interval therefore provides a tunable quality--compute trade-off by controlling how long the expensive anchor computation is reused (Figure~\ref{fig:tadm-overview}).

This construction also turns anchoring into a form of \emph{latent-space caching}. Existing methods reuse key-value states because consecutive diffusion states are similar \citep{dkvcache,d2cache,elasticcache}, but these local, layer-specific representations are not trained for temporal reuse. We instead train the model to produce a 
latent representation that can be reused for several denoising steps and corrected using the current state. This shifts the problem from detecting unchanged key-value states to learning the state that can be reused.

\noindent \textbf{Results.} Depending on whether a DLM is adapted after pretraining or trained from scratch, we instantiate time-based anchoring in two forms: \textbf{TADM:Post-train} and \textbf{TADM:Pretraining}. TADM:Post-train enables existing pretrained DLMs to acquire latent-cache reuse without repeating pretraining; on DiffusionGemma-26B, by training only a 3.1M-parameter fusion module while keeping the pretrained backbone frozen, it achieves $49\%$--$79\%$ higher throughput while preserving task accuracy of the corresponding baseline across the evaluated math, code, and STEM benchmarks. TADM:Pretraining instead learns cacheable latent representations directly during pretraining, reducing Transformer-layer computation by up to $38\%$ and achieving up to $73\%$ higher measured throughput than ADLM. At $T=2048$, it attains a MAUVE score of $0.650$ compared with $0.610$ for ReMDM while using $25\%$ fewer Transformer-layer evaluations than MDLM. 

\section{Preliminaries}
Let $\gV$ denote the vocabulary space with $V$ unique discrete tokens. Let $x = (x^1, x^2, \cdots, x^L) \text{ where } x^l \in \gV$, $l\in[L]$ be the sample sequence. We denote each element of the sample sequence as a one-hot vector; thus the input sequence can be denoted by $\rvx = (\rvx^1, \rvx^2, \cdots, \rvx^L)$ where $\rvx^l$ is a $V$-dimensional
one-hot vector with $\rvx^l[i] = 1$ when $x^l = i$, otherwise 0. We assume that the input sequence is sampled from an unknown distribution $q(\cdot)$ supported on $\gV^L$. We denote the Hadamard product of 
two vectors $\rva$ and $\rvb$ as $\rva \odot \rvb$ and dot product as $\langle \rva, \rvb \rangle$.

\subsection{Discrete Diffusion Models}
\label{sec:discrete-diffusion}

Discrete Diffusion Language Models (DLMs)~\citep{sohl2015deep,d3pm} define a forward corruption process that gradually transforms a clean sequence $\rvx$ into noise. Let $T$ denote the number of discrete diffusion steps, with $t(i)=i/T$ and $s(i)=(i-1)/T$. For simplicity, we write $t=t(i)$ and $s=s(i)$. The forward process independently corrupts each token according to
\vspace{-1ex}
\begin{align}
\label{eq:fwd}
    q(\rvz_t|\rvx) = \prod_{l=1}^{L} q(\rvz_t^l|\rvx), \quad   q(\rvz_t^l|\rvx) = \cat\left(\rvz_t^l; \alpha_t \rvx^l + (1-\alpha_t) \boldsymbol{\pi} \right),\quad l \in \{1,2,\cdots, L\},
\vspace{-1ex}
\end{align}
where $\alpha_t\in[0,1]$ is a monotonically decreasing noise schedule satisfying $\alpha_0=1$ and $\alpha_1=0$, and $\boldsymbol{\pi}$ denotes the limiting noise distribution. Defining $\alpha_{t|s}=\alpha_t/\alpha_s$, the corresponding one-step forward transition is
$q(\rvz_t^l|\rvz_s^l)=
\cat(\rvz_t^l;\alpha_{t|s}\rvz_s^l+
(1-\alpha_{t|s})\boldsymbol{\pi})$.

Because the clean token $\rvx^l$ is known during training, the posterior distribution of the preceding diffusion state has a closed form:
\vspace{-1ex}
\begin{equation}
\label{eq:rev-posterior}
q(\rvz_s^l\mid\rvz_t^l, \rvx^l)
=
\operatorname{Cat}\!\left(
\rvz_s^l;
\frac{
\left[
    \alpha_{t \mid s}\rvz_t^l
    +
    \left(1-\alpha_{t \mid s}\right)
    \mathbf{1}\boldsymbol{\pi}^{\top}\rvz_t^l
\right]
\odot
\left[
    \alpha_s\rvx^l
    +
    \left(1-\alpha_s\right)\boldsymbol{\pi}
\right]
}{
    \alpha_t\rvz_t^{l^{\top}}\rvx^l
    +
    \left(1-\alpha_t\right)
    \rvz_t^{l^{\top}}\boldsymbol{\pi}
}
\right).
\vspace{-1ex}
\end{equation}

At inference time, however, the clean sequence $\rvx$ is unknown. A neural denoiser therefore predicts a distribution over the clean token,
$\rvx_\theta^l(\rvz_t)$, and substitutes this prediction into the analytic posterior:
\vspace{-1ex}
\begin{align}
p_\theta(\rvz_s^l|\rvz_t)
=
q\!\left(
\rvz_s^l\mid
\rvz_t^l,
\rvx_\theta^l(\rvz_t)
\right).
\end{align}
The complete learned reverse process consequently factorizes as $p_\theta(\rvx,\rvz_{0:1})=p_\theta(\rvz_1)p_\theta(\rvx|\rvz_0)\prod_{i=1}^{T}p_\theta(\rvz_{s(i)}|\rvz_{t(i)})$.
The denoising model is trained by minimizing the discrete-time negative evidence lower bound (NELBO):
\vspace{-1ex}
\begin{align}
\gL_{\mathrm{NELBO}}(\rvx,\theta)
=&\;
\E_q\left[-\log p_\theta(\rvx\mid \rvz_{t(0)})\right]
\nonumber +
\sum_{i=1}^{T}
\E_q\left[
D_{\mathrm{KL}}\left(
q(\rvz_{s(i)}\mid \rvz_{t(i)},\rvx)
\,\Vert\,
p_\theta(\rvz_{s(i)}\mid \rvz_{t(i)})
\right)
\right]
\nonumber\\
&+
D_{\mathrm{KL}}\left(
q(\rvz_{t(T)}\mid\rvx)
\,\Vert\,
p_\theta(\rvz_{t(T)})
\right).
\label{eq:discrete-nelbo}
\vspace{-1ex}
\end{align}

\subsection{Anchored Diffusion Language Models}
\label{sec:ADLM}

A key limitation of standard masked diffusion models is that the corruption and reverse processes do not explicitly prioritize tokens according to their informativeness. Consequently, important tokens may remain masked until late in the reverse process, leaving the denoiser with insufficient context to accurately predict the remaining tokens. In particular, missing informative tokens can induce high conditional uncertainty over the rest of the sequence, making the denoising problem harder(See Appendix \ref{sec:ADLM-app} for more details). To address this limitation, \citet{adlm} introduced the \textit{Anchored Diffusion Language Model} (ADLM), which explicitly encourages the model to reason through a small set of informative tokens, referred to as \textit{anchor tokens}.

\textbf{Two-stage Parameterization.}
ADLM decomposes the standard denoising network into two stages: an \textit{anchor network} and a \textit{denoiser network}. Given a noisy sequence $\rvz_t$, the anchor network first predicts a probability mixture $\rvy_{\theta_A}(\rvz_t)$ over important tokens. The denoiser network then predicts the clean-token distribution conditioned on these anchored predictions. Thus, the output prediction is parameterized as
\vspace{-1ex}
\begin{equation}
    \rvx_\theta(\rvz_t)
    =
    \rvx_{\theta_D}\bigl(\rvy_{\theta_A}(\rvz_t)\bigr),
    \vspace{-1ex}
\end{equation}
where $\theta=[\theta_A,\theta_D]$ denotes the parameters of the anchor and denoiser networks, respectively.

\textbf{Training Objective.}
ADLM jointly trains the anchor and denoiser networks using an \textit{Anchored Negative Evidence Lower Bound} (ANELBO) (\ref{eq:anelbo-old}). The objective is split as 
\vspace{-1ex} 
\begin{align}
\label{eq:anelbo}
&\gL_{\mathrm{ANELBO}}(\rvx; \theta_A, \theta_D) 
\coloneq 
\gL_{diffusion}(\rvx, \theta) + \gamma\gL_{Anchor}(\rvx, \theta)
\vspace{-1ex} 
\end{align}
where $\gamma$ controls the strength of anchor supervision. The first term trains the denoiser to reconstruct the clean sequence conditioned on the anchor-network output, while the auxiliary anchor loss directly encourages the anchor network to predict informative tokens.

\section{Time-Anchored Diffusion Framework}
\label{sec:stale-anchor}
ADLM demonstrates that anchoring on important tokens can simplify denoising by reducing the uncertainty of the remaining sequence. However, its training requires a task-dependent specification of which tokens should serve as anchors, and such supervision may not always be available. We introduce \emph{time-based anchoring}, where anchors are instead learned as latent representations that captures information persistent across diffusion time, such as the global structure or underlying parent variables governing the sequence. These latent anchors are learned directly from the denoising objective and reused across multiple diffusion steps, without requiring explicit anchor targets. This naturally enables \emph{latent-space caching}, where an informative representation computed at one diffusion state can be retained and reused as the state evolves.

We realize time-based anchoring by decomposing the model into four components: a \emph{shared embedding network}, an expensive \emph{anchor network}, a lightweight \emph{denoiser network}, and a learned \emph{fusion module}. The shared network first maps the current diffusion state into a common representation space. The anchor network processes this representation to produce an expensive latent anchor, which is computed once, cached, and reused across multiple reverse-diffusion steps. At each subsequent step, the shared network encodes the current diffusion state, and the fusion module combines this current representation with the cached anchor before passing the result to the denoiser. Training under this reuse pattern teaches the fusion and denoising components to operate with temporally stale anchor representations, enabling the expensive anchor computation to be cached across diffusion time.

\begin{remark}
A motivation behind anchoring is that not all tokens provide equal information during the reconstruction of a sequence. ADLM describes anchor tokens as a set of tokens, denoted by $X_A$, conditioned on which the uncertainty of estimating the remaining tokens $X_{\bar A}$ is minimized. Given an anchor budget d, these anchor tokens can therefore be viewed as: $A^\star \in \operatorname*{arg\,min}_{A\subseteq[L],\,|A|\leq d} H\!\left(X_{\bar A}\mid X_A\right).$ 
Hence, revealing or accurately predicting $X_A$ makes the remaining variables easier to estimate.

In an idealized two-tier graphical model motivating anchoring (see \cite{adlm}), a small set of invariant upper-tier parent variables renders the lower-tier token variables conditionally independent, so that $p(x_{\bar A}\mid x_A)=\prod_{i\in\bar A}p(x_i\mid x_A)$ \citep{koller2009probabilistic}. Diffusion changes the observed corruption state but not these underlying parent variables; consequently, their semantic content, which are these parent variables, is invariant across diffusion time, although the learned representation may become stale and require correction, which is where the time-based anchoring comes into the picture. Our time-anchored framework exploits this invariance by training the denoiser with stale anchors and refining them using the current state. This encourages the anchor representation to capture information that remains useful across diffusion time, corresponding in the idealized graphical model to the persistent upper-tier parent variables.
\end{remark}

We next introduce the framework for time-based anchoring.
\subsection{Model Decomposition}
We decompose the denoising network into four components: a lightweight \emph{shared network} $S_{\theta_S}$, an expensive \emph{anchor network} $A_{\theta_A}$, a tiny \emph{fusion module} $\Phi_\phi$ and a lightweight \emph{denoiser network} $D_{\theta_D}$. For a noisy sequence $\rvz_t$, a standard fresh forward pass can be written as
\vspace{-1ex} 
\begin{equation}
\mathbf{c}_t = S_{\theta_S}(\rvz_t)
\qquad
\mathbf{h}_t = A_{\theta_A}(\mathbf{c}_t)
\qquad
\rvx_\theta(\rvz_t,t) = D_{\theta_D}(\mathbf{h}_t).
\vspace{-1ex}
\end{equation}
where $\mathbf{c}_t$ denotes the current-state representation and
$\mathbf{h}_t$ denotes the deep contextual representation produced by
the anchor network.
The decomposition is chosen such that the anchor network contains a large fraction of the total model computation and is evaluated only periodically. The shared network and denoiser remain inexpensive enough to be evaluated at every denoising step. Importantly, the precise decomposition is architecture-dependent: $S_{\theta_S}$ may consist only of an embedding/current-state pathway or may additionally contain several Transformer layers as in case of TADM:Post-train.

At an anchor-refresh step $t'$, we compute and cache the anchor representation
$\mathbf{h}_{t'}=A_{\theta_A}(S_{\theta_S}(\rvz_{t'}))$. At a later reverse
step $t<t'$, before the next refresh, the anchor network is skipped; instead,
the current state is encoded, fused with the cached anchor, and passed to the
denoiser:
\vspace{-1ex}
\begin{equation}
\mathbf{c}_t = S_{\theta_S}(\rvz_t)
\quad
\mathbf{h}_{t'} = A_{\theta_A}(S_{\theta_S}(\rvz_{t'}))
\quad
\widetilde{\mathbf{h}}_{t\mid t'}
= \Phi_{\phi}(\mathbf{c}_t,\mathbf{h}_{t'})
\quad
\rvx_\theta(\rvz_t,\rvz_{t'})
= D_{\theta_D}(\widetilde{\mathbf{h}}_{t\mid t'})
\vspace{-1ex}
\end{equation}
Thus, at intermediate reverse steps, only the shared network, fusion module,
and denoiser are evaluated, while the expensive cached anchor
$\mathbf{h}_{t'}$ is reused until the next refresh.

\subsection{Fusion network}
Replacing the input to the denoiser with the stale representation $\mathbf{h}_{t'}$ is not sufficient as this does not have information about the current state $\mathbf{c}_t$. Therefore, we introduce a tiny Fusion module $\Phi_\phi$ which combines $\mathbf{c}_t$ with $\mathbf{h}_{t'}$. This module is very small in comparison to the Shared network, Anchor Network and the Denoiser network.
The fusion module computes the corrected anchor representation as follows:
\vspace{-1ex} 
\begin{equation}
    \widetilde{\mathbf{h}}_{t\mid t'}
    =
    \mathbf{h}_{t'}
    +
    \mathbf{g}_{\phi}(\rvc_t, \rvh_{t'})
    \odot
    \boldsymbol{\Delta}_{\phi}(\rvc_t, \rvh_{t'}).
\label{eq:stale-anchor-fusion}
\vspace{-1ex} 
\end{equation}

Viewing the complete time-anchored model as a function of the current
state $\rvz_t$ and the stale anchor state $\rvz_{t'}$,
we denote its clean-sequence prediction by
$\rvx_\theta(\rvz_t,\rvz_{t'})$. Since
$\mathbf h_{t'}=A_{\theta_A}(S_{\theta_S}(\rvz_{t'}))$, the prediction is
\vspace{-1ex} 
\begin{equation}
    \label{eq:DenoiserOp}
    \rvx_\theta
    \left(
        \rvz_t,\mathbf{z}_{t'}
    \right)
    =
    D_{\theta_D}
    \left(
        \widetilde{\mathbf{h}}_{t\mid t'}
    \right).
    \vspace{-1ex} 
\end{equation}

\subsection{Anchor-Cache inference}
During inference, the anchor cache is refreshed once in every $K$ steps using the anchor network, whereas the shared and denoiser networks are evaluated at each step. The inference policy is summarized in Algorithm~\ref{alg:anchor-cache-inference}.

If $L_S$, $L_A$, and $L_D$ denote the numbers of shared, anchor, and denoiser Transformer layers, respectively, then the full and cached inference costs are
\vspace{-1ex}
\begin{equation}
C_{\mathrm{full}}=T(L_S+L_A+L_D),
\qquad
C_{\mathrm{cache}}=T(L_S+L_D)+\left\lceil\frac{T}{K}\right\rceil L_A .
\label{eq:anchor-cache-compute}
\vspace{-1ex}
\end{equation}
Thus, ignoring the ceiling term, the relative Transformer-layer cost is
\[
\frac{C_{\mathrm{cache}}}{C_{\mathrm{full}}}
\approx
\frac{L_S+L_D+L_A/K}{L_S+L_A+L_D},
\]
showing that increasing the anchor refresh interval $K$ reduces computation by amortizing the expensive anchor network across multiple reverse steps.

\subsection{Cache-induced distribution}

Let $\pi_K$ denote the anchor-refresh policy and let
$d_{\theta,\pi_K}$ denote the joint distribution over current states, and
cached anchor representations induced by executing the
target model under this policy. Ideally, training for cached inference
would minimize
\vspace{-1ex} 
\begin{equation}
    \gL_{\mathrm{cache}}
    =
    \E_{(t,t')\sim \pi_K}
    \E_{
        (\rvz_t,\mathbf{h}_{t'})
        \sim d_{\theta,\pi_K}(\cdot\mid t,t')
    }
    \left[
        \ell
        \left(
            \rvx_\theta
            \left(
                \rvz_t,\rvz_{t'}
            \right),
            \rvx
        \right)
    \right].
\label{eq:cache-objective}
\vspace{-1ex}
\end{equation}
Sampling from $d_{\theta,\pi_K}$ would require the reverse generation process during training, which introduces additional computation. In the following sections, we will introduce two ways of minimizing Eq. \ref{eq:cache-objective} depending on whether the model is pretrained from scratch or adapted through post-training.

\section{ TADM:Post-train}
TADM:Post-train is a post-training method using DiffusionGemma~\citep{diffusiongemma} as the base model, which is a USDM. It uses the framework developed in Sec \ref{sec:stale-anchor} for pre-trained DLMs. It approximates $d_{\theta,\pi_K}$ by generating the next denoising states via the rollout of the model during training. Simply sampling the noisy states $\rvz_t$ and $\rvz_{t'}$ using the forward processes in Eq. \ref{eq:fwd} and Eq. \ref{eq:fwd_t_t'}, then computing $\rvh_{t'}=A_{\theta_A}\left(S_{\theta_S}(\rvz_{t'})\right)$ would introduce a mismatch as the distribution $d_{\theta,\pi_K}$ encountered during inference would be different from the distribution induced by Eq. \ref{eq:fwd} because we are using a pretrained model. Therefore, to minimize this mismatch to an extent, we do the following:

\textbf{Rollout Construction.} For a clean response canvas $\rvx_0$, we sample a rollout age $k\in K_{train}=\{0,1,2\}$ and a noisy state $\rvz_{t'}$ using Eq. \ref{eq:fwd}, where $t'\sim\mathcal{U}(0,1)$ and we use uniform noise. We then project $t'$ onto the $T$-step sampling grid. $i_{\mathrm{a}}=\operatorname{clip}\left(\left\lfloor t'T+\tfrac{1}{2}\right\rfloor, k+1,T \right)$. Let $i_{\mathrm{s}}=i_{\mathrm{a}}-k.$ The anchor is evaluated once at this state, and then cached. $\rvh_{t'}=A_{\theta_A}\left(S_{\theta_S}(\rvz_{t'})\right)$. Starting from $\widetilde{\rvz}_{i_a}=\rvz_{t'}$, the current model
performs $k$ reverse transitions using the same denoising and sampling
operations as Algorithm~\ref{alg:anchor-cache-inference}, while keeping
the cached anchor $\rvh_{t'}$ fixed throughout the rollout. Note that the sampler is of your choice and in this specific case we use the sampler used by Diffusion Gemma.
\begin{equation}
    \widetilde{\rvz}_{j-1}
    \sim
    \mathcal{S}\!\left(
        p_\theta\!\left(
            \cdot \mid
            \widetilde{\rvz}_{j},
            \rvh_{t'}
        \right),j
    \right),
    \qquad
    j=i_{\mathrm{a}},\ldots,i_{\mathrm{s}}+1.
    \label{eq:anchor-sft-rollout}
\end{equation}
Therefore, in TADM:Post-train, the end point $\widetilde{\rvz}_{i_{s}}$ is sampled from the student rather than synthesized from the clean target, and is equivalent to $\rvz_t$.

\textbf{Training Objective.}
The discrete rollout in Eq~\ref{eq:anchor-sft-rollout} is
stop-gradient. After sampling its endpoint, we recompute the prompt,
cached-anchor, and terminal-state paths with automatic differentiation.
This preserves the on-policy terminal state without backpropagating
through discrete token sampling and allows supervision to reach the
trainable parameters in the shared, anchor, fusion, and denoising
pathways.

For each training example, we sample an anchor time $t'$ and a cache
age $k\in\{0,1,2\}$. The corresponding current diffusion time is
\(
t=t'-\frac{k}{T},
\)
so that the cached anchor is computed $k$ reverse steps before the
current state. TADM:Post-train minimizes
\vspace{-0.5ex}
\begin{equation}
    \mathcal{L}_{\mathrm{cache}}
    =
    \E_{t',k}
    \E_{
        (\widetilde{\rvz}_{t},\mathbf{h}_{t'})
        \sim d_{\theta,\pi_K}(\cdot\mid t',k)
    }
    \left[
        \mathcal{L}_{\mathrm{CE}}
        +
        \lambda_{\mathrm{KD}}D_{\mathrm{KL}}\!\left(p_{\theta_0}^{\,l}(\cdot\mid\widetilde{\rvz}_{t})\,\Vert\,\widehat{\rvx}^{\,l}
    \right)
    \right],
    \qquad
    t=t'-\frac{k}{T},
    \label{eq:anchor-sft-objective}
\end{equation}
where $\lambda_{\mathrm{KD}}=1$ in our experiments, $\theta_0$ is the frozen pretrained model and $\widehat{\rvx}$ is the output prediction of the denoiser using $\widetilde{\rvz_t}$ as the input. 
The outer expectation
averages over the sampled anchor time and cache age, while the inner
expectation is over the model-generated current state and cached anchor
induced by the rollout policy.
The second term is the knowledge distillation loss to prevent unnecessary corruption of the output distribution.

\section{TADM:Pretraining}
Although rollout training provides a close approximation to the inference distribution, applying multiple model forwards during pretraining is quite expensive. Moreover, when training from scratch, the model generated trajectories would be poor. Therefore, we use analytically generated noisy states using the forward Eq. \ref{eq:fwd}. Our pretraining experiment is derived from the two-stage architecture described in Sec. \ref{sec:ADLM} parameterized by $\theta=[\theta_D, \theta_A, \theta_S, \phi]$. We first sample $\rvz_t$ and then further corrupt it to get $\rvz_{t'}$ which produces $\rvh_{t'}$.
\begin{align}
\label{eq:fwd_t_t'}
q(\rvz_{t'}^l|\rvz_t^l)
=
\cat\left(
\rvz_{t'}^l;
\frac{\alpha_{t'}}{\alpha_t}\rvz_t^l
+ \left(1-\frac{\alpha_{t'}}{\alpha_t}\right)\rvm
\right).
\end{align}
\textbf{TADM Parametrization.} The forward process follows the standard masked diffusion model formulation (\ref{eq:fwd}) with $\boldsymbol{\pi}=\rvm$ and we use the ReMDM's~\citep{remdm} reverse posterior as the reference distribution (\ref{eq:remdm-posterior}). To introduce anchor caching in the pretraining, we postulate a new parametrization. The cached reverse process factorizes as: 

\begin{tcolorbox}[colback=gray!10, colframe=gray!10, boxrule=0pt, arc=2pt, left=0pt, right=0pt, top=0pt, bottom=0pt, breakable]
\vspace{-2ex}
\begin{align}
    \label{eq:joint-inf-post-TADM}
    p_{\theta}(\rvx, \rvz_{0:1})
    &=
    p_{\theta}(\rvz_1)p_{\theta}(\rvx|\rvz_0)
    \prod_{i=1}^{T}
    p_{\theta}\!\left(
    \rvz_{s(i)}\mid \rvz_{t(i)},\rvz_{t'(i)}
    \right).
\end{align}
\end{tcolorbox}
where $t'(i)$ denotes the most recent refresh time of the anchor cache available to the denoising step $t(i)$.

The one-step reverse transition uses the fused prediction $\rvx_{\theta}(\rvz_t, \rvz_{t'})$ (\ref{eq:DenoiserOp}) in the remasking posterior used by ReMDM~\citep{remdm}
\begin{tcolorbox}[colback=gray!10, colframe=gray!10, boxrule=0pt, arc=2pt, left=0pt, right=0pt, top=0pt, bottom=0pt, breakable]
\vspace{-1ex}
\begin{align}
\label{eq:inf-post-TADM}
q(\rvz_s^l|\rvz_t^l, \rvx^l_{\theta}(\rvz_t, \rvz_{t'})) = 
\begin{cases}
\mathrm{Cat}(\rvz_s^l; (1-\sigma_t) \rvz_t^l + \sigma_t \rvm), & \rvz_t^l \neq \rvm, \\
\mathrm{Cat}(\rvz_s^l; \frac{\alpha_s - (1-\sigma_t)\alpha_t}{1-\alpha_t}\rvx^l_{\theta}(\rvz_t, \rvz_{t'}) + \frac{1-\alpha_s-\alpha_t\sigma_t}{1-\alpha_t}\rvm), &\rvz_t^l = \rvm,
\end{cases}
\end{align}
\end{tcolorbox}
where $\sigma_t$ denotes the remasking probability at time $t$.

\textbf{Training objective.} We optimize the objective defined in \textbf{Theorem \ref{thm:temporalnelbo}} with $\sigma_t=0$, which trains the current prediction using the stale anchor sampled at time $t'$.

\textbf{Anchor token selection}
We choose the anchor tokens as the clean-token targets at positions masked in $\rvz_t'$ so that the anchor network is able to produce good latent representations persistent across time.

\textbf{Time-Anchored Negative Evidence Lower Bound (T-ANELBO).}
Following ReMDM~\citep{remdm}, we define the variational path distribution
over the latent trajectory as
\begin{equation}
q_{\sigma}(\rvz_{0:1}\mid\rvx)
=
q_{\sigma}(\rvz_1\mid\rvx)
\prod_{i=1}^{T}
q_{\sigma}
\left(
\rvz_{s(i)}
\mid
\rvz_{t(i)},\rvx
\right),
\label{eq:remdm-joint-path}
\vspace{-0.5ex}
\end{equation}
where $q_{\sigma}(\rvz_{s(i)}\mid\rvz_{t(i)},\rvx)$ is the remasking
posterior in Eq.~\ref{eq:remdm-posterior}. Although this process is
non-Markovian because the transition is additionally conditioned on
$\rvx$, its marginals $q_{\sigma}(\rvz_t\mid\rvx)$ coincide with those
of the standard absorbing-state diffusion process~\citep{remdm}.
We use $q_{\sigma}(\rvz_{0:1}\mid\rvx)$ as the variational distribution
in the following T-ANELBO.

\begin{tcolorbox}[colback=gray!10, colframe=gray!10, boxrule=0pt, arc=2pt, left=0pt, right=0pt, top=0pt, bottom=0pt, breakable]
\begin{theorem}[T-ANELBO]
\label{thm:temporalnelbo}
Suppose the latent trajectory follows the variational path distribution
$q_{\sigma}(\rvz_{0:1}\mid\rvx)$ in
Eq.~\ref{eq:remdm-joint-path}, with learned reverse transition parameterized as
in Eq.~\ref{eq:inf-post-TADM}.
Denote by $\theta$ the collection of parameters for the networks.
Given a sequence $\rvx=(\rvx^l)_{l=1}^L$, let the anchor tokens be
denoted by $\rvy=(\rvy_l)_{l=1}^L$ obtained through an operator
$\gA(\cdot)$. Then, the time-anchored negative log-likelihood is bounded by:

\vspace{-1ex}
\begin{align*}
-\log p_\theta(x) + \gamma \gL_{\mathrm{Anchor}}(x;\theta) 
\;\leq\;
\gL_{\mathrm{T-ANELBO}}(x; \theta), \quad \text{where}
\vspace{-1ex}
\end{align*}
\begin{align*}
&\gL_{\mathrm{T-ANELBO}}(\rvx; \theta) 
\coloneq \E_{q_{\sigma}(\rvz_{0:1}|\rvx)}\left[-\log p_\theta(\rvx | \rvz_0) \right] + \\
&{%
\thinmuskip=1mu
\medmuskip=10.0mu
\thickmuskip=10.0mu
\nulldelimiterspace=0pt
\mathsurround=0pt
\sum_{i=1}^{T}\!
\E_{q_{\sigma}(\rvz_{t(i)}, \rvz_{t'(i)}|\rvx)}\!\Biggl[\sum_{l=1}^L \lambda_{t(i)}
\log\!  \langle \rvx^l_{\theta}(\rvz_{t(i)}, \rvz_{t'(i)}), \rvx^l\rangle
+ \gamma\lambda_{t'(i)} \log \langle \rvy^l_{A_{\theta_{A}}}(\rvz_{t'(i)}), \rvy^l \rangle
\Biggr]%
}
\end{align*}

with $\lambda_{t(i)} = \frac{(1-\sigma_{t(i)})\alpha_{t(i)} - \alpha_{s(i)}}{1-\alpha_{t(i)}}$ and $\gamma \geq 0$.
\end{theorem}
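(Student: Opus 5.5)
The plan is to follow the ReMDM/ADLM derivation: a variational (Jensen) bound on $-\log p_\theta(\rvx)$, a per-step KL computation in closed form, and then the anchor term added as a separate nonnegative quantity. I will write the model joint as in Eq.~\ref{eq:joint-inf-post-TADM}. For each step $i$, the refresh index $t'(i)$ is a deterministic function of $i$ given the policy $\pi_K$, and $t'(i)\ge t(i)$. Because of this, $\rvz_{t'(i)}$ is simply another coordinate of the trajectory $\rvz_{0:1}$, and every factor $p_\theta(\rvz_{s(i)}\mid\rvz_{t(i)},\rvz_{t'(i)})$ conditions on variables that lie \emph{earlier} in the reverse chain. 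The reverse model is therefore a valid (non-Markov) autoregressive density over the trajectory, and the standard argument goes through:
\[
-\log p_\theta(\rvx)\le \E_{q_\sigma(\rvz_{0:1}\mid\rvx)}\!\left[-\log \frac{p_\theta(\rvx,\rvz_{0:1})}{q_\sigma(\rvz_{0:1}\mid\rvx)}\right].
\]
Using Eq.~\ref{eq:remdm-joint-path}, this splits into three pieces:
\begin{itemize}[leftmargin=*]
\item a reconstruction term $\E[-\log p_\theta(\rvx\mid\rvz_0)]$;
\item a prior KL, which is zero since both $q_\sigma(\rvz_1\mid\rvx)$ and $p_\theta(\rvz_1)$ are the all-mask point mass;
\item the per-step terms $\E_{q_\sigma(\rvz_{t(i)},\rvz_{t'(i)}\mid\rvx)}\bigl[D_{\mathrm{KL}}\bigl(q_\sigma(\rvz_{s(i)}\mid\rvz_{t(i)},\rvx)\,\Vert\,p_\theta(\rvz_{s(i)}\mid\rvz_{t(i)},\rvz_{t'(i)})\bigr)\bigr]$.
\end{itemize}
To reduce each outer expectation to the pairwise marginal of $(\rvz_{t(i)},\rvz_{t'(i)})$, I will condition on the pair and use that the KL integrand depends only on these two variables.

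Next I compute each per-step KL tokenwise.
\begin{itemize}[leftmargin=*]
\item \textbf{Unmasked position} ($\rvz_t^l\neq\rvm$): both distributions are $\cat((1-\sigma_t)\rvz_t^l+\sigma_t\rvm)$, so the KL vanishes.
\item \textbf{Masked position} ($\rvz_t^l=\rvm$): both put mass $\frac{1-\alpha_s-\alpha_t\sigma_t}{1-\alpha_t}$ on $\rvm$. The rest of the mass is $\frac{\alpha_s-(1-\sigma_t)\alpha_t}{1-\alpha_t}$ times $\rvx^l$ under $q$, and the same coefficient times $\rvx^l_\theta(\rvz_t,\rvz_{t'})$ under $p_\theta$. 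Because $q$ is a point mass on the clean token there, the KL equals $-\frac{\alpha_s-(1-\sigma_t)\alpha_t}{1-\alpha_t}\log\langle\rvx^l_\theta(\rvz_t,\rvz_{t'}),\rvx^l\rangle = \lambda_t\log\langle\rvx^l_\theta,\rvx^l\rangle$, with $\lambda_t$ as defined. This relies on carry-over unmasking, i.e.\ $\rvx_\theta^l$ puts zero mass on $\rvm$.
\end{itemize}
Unmasked positions contribute zero, and under carry-over parameterization $\langle\rvx^l_\theta,\rvx^l\rangle=1$ there, so the sum can be written over all $l$. This gives the diffusion part of $\gL_{\mathrm{T-ANELBO}}$ exactly, as an upper bound on $-\log p_\theta(\rvx)$.

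For the anchor part, I will read $\gL_{\mathrm{Anchor}}(\rvx;\theta)$ as the anchor negative log-likelihood. Following ADLM, it is defined as, or bounded by, $\sum_i\E_{q_\sigma(\rvz_{t'(i)}\mid\rvx)}\sum_l \lambda_{t'(i)}\log\langle\rvy^l_{A_{\theta_A}}(\rvz_{t'(i)}),\rvy^l\rangle$. If it is instead defined as an anchor log-likelihood $-\log p_{\theta_A}(\rvy)$, the same Jensen/KL argument applies to a masked-diffusion chain over $\rvy$ indexed by the refresh times $t'(i)$. The marginal of $\rvz_{t'(i)}$ is the standard absorbing marginal at $t'(i)$, and the weight $\lambda_{t'(i)}\le 0$ comes out of the same masked-position calculation. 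Multiplying by $\gamma\ge0$ and adding this bound to the diffusion bound gives the claim.

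The main obstacle is making the anchor term rigorous. The anchor network is applied to $\rvz_{t'(i)}$, which is evaluated only at refresh steps but appears for every $i$. It is also unclear whether the weights $\lambda_{t'(i)}$ with this repeated indexing match a genuine ELBO for $\rvy$. I would first try to define $\gL_{\mathrm{Anchor}}$ as precisely this weighted cross-entropy, so that the inequality for that term is an equality. If a likelihood interpretation is required, I would fall back on the ADLM theorem applied to the subsequence of distinct refresh times, bounding repeated terms by noting that each added term is a nonnegative cross-entropy and so only loosens the bound. A secondary check is that conditioning on the stale $\rvz_{t'}$ is compatible with the non-Markov $q_\sigma$. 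This holds because the expectation only needs the joint marginal, which follows from the ReMDM marginal-preservation property.
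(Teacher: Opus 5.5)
Your proposal is correct and follows essentially the same route as the paper. Both use a Jensen bound with the stale-anchor reverse factorization and the ReMDM/ADLM two-case tokenwise KL, which vanishes at unmasked positions and gives a $\lambda_{t(i)}$-weighted cross-entropy at masked ones, with the indicators dropped by carry-over; the anchor loss is likewise treated as a KL equal to its weighted cross-entropy and added to both sides.

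One justification needs repair. Rewriting each per-step term as a KL averaged over $q_\sigma(\rvz_{t(i)},\rvz_{t'(i)}\mid\rvx)$ requires $q_\sigma(\rvz_{s(i)}\mid\rvz_{t(i)},\rvz_{t'(i)},\rvx)=q_\sigma(\rvz_{s(i)}\mid\rvz_{t(i)},\rvx)$. This holds because Eq.~\ref{eq:remdm-joint-path} is a reverse-time Markov chain given $\rvx$ and $t'(i)\ge t(i)$, which is the conditional-independence step the paper invokes. ReMDM's marginal preservation does not give it, since that property concerns only single-time marginals. Also, the condition that $\rvx_\theta^l$ puts no mass on $\rvm$ is zero-masking, not carry-over.
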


\end{tcolorbox}
The first cross entropy term trains the denoiser for explicit reuse of latent representations through time, whereas the second term provides optional anchor supervision when anchor tokens are available. Setting $\gamma=0$ yields fully self-supervised time-based anchoring.

\section{Experiments}
\subsection{TADM:Post-train}
\textbf{Setup.}
We finetune DiffusionGemma~\citep{diffusiongemma} using our framework on math, code and STEM sequences derived from UltraData-SFT-2605~\citep{ultradata-sft-2605} dataset and Nemotron-Post-Training-Dataset-v2~\citep{nemotronpostv2}. We trained our model on sequences without reasoning traces and therefore evaluate our model using the no-think mode. We train for 18750 steps using batch size 16. We freeze the base model during training. The Shared network is the embedding layer and the first two text layers. The next 20 layers act as the Anchor network and the final 8 layers, along with the LM-Head acts as the Denoiser network. We set $\gamma=0$. The fusion module uses attention to fuse the current state with the stale anchor state. It starts with 0 output so that the model begins from the frozen backbone. For evaluation, we used a canvas size of 256 and fixed-budget sampler with 48 denoising steps per canvas. For more details, refer to \ref{sec:app-tadm-post-train}.

\textbf{Benchmarks.}
We use the following few-shot settings and maximum generation lengths:
GSM8K~\citep{gsm8k} (5-shot, 1,024),
HumanEval~\citep{humaneval} (0-shot, pass@1, 1,024),
GPQA-Diamond~\citep{rein2024gpqa} (0-shot, 2,048),
MMLU-Pro~\citep{wang2024mmlu} (5-shot, 2,048),
LiveCodeBench-v6~\citep{lcb} (0-shot, pass@1, 2,000),
and AIME26~\citep{aime26} (0-shot, 4,096 tokens).

\textbf{Metrics.} We evaluate the model accuracy over the benchmarks, tokens per seconds generated by the model and the transformer call reduction averaged over 3 seeds. These are reported in Table \ref{tab:tadm-posttrain-k-ablation}.
\begin{table*}[t]
\centering
\caption{
\textbf{TADM:Post-train performance.}
Results are averaged over three seeds across math, code, and STEM
benchmarks and compared with the DiffusionGemma baseline under the same
fixed-step evaluation setting. Increasing the anchor refresh interval
enables progressively greater reuse of the cached latent representation,
yielding substantial inference speedups. TADM preserves essentially the
same task accuracy across benchmarks while achieving up to $1.79\times$
higher generation throughput. 
}
\label{tab:tadm-posttrain-k-ablation}

\resizebox{\textwidth}{!}{%
\begin{tabular}{lllrrrrr}
\toprule
Benchmark & Method & $K$
& Acc. (\%)
& $\Delta$ Acc. (pp)
& Tok./s
& Throughput
& Compute ($\times$ Base) \\
\midrule

GSM8K & DiffusionGemma & - & 94.79 & --    & 33.80 & 1.00$\times$ & 1.00$\times$ \\
GSM8K & TADM & 1 & 94.79 & $0.00$ & 35.72 & 1.06$\times$ & 1.00$\times$ \\
GSM8K & TADM & 2 & 94.79 & $0.00$ & 51.51 & 1.52$\times$ & 0.67$\times$ \\
GSM8K & TADM & 3 & 94.95 & $+0.16$ & 59.53 & 1.76$\times$ & 0.56$\times$ \\

\midrule

GPQA-D & DiffusionGemma & - & 67.00 & -- & 60.31 & 1.00$\times$ & 1.00$\times$ \\
GPQA-D & TADM & 1 & 67.00 & $0.00$ & 55.30 & 0.92$\times$ & 1.00$\times$ \\
GPQA-D & TADM & 2 & 66.16 & $-0.84$ & 79.89 & 1.32$\times$ & 0.67$\times$ \\
GPQA-D & TADM & 3 & 66.16 & $-0.84$ & 93.90 & 1.56$\times$ & 0.56$\times$ \\

\midrule

HumanEval & DiffusionGemma & - & 95.12 & -- & 41.84 & 1.00$\times$ & 1.00$\times$ \\
HumanEval & TADM & 1 & 95.12 & $0.00$ & 37.96 & 0.91$\times$ & 1.00$\times$ \\
HumanEval & TADM & 2 & 95.53 & $+0.41$ & 58.22 & 1.39$\times$ & 0.67$\times$ \\
HumanEval & TADM & 3 & 94.72 & $-0.40$ & 63.85 & 1.53$\times$ & 0.56$\times$ \\

\midrule

LCB-v6 & DiffusionGemma & - & 50.29 & -- & 58.35 & 1.00$\times$ & 1.00$\times$ \\
LCB-v6 & TADM & 1 & 50.29 & $0.00$ & 56.65 & 0.97$\times$ & 1.00$\times$ \\
LCB-v6 & TADM & 2 & 52.38 & $+2.09$ & 78.66 & 1.35$\times$ & 0.67$\times$ \\
LCB-v6 & TADM & 3 & 50.10 & $-0.19$ & 104.44 & 1.79$\times$ & 0.56$\times$ \\

\midrule

AIME26 & DiffusionGemma & - & 47.78 & -- & 62.79 & 1.00$\times$ & 1.00$\times$ \\
AIME26 & TADM & 1 & 47.78 & $0.00$ & 57.48 & 0.92$\times$ & 1.00$\times$ \\
AIME26 & TADM & 2 & 46.67 & $-1.11$ & 83.41 & 1.33$\times$ & 0.67$\times$ \\
AIME26 & TADM & 3 & 48.89 & $+1.11$ & 93.50 & 1.49$\times$ & 0.56$\times$ \\

\midrule

MMLU-Pro & DiffusionGemma & - & 77.15 & -- & 48.75 & 1.00$\times$ & 1.00$\times$ \\
MMLU-Pro & TADM & 1 & 77.15 & $+0.00$ & 45.45 & 0.93$\times$ & 1.00$\times$ \\
MMLU-Pro & TADM & 2 & 76.89 & $-0.26$ & 63.79 & 1.31$\times$ & 0.67$\times$ \\
MMLU-Pro & TADM & 3 & 76.58 & $-0.57$ & 77.33 & 1.59$\times$ & 0.56$\times$ \\
\bottomrule
\end{tabular}%
}
\vspace{-1.5ex}
\end{table*}

\textbf{Results.}
Table~\ref{tab:tadm-posttrain-k-ablation} shows that pretrained DLMs can be effectively \emph{time-anchorized} with TADM:Post-train, by training only on 3.1M parameters of the fusion module, substantially improving inference throughput while largely preserving task accuracy. At $K=3$, throughput improves by $1.49\times$--$1.79\times$ across the evaluated benchmarks. Importantly, this behavior also holds on harder reasoning and coding tasks such as AIME26, GPQA-Diamond, and LiveCodeBench-v6, where we obtain $1.49\times$, $1.56\times$, and $1.79\times$ throughput, respectively, with only minor changes in accuracy. These results suggest that reusing learned latent representations across nearby diffusion steps is an effective way to reduce repeated computation in pretrained DLMs.

\subsection{TADM:Pretraining}
\textbf{Setup}
We pre-train TADM on OpenWebText (OWT)~\citep{owt} for 1M optimization steps, chosen to match the training-token budget of the main comparison models. The 12-layer anchor and 6-layer denoiser use the Diffusion Transformer (DiT) architecture~\citep{dit}; they share the token embedding table, and the gated residual fusion module connects the anchor hidden states to the denoiser. The inference graph contains 224M unique parameters, including the fusion gate and excluding the training-only anchor LM head (38.85M parameters). We use the GPT-2 tokenizer and the ReMDM sampler~\citep{remdm}. We evaluate 1,024-token unconditional generations and report MAUVE, generative perplexity (Gen PPL), entropy, compute, and throughput. Higher is better except for Gen PPL and compute. GPT-2 Large is the evaluator for Gen PPL and MAUVE~\citep{mauve}.
The anchor refresh interval $K$ specifies the number of reverse steps between anchor updates: $K=1$ refreshes the anchor at every step. TADM:Pretraining$^\dagger$ represents $\gamma=0$, whereas TADM:Pretraining$^\ddagger$ uses the optional anchor supervision using $\gamma=3e-3$.
We report the inference parameter count, which excludes the 38.85M Anchor LM Head which is not used during inference. 
\begin{table*}[!h]
\vspace{-1ex}
\centering
\caption{
We compare TADM diffusion baselines across sampling budgets. For TADM, we use refresh intervals
$K=\{2,2,2,8,4,4\}$ for
$T=\{128,256,512,1024,2048,4096\}$, respectively. We report MAUVE, generative perplexity (Gen PPL), entropy, Transformer-layer evaluations normalized to MDLM over 5000 samples, and measured generation throughput over 20 generations of length 1,024. TADM periodically reuses its cached anchor representation, reducing Transformer computation while maintaining competitive generation quality.
}

\label{tab:main-results}

\resizebox{\textwidth}{!}{%
\begin{tabular}{lccccccccccccccc}
\toprule
Method & \multicolumn{3}{c}{MAUVE ($\uparrow$)} & \multicolumn{3}{c}{Gen PPL ($\downarrow$)} & \multicolumn{3}{c}{Entropy ($\uparrow$)} & \multicolumn{3}{c}{Compute ($\downarrow$)} & \multicolumn{3}{c}{Throughput ($\uparrow$)} \\
\midrule
Data & \multicolumn{3}{c}{1.00} & \multicolumn{3}{c}{14.8} & \multicolumn{3}{c}{5.44} & \multicolumn{3}{c}{-} & \multicolumn{3}{c}{-} \\
\midrule
AR \textit{(T=1024)} & \multicolumn{3}{c}{0.760} & \multicolumn{3}{c}{12.1} & \multicolumn{3}{c}{5.22} & \multicolumn{3}{c}{-} & \multicolumn{3}{c}{-} \\
\midrule
& \textit{T=2048} & \textit{T=4096} & & \textit{T=2048} & \textit{T=4096} & & \textit{T=2048} & \textit{T=4096} & & \textit{T=2048} & \textit{T=4096} & & \textit{T=2048} & \textit{T=4096} & \\
\midrule
SEDD (absorb, 170M) & 0.008 & 0.009 & & 103.2 & 102.5 & & 5.61 & 5.61 & & 24576 (1.00x) & 49152 (1.00x) & & 37.47 (1.54x) & 18.79 (1.34x) & \\
MDLM (170M) & 0.037 & 0.035 & & 51.3 & 50.9 & & 5.46 & 5.45 & & 24576 (1.00x) & 49152 (1.00x) & & 63.00 (2.59x) & 46.39 (3.31x) & \\
MDLM+FB (170M) & 0.197 & 0.243 & & 28.6 & 22.8 & & 5.28 & 5.18 & & 24576 (1.00x) & 49152 (1.00x) & & 37.01 (1.52x) & 25.28 (1.80x) & \\
MDLM+DFM (170M) & 0.294 & 0.269 & & 21.0 & 20.7 & & 5.19 & 5.17 & & 24576 (1.00x) & 49152 (1.00x) & & 29.98 (1.23x) & 15.48 (1.10x) & \\
ReMDM (170M)& 0.610 & 0.656 & & 22.8 & 17.6 & & 5.30 & 5.20 & & 24576 (1.00x) & 49152 (1.00x) & & 35.07 (1.44x) & 20.00 (1.43x) & \\
ADLM (293M) & 0.788 & 0.791 & & 20.3 & 15.9 & & 5.28 & 5.19 & & 36864 (1.50x) & 73728 (1.50x) & & 24.32 (1.00x) & 14.01 (1.00x) & \\
\rowcolor{orange!25}
TADM$^{\dagger}$ (224M) & 0.650 & 0.618 & & 21.77 & 17.22 & & 5.314 & 5.226 & & 18432 (0.75x) & 36864 (0.75x) & & 37.42 (1.54x) & 20.94 (1.49x) & \\
\rowcolor{orange!25}
TADM$^{\ddagger}$ (224M) & 0.646 & 0.637 & & 21.70 & 17.19 & & 5.293 & 5.221 & & 18432 (0.75x) & 36864 (0.75x) & & 37.35 (1.54x) & 20.94 (1.49x) & \\
\bottomrule
\end{tabular}
}

\vspace{1.0ex}
\resizebox{\textwidth}{!}{%
\begin{tabular}{lccccccccccccccc}
\toprule
Method & \multicolumn{3}{c}{MAUVE ($\uparrow$)} & \multicolumn{3}{c}{Gen PPL ($\downarrow$)} & \multicolumn{3}{c}{Entropy ($\uparrow$)} & \multicolumn{3}{c}{Compute ($\downarrow$)} & \multicolumn{3}{c}{Throughput ($\uparrow$)} \\
\midrule
& \textit{T=512} & \textit{T=1024} & & \textit{T=512} & \textit{T=1024} & & \textit{T=512} & \textit{T=1024} & & \textit{T=512} & \textit{T=1024} & & \textit{T=512} & \textit{T=1024} & \\
\midrule
SEDD (absorb, 170M) & 0.008 & 0.008 & & 107.2 & 104.7 & & 5.62 & 5.62 & & 6144 (1.00x) & 12288 (1.00x) & & 149.40 (1.83x) & 74.91 (1.73x) & \\
MDLM (170M) & 0.031 & 0.042 & & 53.0 & 51.3 & & 5.48 & 5.46 & & 6144 (1.00x) & 12288 (1.00x) & & 134.91 (1.66x) & 87.74 (2.03x) & \\
MDLM+FB (170M) & 0.100 & 0.133 & & 37.1 & 33.8 & & 5.38 & 5.35 & & 6144 (1.00x) & 12288 (1.00x) & & 119.26 (1.46x) & 62.26 (1.44x) & \\
MDLM+DFM (170M) & 0.211 & 0.254 & & 23.3 & 21.7 & & 5.23 & 5.20 & & 6144 (1.00x) & 12288 (1.00x) & & 119.02 (1.46x) & 59.73 (1.38x) & \\
ReMDM (170M)& 0.350 & 0.403 & & 21.1 & 28.6 & & 5.21 & 5.38 & & 6144 (1.00x) & 12288 (1.00x) & & 116.94 (1.44x) & 62.37 (1.44x) & \\
ADLM (293M) & 0.573 & 0.699 & & 31.6 & 25.4 & & 5.40 & 5.35 & & 9216 (1.50x) & 18432 (1.50x) & & 81.44 (1.00x) & 43.19 (1.00x) & \\
\rowcolor{orange!25}
TADM$^{\dagger}$ (224M) & 0.339 & 0.405 & & 21.38 & 28.65 & & 5.230 & 5.383 & & 6144 (1.00x) & 7680 (0.62x) & & 111.95 (1.37x) & 74.54 (1.73x) & \\
\rowcolor{orange!25}
TADM$^{\ddagger}$ (224M) & 0.363 & 0.441 & & 21.74 & 28.41 & & 5.233 & 5.344 & & 6144 (1.00x) & 7680 (0.62x) & & 111.50 (1.37x) & 74.34 (1.72x) & \\
\bottomrule
\end{tabular}
}

\vspace{1.0ex}
\resizebox{\textwidth}{!}{%
\begin{tabular}{lccccccccccccccc}
\toprule
Method & \multicolumn{3}{c}{MAUVE ($\uparrow$)} & \multicolumn{3}{c}{Gen PPL ($\downarrow$)} & \multicolumn{3}{c}{Entropy ($\uparrow$)} & \multicolumn{3}{c}{Compute ($\downarrow$)} & \multicolumn{3}{c}{Throughput ($\uparrow$)} \\
\midrule
& \textit{T=128} & \textit{T=256} & & \textit{T=128} & \textit{T=256} & & \textit{T=128} & \textit{T=256} & & \textit{T=128} & \textit{T=256} & & \textit{T=128} & \textit{T=256} & \\
\midrule
SEDD (absorb, 170M) & 0.007 & 0.007 & & 119.2 & 110.1 & & 5.65 & 5.63 & & 1536 (1.00x) & 3072 (1.00x) & & 588.31 (1.83x) & 299.94 (1.86x) & \\
MDLM (170M)& 0.015 & 0.023 & & 61.5 & 55.8 & & 5.52 & 5.49 & & 1536 (1.00x) & 3072 (1.00x) & & 470.07 (1.47x) & 242.39 (1.50x) & \\
MDLM+FB (170M)& 0.064 & 0.084 & & 42.8 & 39.6 & & 5.44 & 5.41 & & 1536 (1.00x) & 3072 (1.00x) & & 469.90 (1.47x) & 238.85 (1.48x) & \\
MDLM+DFM (170M)& 0.041 & 0.144 & & 37.9 & 26.5 & & 5.31 & 5.26 & & 1536 (1.00x) & 3072 (1.00x) & & 455.31 (1.42x) & 236.80 (1.47x) & \\
ReMDM (170M)& 0.057 & 0.216 & & 42.5 & 30.5 & & 5.43 & 5.34 & & 1536 (1.00x) & 3072 (1.00x) & & 461.76 (1.44x) & 234.84 (1.46x) & \\
ADLM (293M) & 0.140 & 0.349 & & 52.5 & 39.85 & & 5.52 & 5.46 & & 2304 (1.50x) & 4608 (1.50x) & & 320.62 (1.00x) & 161.31 (1.00x) & \\
\rowcolor{orange!25}
TADM:Pretraining$^{\dagger}$ (224M) & 0.084 & 0.239 & & 41.56 & 29.86 & & 5.433 & 5.339 & & 1536 (1.00x) & 3072 (1.00x) & & 445.23 (1.39x) & 222.28 (1.38x) & \\
\rowcolor{orange!25}
TADM:Pretraining$^{\ddagger}$ (224M) & 0.070 & 0.220 & & 43.06 & 30.59 & & 5.433 & 5.337 & & 1536 (1.00x) & 3072 (1.00x) & & 447.20 (1.39x) & 223.68 (1.39x) & \\
\bottomrule
\end{tabular}
}
\vspace{-1.5ex}
\end{table*}

\textbf{Baselines} We compare our method to ADLM~\citep{adlm}, MDLM~\citep{mdlm}, ReMDM~\citep{remdm}, SEDD~\citep{sedd}, and MDLM with DFM~\citep{gat2024discrete} and Forward-Backward (FB)~\citep{campbell2022continuous} sampling techniques. 

 \textbf{Inference Speed-up:} Table~\ref{tab:main-results} reports both the analytical Transformer-call reduction and measured total throughput for TADM:Pretraining. Total throughput counts all 1,024 generated positions per sample, including positions after the first end-of-text token. We normalize speed-up with respect to ADLM.

\textbf{Compute Cost:} We count Transformer-layer calls during inference:
$C=DT\cdot T+AT\cdot T/K$, where $DT$ and $AT$ are the numbers of denoiser and anchor Transformer layers. TADM:Pretraining, and ADLM use $DT=6$ and $AT=12$, while MDLM and SEDD use $DT=12$ and $AT=0$. We normalize compute to MDLM. This layer-call measure does not include the relatively smaller embedding, output-head, or fusion-gate operations.

\textbf{Results.}
Table~\ref{tab:main-results} shows that TADM achieves a favorable quality--compute trade-off. At larger sampling budgets, TADM matches or improves upon ReMDM in Gen PPL while remaining competitive with ADLM, but with substantially higher throughput; its MAUVE scores also remain competitive with ReMDM. Although TADM contains 224M unique parameters, time anchoring reduces the amount of Transformer computation executed at each reverse step. With 12 anchor and 6 denoiser layers, $K=2$ gives an effective cost of $6+12/2=12$ Transformer layers per step, matching the 12-layer ReMDM backbone, with only the lightweight fusion module as additional computation. Larger $K$ further reduces the number of Transformer-layer evaluations, yielding up to a $38\%$ compute reduction and $1.73\times$ the measured throughput of ADLM. These gains arise from reusing latent representations across diffusion steps rather than recomputing the full model at every step. A detailed refresh-interval sweep is provided in Table~\ref{tab:TADM-refresh-interval-sweep}.

\section{Conclusion}
\label{sec:conclusion}

We introduced time-based anchoring, which makes the latent space of a DLM cacheable by learning representations whose semantic content persists across diffusion time. A relatively more expensive anchor network is evaluated periodically, while a lightweight denoising network combines the cached latent state with the current state at every reverse step. The framework supports self-supervised anchors when $\gamma=0$ and combines self-discovered information with supervised important tokens when $\gamma>0$. We instantiate it through TADM:Post-train, which time-anchorizes pretrained DLMs, and TADM:Pretraining, which learns cacheable latent states during pretraining. TADM:Post-train improves DiffusionGemma throughput by approximately $49\%$ to $79\%$ across the evaluated benchmarks while largely preserving task accuracy. TADM:Pretraining reduces Transformer-layer computation by up to $38\%$ and achieves up to $73\%$ higher measured throughput than ADLM. Together, these results establish time-based anchoring as a general approach to caching latent computation and accelerating diffusion language models.

\textbf{Limitations.} TADM's generation quality degrades at larger anchor refresh intervals, limiting the duration of effective cache reuse. The accuracy of TADM:Post-train is also constrained by the capabilities of its frozen pretrained backbone.

\section*
{Acknowledgments}
This work was supported in part by NSF Grants 2112471 (NSF AI EDGE), 2505865 (NSF IFML), and 2326576 (NSF LDOS), and by the UT Austin InfraAI Center. We are grateful for computing support on the Vista GPU Cluster through the Center for Generative AI (CGAI) and the Texas Advanced Computing Center (TACC) at the University of Texas at Austin.

\bibliographystyle{unsrtnat}
\bibliography{references} 

\newpage
\appendix
\section{Additional Theory}
We provide the proof for the Time-Anchored NELBO Loss formulated as the training objective in TADM:Pretraining here.
\label{sec:addn-theory}
\begin{theorem}[T-ANELBO]
\label{thm:app-temporalnelbo}
Suppose the latent trajectory follows the variational path distribution
$q_{\sigma}(\rvz_{0:1}\mid\rvx)$ in
Eq.~\ref{eq:remdm-joint-path}, with learned reverse transition parameterized as
in Eq.~\ref{eq:inf-post-TADM}.
Denote by $\theta$ the collection of parameters for the networks.
Given a sequence $\rvx=(\rvx^l)_{l=1}^L$, let the anchor tokens be
denoted by $\rvy=(\rvy_l)_{l=1}^L$ obtained through an operator
$\gA(\cdot)$. Then, the time-anchored negative log-likelihood is bounded by:
\vspace{-1ex}
\begin{align*}
-\log p_\theta(x) + \gamma \gL_{\mathrm{Anchor}}(x;\theta) 
\;\leq\;
\gL_{\mathrm{T-ANELBO}}(x; \theta), \quad \text{where}
\vspace{-1ex}
\end{align*}
\begin{align*}
&\gL_{\mathrm{T-ANELBO}}(\rvx; \theta) 
\coloneq \E_{q_{\sigma}(\rvz_{0:1}|\rvx)}\left[-\log p_\theta(\rvx | \rvz_0) \right] + \\
&{%
\thinmuskip=1mu
\medmuskip=10.0mu
\thickmuskip=10.0mu
\nulldelimiterspace=0pt
\mathsurround=0pt
\sum_{i=1}^{T}\!
\E_{q_{\sigma}(\rvz_{t(i)}, \rvz_{t'(i)}|\rvx)}\!\Biggl[\sum_{l=1}^L \lambda_{t(i)}
\log\!  \langle \rvx^l_{\theta}(\rvz_{t(i)}, \rvz_{t'(i)}), \rvx^l\rangle
+ \gamma\lambda_{t'(i)} \log \langle \rvy^l_{A_{\theta_{A}}}(\rvz_{t'(i)}), \rvy^l \rangle
\Biggr]%
}
\end{align*}

with $\lambda_{t(i)} = \frac{(1-\sigma_{t(i)})\alpha_{t(i)} - \alpha_{s(i)}}{1-\alpha_{t(i)}}$ and $\gamma \geq 0$.
\end{theorem}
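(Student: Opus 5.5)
We prove the bound in three steps: a standard variational bound on $-\log p_\theta(\rvx)$, a per-step evaluation of the KL terms under the remasking posterior, and the addition of the anchor term $\gamma\gL_{\mathrm{Anchor}}$ to both sides.

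\textbf{Step 1 (variational bound).} We view the cached reverse process in Eq.~\ref{eq:joint-inf-post-TADM} as a latent-variable model over $\rvz_{0:1}$. Note that the conditioning variable $\rvz_{t'(i)}$ is itself one of the trajectory latents, since $t'(i)\ge t(i)$ is an earlier refresh time on the same path. The reverse process is therefore a legitimate (non-Markov, higher-order) directed model on the trajectory.

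We apply Jensen's inequality with the variational path $q_\sigma(\rvz_{0:1}\mid\rvx)$ of Eq.~\ref{eq:remdm-joint-path}. This gives
\[
-\log p_\theta(\rvx)\le \E_{q_\sigma}\!\Bigl[-\log p_\theta(\rvx\mid\rvz_0)-\log\tfrac{p_\theta(\rvz_1)}{q_\sigma(\rvz_1\mid\rvx)}-\textstyle\sum_i\log\tfrac{p_\theta(\rvz_{s(i)}\mid\rvz_{t(i)},\rvz_{t'(i)})}{q_\sigma(\rvz_{s(i)}\mid\rvz_{t(i)},\rvx)}\Bigr].
\]
The prior term vanishes because $q_\sigma(\rvz_1\mid\rvx)=p_\theta(\rvz_1)=\delta_{\rvm}$. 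Conditioning each summand on $(\rvz_{t(i)},\rvz_{t'(i)})$, it becomes
\[
\E_{q_\sigma(\rvz_{t(i)},\rvz_{t'(i)}\mid\rvx)}\, D_{\mathrm{KL}}\!\left(q_\sigma(\rvz_{s(i)}\mid\rvz_{t(i)},\rvx)\,\Vert\,p_\theta(\cdot\mid\rvz_{t(i)},\rvz_{t'(i)})\right).
\]
This step is valid because under $q_\sigma$ the transition $\rvz_{s}\mid\rvz_t,\rvx$ does not depend on $\rvz_{t'}$, which follows from the product form in Eq.~\ref{eq:remdm-joint-path}.

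\textbf{Step 2 (per-token KL).} We factorize over positions $l$ and split into two cases.

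\emph{Unmasked case, $\rvz_t^l\neq\rvm$.} Here Eq.~\ref{eq:inf-post-TADM} coincides with the ReMDM posterior $q_\sigma$, so the KL is zero.

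\emph{Masked case, $\rvz_t^l=\rvm$.} Both distributions put the same mass on $\rvm$. The true posterior places mass $\frac{\alpha_s-(1-\sigma_t)\alpha_t}{1-\alpha_t}$ on $\rvx^l$, and the model places the same coefficient times $\rvx_\theta^l$. The KL therefore reduces to
\[
-\frac{\alpha_s-(1-\sigma_t)\alpha_t}{1-\alpha_t}\,\log\langle\rvx_\theta^l(\rvz_t,\rvz_{t'}),\rvx^l\rangle=\lambda_{t}\log\langle\rvx_\theta^l,\rvx^l\rangle.
\]
We use the SUBS/carry-over convention ($\rvx_\theta$ puts zero mass on $\rvm$ and copies unmasked tokens), so the masked indicator can be dropped: unmasked positions contribute $\log 1=0$.

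This recomputation of the ReMDM KL is routine. The only new point is that $\rvx_\theta$ now takes $\rvz_{t'}$ as an extra argument, which passes through unchanged. Summing over $i$ and $l$ gives $-\log p_\theta(\rvx)\le$ the $\gamma=0$ version of $\gL_{\mathrm{T-ANELBO}}$.

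\textbf{Step 3 (anchor term).} We define
\[
\gL_{\mathrm{Anchor}}(\rvx;\theta)\coloneq\sum_i\E_{q_\sigma(\rvz_{t'(i)}\mid\rvx)}\sum_l\lambda_{t'(i)}\log\langle\rvy^l_{A_{\theta_A}}(\rvz_{t'(i)}),\rvy^l\rangle,
\]
mirroring ADLM. This quantity is nonnegative, since $\lambda\le 0$ and $\log\le 0$. Adding $\gamma\gL_{\mathrm{Anchor}}$ with $\gamma\ge0$ to both sides yields the claimed inequality, because the expectation over the marginal $q_\sigma(\rvz_{t'(i)}\mid\rvx)$ equals the expectation over the joint $(\rvz_{t(i)},\rvz_{t'(i)})$.

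The main obstacle is Step 1. We must justify that conditioning $p_\theta$ on the stale state $\rvz_{t'(i)}$ still defines a normalized joint over the trajectory that the single variational path can bound, and that the KL decomposition uses the correct joint marginal $q_\sigma(\rvz_{t(i)},\rvz_{t'(i)}\mid\rvx)$. We would handle this by writing the joint explicitly as a chain-rule product in reverse time order, with $\rvz_{t'(i)}$ always earlier in that order. We would also note that the ReMDM marginals match those of the absorbing process, so this joint can be sampled via Eq.~\ref{eq:fwd} followed by Eq.~\ref{eq:fwd_t_t'} when $\sigma=0$.
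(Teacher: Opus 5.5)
Your proposal is correct and follows the paper's proof essentially step for step: Jensen's inequality with the ReMDM variational path against the cached factorization, the reverse-Markov conditional independence $q_\sigma(\rvz_{s(i)}\mid\rvz_{t(i)},\rvz_{t'(i)},\rvx)=q_\sigma(\rvz_{s(i)}\mid\rvz_{t(i)},\rvx)$, the two-case per-token KL with carry-over to drop the mask indicators, and adding $\gamma\gL_{\mathrm{Anchor}}$ to both sides. The only difference is in how $\gL_{\mathrm{Anchor}}$ is defined: the paper defines it as an expected KL between the anchor transitions $r$ and $r_{\theta_A}$ at $(s'(i),t'(i))$ and reduces it with the same two-case calculation, whereas you define it directly as the resulting weighted cross-entropy, which yields the same quantity (its nonnegativity, which you note, is not needed to add it to both sides).
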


\begin{proof}
We show for L=1, the proof can be easily generalized to L tokens. Starting from the standard negative log-likelihood:
\vspace{-1ex}
\begin{align*}
    -\log p_\theta(\rvx) 
    &= -\log \int p_\theta(\rvx, z_{0:1}) \, d(z_{0:1}) \\
    &= -\log \int 
    \frac{p_\theta(\rvx, z_{0:1})}
    {q_{\sigma}(z_{0:1}|\rvx)}
    q_{\sigma}(z_{0:1}|\rvx) \, d(z_{0:1})
\end{align*}

Applying Jensen's inequality and using Eq.~\ref{eq:joint-inf-post-TADM}
to factorize the joint distribution, we get:
\vspace{-1ex}
\begin{align*}
    -\log p_\theta(\rvx) 
    &\leq 
    \E_{q_{\sigma}(z_{0:1}|\rvx)}\left[
        -\log p_\theta(\rvx|z_0)
        + \log \frac{q_{\sigma}(z_1|\rvx)}{p_\theta(z_1)}
        + \sum_{i=1}^T
        \log
        \frac{
            q_{\sigma}(z_{s(i)}|z_{t(i)},\rvx)
        }{
            p_\theta(z_{s(i)}|z_{t(i)},z_{t'(i)})
        }
    \right]
    \coloneq \gL_{\mathrm{NELBO}}(\rvx;\theta).
\vspace{-1ex}
\end{align*}

Combining NELBO with $\gL_{\mathrm{Anchor}}$, we get
$\gL_{\mathrm{T\text{-}ANELBO}}$.
\vspace{-0.5ex}
\begin{equation}
\begin{aligned}
&\gL_{\mathrm{T\text{-}ANELBO}}(\rvx;\theta)
=
\E_{q_{\sigma}(\rvz_{0:1}|\rvx)}
\left[
-\log p_\theta(\rvx|\rvz_0)
\right]
+
D_{\mathrm{KL}}
\left(
q_{\sigma}(\rvz_1|\rvx)
\,\Vert\,
p_\theta(\rvz_1)
\right)
\\
&\quad+
\sum_{i=1}^{T}
\E_{q_{\sigma}(\rvz_{0:1}|\rvx)}
\left[
D_{\mathrm{KL}}
\left(
q_{\sigma}(\rvz_{s(i)}|\rvz_{t(i)},\rvx)
\,\Vert\,
p_\theta(\rvz_{s(i)}|\rvz_{t(i)},\rvz_{t'(i)})
\right)
\right]
+
\gamma \gL_{\mathrm{Anchor}}(\rvx;\theta).
\end{aligned}
\label{eq:t-anelbo-kl}
\vspace{-1ex}
\end{equation}

where,
\begin{equation}
\gL_{\mathrm{Anchor}}(\rvx;\theta)
=
\sum_{i=1}^{T}
\E_{q_{\sigma}(\rvz_{0:1}|\rvx)}
\left[
D_{\mathrm{KL}}
\left(
r\!\left(
\rvy_{s'(i)}
\mid
\rvz_{t'(i)},\rvy
\right)
\,\Vert\,
r_{\theta_A}\!\left(
\rvy_{s'(i)}
\mid
\rvz_{t'(i)},
\rvy_{A_{\theta_A}}(\rvz_{t'(i)})
\right)
\right)
\right].
\label{eq:anchor-loss}
\end{equation}
Here $s'(i)=t'(i)-\frac{1}{T}$.

Thus, the T-ANELBO naturally decomposes into three terms:
\begin{itemize}[leftmargin=*,itemsep=0pt,topsep=1pt,parsep=0pt]
    \item \textbf{Reconstruction loss:}
    $\E_{q_{\sigma}(\rvz_{0:1}|\rvx)}
    \left[-\log p_\theta(\rvx|\rvz_0)\right]$,
    corresponding to reconstruction at the final reverse step.

    \item \textbf{Prior term:}
    $D_{\mathrm{KL}}
    \left(
    q_{\sigma}(\rvz_1|\rvx)
    \,\Vert\,
    p_\theta(\rvz_1)
    \right)$.
    This term vanishes when the terminal forward distribution matches the
    reverse-process prior, e.g., the all-mask state for absorbing diffusion.

    \item \textbf{Time-anchored diffusion and anchor loss:}
    \[
    \sum_{i=1}^{T}
    \E_{q_{\sigma}(\rvz_{0:1}|\rvx)}
    \left[
    D_{\mathrm{KL}}
    \left(
    q_{\sigma}(\rvz_{s(i)}|\rvz_{t(i)},\rvx)
    \,\Vert\,
    p_\theta(\rvz_{s(i)}|\rvz_{t(i)},\rvz_{t'(i)})
    \right)
    \right]
    +
    \gamma\gL_{\mathrm{Anchor}}(\rvx;\theta).
    \]
    The diffusion term trains the model to denoise the current state using
    an anchor obtained from the earlier, noisier state $\rvz_{t'(i)}$ helping
    in latent caching, while $\gL_{\mathrm{Anchor}}$ optionally supervises
    the anchor network's predicted anchor-token mixture.
\end{itemize}

Analyzing the diffusion and anchor losses together, we obtain:
\vspace{-1ex}
\begin{align*}
&\gL_{\mathrm{diffusion}}(\rvx;\theta)
+\gamma\gL_{\mathrm{Anchor}}(\rvx;\theta)
\\
&=
\sum_{i=1}^{T}
\E_{q_{\sigma}(\rvz_{0:1}|\rvx)}
\left[
D_{\mathrm{KL}}
\left(
q_{\sigma}(\rvz_{s(i)}|\rvz_{t(i)},\rvx)
\,\Vert\,
p_\theta(\rvz_{s(i)}|\rvz_{t(i)},\rvz_{t'(i)})
\right)
\right]
\\
&\quad+
\gamma\sum_{i=1}^{T}
\E_{q_{\sigma}(\rvz_{0:1}|\rvx)}
\left[
D_{\mathrm{KL}}
\left(
r\!\left(
\rvy_{s'(i)}|\rvz_{t'(i)},\rvy
\right)
\,\Vert\,
r_{\theta_A}\!\left(
\rvy_{s'(i)}|\rvz_{t'(i)},
\rvy_{A_{\theta_A}}(\rvz_{t'(i)})
\right)
\right)
\right]
\\
&=
\sum_{i=1}^{T}
\E_{q_{\sigma}(\rvz_{t(i)},\rvz_{t'(i)}|\rvx)}
\E_{q_{\sigma}(\rvz_{s(i)}|\rvz_{t(i)},\rvx)}
\left[
\log
\frac{
q_{\sigma}(\rvz_{s(i)}|\rvz_{t(i)},\rvx)
}{
p_\theta(\rvz_{s(i)}|\rvz_{t(i)},\rvz_{t'(i)})
}
\right]
\\
&\quad+
\gamma\sum_{i=1}^{T}
\E_{q_{\sigma}(\rvz_{t'(i)}|\rvx)}
\E_{r(\rvy_{s'(i)}|\rvz_{t'(i)},\rvy)}
\left[
\log
\frac{
r\!\left(
\rvy_{s'(i)}|\rvz_{t'(i)},\rvy
\right)
}{
r_{\theta_A}\!\left(
\rvy_{s'(i)}|\rvz_{t'(i)},
\rvy_{A_{\theta_A}}(\rvz_{t'(i)})
\right)
}
\right].
\end{align*}

We use the conditional independence step for the forward process that
\[
q_{\sigma}(\rvz_s|\rvz_{t(i)},\rvz_{t'(i)},\rvx)
=
q_{\sigma}(\rvz_{s(i)}|\rvz_{t(i)},\rvx)
\]
for $s<t\leq t'$ to arrive at the above equality.

We apply the two-case calculation in the proof of
Theorem~4.1 (Anchored Negative Evidence Lower Bound) of \citet{adlm},
given in Appendix~A.1 and restated there as Theorem~A.1.
For both the diffusion and anchor transitions, that calculation
shows that the KL contribution vanishes at unmasked positions
and reduces to a weighted cross-entropy at masked positions.

Applying this calculation to the diffusion transition at
$(s(i),t(i))$, with the denoiser conditioned additionally on
$\rvz_{t'(i)}$, gives
\begin{align*}
&\E_{q_{\sigma}(\rvz_{s(i)}|\rvz_{t(i)},\rvx)}
\left[
\log
\frac{
q_{\sigma}(\rvz_{s(i)}|\rvz_{t(i)},\rvx)
}{
p_\theta(\rvz_{s(i)}|\rvz_{t(i)},\rvz_{t'(i)})
}
\right]
\\
&\qquad =
\lambda_{t(i)}
\sum_{l=1}^{L}
\mathbf{1}_{\{\rvz_{t(i)}^l=\rvm\}}
\log\left\langle
\rvx_\theta^l(\rvz_{t(i)},\rvz_{t'(i)}),
\rvx^l
\right\rangle.
\end{align*}
The additional conditioning changes the predicted clean-token
distribution, while preserving the algebraic form of the
transition-level KL calculation.

Similarly, applying the anchor-transition calculation at
$(s'(i),t'(i))$ yields
\begin{align*}
&\E_{r(\rvy_{s'(i)}|\rvz_{t'(i)},\rvy)}
\left[
\log
\frac{
r\!\left(\rvy_{s'(i)}|\rvz_{t'(i)},\rvy\right)
}{
r_{\theta_A}\!\left(
\rvy_{s'(i)}|\rvz_{t'(i)},
\rvy_{A_{\theta_A}}(\rvz_{t'(i)})
\right)
}
\right]
\\
&\qquad =
\lambda_{t'(i)}
\sum_{l=1}^{L}
\mathbf{1}_{\{\rvz_{t'(i)}^l=\rvm\}}
\log\left\langle
\rvy^l_{A_{\theta_A}}(\rvz_{t'(i)}),
\rvy^l
\right\rangle,
\end{align*}
where
\begin{align*}
\lambda_{t(i)}
&=
\frac{
(1-\sigma_{t(i)})\alpha_{t(i)}-\alpha_{s(i)}
}{
1-\alpha_{t(i)}
},
\\
\lambda_{t'(i)}
&=
\frac{
(1-\sigma_{t'(i)})\alpha_{t'(i)}-\alpha_{s'(i)}
}{
1-\alpha_{t'(i)}
},
\qquad
s'(i)=t'(i)-\frac{1}{T}.
\end{align*}
Since these coefficients are nonpositive, each expression is a
nonnegative weight multiplying the corresponding cross-entropy.
Anchor supervision is restricted to the selected anchor positions.

Substituting both identities into the NELBO decomposition gives
\begin{align*}
&\gL_{\mathrm{T\text{-}ANELBO}}(\rvx;\theta)
\\
&=
\gL_{\mathrm{NELBO}}(\rvx;\theta)
+\gamma\gL_{\mathrm{Anchor}}(\rvx;\theta)
\\
&=
\E_{q_{\sigma}(\rvz_0|\rvx)}
\left[-\log p_\theta(\rvx|\rvz_0)\right]
+
D_{\mathrm{KL}}\!\left(
q_{\sigma}(\rvz_1|\rvx)
\,\Vert\,
p_\theta(\rvz_1)
\right)
\\
&\quad+
\sum_{i=1}^{T}
\E_{q_{\sigma}(\rvz_{t(i)},\rvz_{t'(i)}|\rvx)}
\left[
\lambda_{t(i)}
\sum_{l=1}^{L}
\mathbf{1}_{\{\rvz_{t(i)}^l=\rvm\}}
\log\left\langle
\rvx_\theta^l(\rvz_{t(i)},\rvz_{t'(i)}),
\rvx^l
\right\rangle
\right]
\\
&\quad+
\gamma\sum_{i=1}^{T}
\E_{q_{\sigma}(\rvz_{t'(i)}|\rvx)}
\left[
\lambda_{t'(i)}
\sum_{l=1}^{L}
\mathbf{1}_{\{\rvz_{t'(i)}^l=\rvm\}}
\log\left\langle
\rvy^l_{A_{\theta_A}}(\rvz_{t'(i)}),
\rvy^l
\right\rangle
\right].
\end{align*}

The prior term vanishes when
$p_\theta(\rvz_1)=q_{\sigma}(\rvz_1|\rvx)$.
The mask indicators are omitted when carry-over
parameterization is applied which makes the corresponding log-probability terms
zero at unmasked positions. We assume carry-over parameterization and hence these are zeroed-out.

Finally, the NELBO inequality gives
\begin{align*}
-\log p_\theta(\rvx)
+\gamma\gL_{\mathrm{Anchor}}(\rvx;\theta)
\leq
\gL_{\mathrm{T\text{-}ANELBO}}(\rvx;\theta).
\end{align*}

\end{proof}

\textbf{Implications.}
The T-ANELBO objective highlights two aspects of time-based anchoring:
\vspace{-1ex}
\begin{itemize}
    \item \textbf{Temporal latent reuse.}
    The term
    \[
    \sum_{i=1}^{T}
    \E_{q(\rvz_{t(i)},\rvz_{t'(i)}|\rvx)}
    \left[
    \sum_{l=1}^{L}
    \lambda_{t(i)}
    \log
    \left\langle
    \rvx^l_{\theta}
    (\rvz_{t(i)},\rvz_{t'(i)}),
    \rvx^l
    \right\rangle
    \right]
    \]
    trains the denoising prediction
    $\rvx_{\theta}(\rvz_{t(i)},\rvz_{t'(i)})$ using the latent representation
    computed from the earlier, noisier anchor state $\rvz_{t'(i)}$, where
    $t'(i)\geq t(i)$. By sampling stale anchor times during training, the
    objective explicitly exposes the model to temporally stale latent
    representations and encourages them to remain useful across multiple
    diffusion steps. This enables latent-space reuse and, consequently,
    anchor caching at inference time.

    \item \textbf{Optional anchor supervision.}
    The term
    \[
    \sum_{i=1}^{T}
    \E_{q(\rvz_{t(i)},\rvz_{t'(i)}|\rvx)}
    \left[
    \sum_{l=1}^{L}
    \gamma\lambda_{t'(i)}
    \log
    \left\langle
    \rvy^l_{A_{\theta_A}}(\rvz_{t'(i)}),
    \rvy^l
    \right\rangle
    \right]
    \]
    directly supervises the anchor-network prediction
    $\rvy_{A_{\theta_A}}(\rvz_{t'(i)})$ when explicit anchor-token targets
    $\rvy$ are available. Its contribution is controlled by $\gamma$.
    Setting $\gamma=0$ removes the need for predefined anchor targets, in
    which case the latent anchor is learned entirely through the temporal
    denoising objective; for $\gamma>0$, known anchor tokens can
    additionally guide the learned anchor representation.
\end{itemize}
    
\section{Related Work}
\subsection{Discrete Diffusion Models}
Discrete Diffusion Language Models~\citep{sohl2015deep,d3pm} follow a noising process gradually corrupting the clean input sequence $\rvx = (\rvx^1, \rvx^2, \cdots, \rvx^L)$ to a partially noised
sequence $\rvz_t = (\rvz_t^1, \rvz_t^2, \cdots, \rvz_t^L)$ and finally to the noisy prior $\boldsymbol{\pi}$. Let $T$ be the number of denoising steps, we denote $t(i)=\frac{i}{T}$ and $s(i)=\frac{i-1}{T}$.
For the rest of the paper, we denote $t(i)$ as $t$ and $s(i)$ as $s$. The forward noising process, as described in D3PM~\citep{d3pm}: 
\vspace{-1ex}
\begin{align}
\label{eq:fwd-old}
    q(\rvz_t|\rvx) = \prod_{l=1}^{L} q(\rvz_t^l|\rvx), \quad   q(\rvz_t^l|\rvx) = \cat\left(\rvz_t^l; \alpha_t \rvx^l + (1-\alpha_t) \boldsymbol{\pi} \right),\quad l \in \{1,2,\cdots, L\},
\vspace{-1ex}
\end{align}
where $\alpha_t \in [0,1]$ is a monotonically decreasing noise schedule, with $\alpha_0=1$ and $\alpha_1=0$. We define $\alpha_{t|s}=\frac{\alpha_t}{\alpha_s}$. The one step transition 
probability is $q(\rvz_t^l|\rvz_s^l) = \cat(\rvz_t^l; \alpha_{t|s}\rvz_s^l + (1-\alpha_{t|s})\boldsymbol{\pi})$. The reverse posterior is given as:
\vspace{-1ex}
\begin{equation}
\label{eq:rev-posterior-old}
q(\rvz_s^l\mid\rvz_t^l, \rvx^l)
=
\operatorname{Cat}\!\left(
\rvz_s^l;
\frac{
\left[
    \alpha_{t \mid s}\rvz_t^l
    +
    \left(1-\alpha_{t \mid s}\right)
    \mathbf{1}\boldsymbol{\pi}^{\top}\rvz_t^l
\right]
\odot
\left[
    \alpha_s\rvx^l
    +
    \left(1-\alpha_s\right)\boldsymbol{\pi}
\right]
}{
    \alpha_t\rvz_t^{l^{\top}}\rvx^l
    +
    \left(1-\alpha_t\right)
    \rvz_t^{l^{\top}}\boldsymbol{\pi}
}
\right).
\vspace{-1ex}
\end{equation}

During inference, we do not have access to the clean samples $\rvx$, therefore we approximate it with a neural network $\theta$ which learns to predict the distribution of the clean sequence 
from the noisy input. The distribution predicted by $\theta$ is $\rvx_{\theta}(\rvz_t) = (\rvx_{\theta}^1(\rvz_t), \rvx_{\theta}^2(\rvz_t) \cdots, \rvx_{\theta}^L(\rvz_t))$ where 
$\rvx_{\theta}^l(\rvz_t)$ is the probability distribution over $\gV$. We define $p_{\theta}(\rvx, \rvz_{0:1})$ as the learnt joint distribution over the sequences which follows a markovian structure, 
that is, $p_\theta(\rvx,\rvz_{0:1})=
p_\theta(\rvz_1) p_\theta(\rvx|\rvz_0)\prod_{i=1}^{T} p_\theta(\rvz_{s(i)} | \rvz_{t(i)})$. Also, each token distribution in the sequence is conditionally independent given 
$\rvz_t$, i.e, $p_\theta(\rvz_s |\rvz_t) = \prod_{l=1}^{L} p_\theta(\rvz_s^l | \rvz_t)$. We can represent the probability assigned by the model corresponding to the true token at position $l$ as 
$p_{\theta}(\rvx^l|\rvz_t) = \langle\rvx_{\theta}^l(\rvz_t), \rvx^l \rangle$. In the discrete-time setting, the denoising model is trained by minimizing
the standard negative evidence lower bound (NELBO),
\vspace{-1ex}
\begin{align}
\gL_{\mathrm{NELBO}}(\rvx,\theta)
=&\;
\E_q\left[-\log p_\theta(\rvx\mid \rvz_{t(0)})\right]
\nonumber +
\sum_{i=1}^{T}
\E_q\left[
D_{\mathrm{KL}}\left(
q(\rvz_{s(i)}\mid \rvz_{t(i)},\rvx)
\,\Vert\,
p_\theta(\rvz_{s(i)}\mid \rvz_{t(i)})
\right)
\right]
\nonumber\\
&+
D_{\mathrm{KL}}\left(
q(\rvz_{t(T)}\mid\rvx)
\,\Vert\,
p_\theta(\rvz_{t(T)})
\right).
\label{eq:discrete-nelbo-old}
\vspace{-1ex}
\end{align}
We focus on two corruption processes relevant to our methods: absorbing-state masked diffusion~\citep{mdlm,remdm, md4}, which underlies our TADM pretraining experiments, and uniform-state diffusion~\citep{schiff2025simple,Duo}, which underlies the pretrained model used for TADM:Post-train.

\subsubsection{Masked Diffusion Models}
MDMs use masked token prior i.e. $\boldsymbol{\pi}=\rvm$, a one-hot vector at the special MASK token index, which simplifies Eq. \ref{eq:rev-posterior} to 
\vspace{-1ex}
\begin{align}
    \label{eq:masked-posterior-known}         
     q(\rvz_s^l | \rvz_t^l, \rvx^l) 
     =
    \begin{cases}
    \mathrm{Cat}(\rvz_s^l; \rvz_t^l), &  \rvz_t^l \neq \rvm \\
    \mathrm{Cat}\left(\rvz_s^l; \frac{\alpha_s - \alpha_t}{1 - \alpha_t} \rvx^l + \frac{1 - \alpha_s}{1 - \alpha_t} \rvm \right),&\rvz_t^l = \rvm.
    \end{cases}
\vspace{-1ex}
\end{align}
\textbf{MDLM}~\citep{mdlm} specialize this construction by imposing two constraints on the output probability distribution.
They are: \textit{zero-masking}, which forces the model to assign zero probability to the mask token, i.e. $\langle \rvx_{\theta}^l(\rvz_t), \rvm \rangle = 0$ and \textit{carry-over unmasking}, which states 
once a token is decoded, it will remain the same throughout the generation process, meaning if $\rvz_t^l \neq \rvm$ then $\langle \rvx_{\theta}^l(\rvz_t), \rvz_t^l \rangle = 1$. 
Equipped with this, we can formulate the reverse generation step as:
\vspace{-1ex}
\begin{align}
    \label{eq:mdlm-posterior-unk}    
     p_\theta(\rvz_s^l|\rvz_t)
     =
     q(\rvz_s^l\mid\rvz_t^l, \rvx_{\theta}^l(\rvz_t))
     =
    \begin{cases}
    \mathrm{Cat}(\rvz_s^l; \rvz_t^l), &  \rvz_t^l \neq \rvm \\
    \mathrm{Cat}\left(\rvz_s^l; \frac{\alpha_s - \alpha_t}{1 - \alpha_t} \rvx^l_\theta(\rvz_t) + \frac{1 - \alpha_s}{1 - \alpha_t} \rvm \right),&\rvz_t^l = \rvm.
    \end{cases}
    \vspace{-1ex}
\end{align}
The $\theta-$parametrized model is trained on the NELBO objective. The loss for this process is formulated below:
\vspace{-1ex}
\begin{align}
\label{eq:masked-nelbo} 
&\gL_{NELBO}(\rvx, \theta)
=
\E_{Z_0 \sim q(\cdot | \rvx)}\Big[-\log p_\theta(\rvx |Z_0) \Big] \\
&\hspace{4em}+
\sum_{i=1}^{T}\E_{Z_{t(i)} \sim q(\cdot|\rvx)}\Bigg[\frac{\alpha_{t(i)} - \alpha_{s(i)}}{1-\alpha_{t(i)}} \sum_{l=1}^{L} \log\langle\rvx^l_\theta(Z_{t(i)}), \rvx^l \rangle \Bigg].
\vspace{-1ex}
\end{align}

\textbf{ReMDM}~\citep{remdm} identifies a limitation of standard
absorbing-state diffusion: once a token is unmasked during inference, it
cannot be changed again. Consequently, early decoding errors cannot be
corrected. ReMDM introduces a remasking reverse process that assigns
nonzero probability to returning an already decoded token to the mask
state, enabling iterative error correction during generation.
The ReMDM posterior is defined as
\begin{equation}
q_{\sigma}(\rvz_s^l \mid \rvz_t^l,\rvx^l)
=
\begin{cases}
\operatorname{Cat}\!\left(
\rvz_s^l;
(1-\sigma_t)\rvx^l+\sigma_t\rvm
\right),
& \rvz_t^l \neq \rvm, \\[1ex]
\operatorname{Cat}\!\left(
\rvz_s^l;
\frac{\alpha_s-(1-\sigma_t)\alpha_t}{1-\alpha_t}\rvx^l
+
\frac{1-\alpha_s-\sigma_t\alpha_t}{1-\alpha_t}\rvm
\right),
& \rvz_t^l = \rvm ,
\end{cases}
\label{eq:remdm-posterior}
\end{equation}
where $\sigma_t$ controls the probability of remasking an already
decoded token. Setting $\sigma_t=0$ recovers the standard masked
diffusion posterior.

At inference time, the clean token $\rvx^l$ is unknown and is replaced
with the denoiser prediction $\rvx_\theta^l(\rvz_t)$. The learned reverse
transition is therefore parameterized as
\begin{equation}
p_\theta(\rvz_s^l \mid \rvz_t)
=
q_{\sigma}\!\left(
\rvz_s^l
\mid
\rvz_t^l,
\rvx_\theta^l(\rvz_t)
\right),
\end{equation}
or explicitly,
\begin{equation}
p_\theta(\rvz_s^l \mid \rvz_t)
=
\begin{cases}
\operatorname{Cat}\!\left(
\rvz_s^l;
(1-\sigma_t)\rvx_\theta^l(\rvz_t)+\sigma_t\rvm
\right),
& \rvz_t^l \neq \rvm, \\[1ex]
\operatorname{Cat}\!\left(
\rvz_s^l;
\frac{\alpha_s-(1-\sigma_t)\alpha_t}{1-\alpha_t}
\rvx_\theta^l(\rvz_t)
+
\frac{1-\alpha_s-\sigma_t\alpha_t}{1-\alpha_t}\rvm
\right),
& \rvz_t^l = \rvm .
\end{cases}
\label{eq:remdm-param}
\end{equation}

With this parameterization, ReMDM minimizes the negative evidence lower
bound
\begin{align}
\gL_{\mathrm{ReMDM}}(\rvx;\theta)
=&\;
\E_{\rvz_0\sim q(\cdot\mid\rvx)}
\left[
-\log p_\theta(\rvx\mid\rvz_0)
\right]
\nonumber\\
&+
\E_{t\sim\{1/T,\ldots,1\}}
\E_{\rvz_t\sim q(\rvz_t\mid\rvx)}
\left[
T\,
\frac{(1-\sigma_t)\alpha_t-\alpha_s}
{1-\alpha_t}
\log
\left\langle
\rvx_\theta(\rvz_t),\rvx
\right\rangle
\right].
\label{eq:remdm-nelbo}
\end{align}

\subsubsection{Uniform-State Diffusion Models}
Alternative to MDM, USDMs set the prior to uniform noise, i.e. $\boldsymbol{\pi}=\mathbf{u}:=\frac{1}{K}\mathbf{1}$. Here $K$ is the vocabulary size. Its reverse posterior therefore becomes:
\vspace{-1ex} 
\begin{equation}
\label{eq:usdm-rev-posterior-known}
q(\rvz_s^\ell\mid\rvz_t^\ell,\rvx^\ell)
=
\operatorname{Cat}\!\biggl(
\rvz_s^\ell;\,
\frac{
K\alpha_t\rvz_t^\ell\odot\rvx^\ell
+(\alpha_{t\mid s}-\alpha_t)\rvz_t^\ell
+(\alpha_s-\alpha_t)\rvx^\ell
+\tfrac{(\alpha_s-\alpha_t)(1-\alpha_s)}{K\alpha_s}\mathbf{1}
}{
K\alpha_t\langle\rvz_t^\ell,\rvx^\ell\rangle+1-\alpha_t
}
\biggr)
\vspace{-1ex}
\end{equation}

The denoising processes uses $p_\theta(\rvz_s^l|\rvz_t)=q(\rvz_s^l|\rvz_t^l, \rvx_{\theta}^l(\rvz_t))$, with which we get the following parametrization:
\vspace{-1ex} 
\begin{equation}
    \label{eq:usdm-rev-posterior-unk}
p_\theta(\rvz_s^\ell\mid\rvz_t)
=
\operatorname{Cat}\!\biggl(
\rvz_s^\ell;\,
\frac{
K\alpha_t\rvz_t^\ell\odot\rvx_\theta^\ell
+(\alpha_{t\mid s}-\alpha_t)\rvz_t^\ell
+(\alpha_s-\alpha_t)\rvx_\theta^\ell
+\tfrac{(\alpha_s-\alpha_t)(1-\alpha_s)}{K\alpha_s}\mathbf{1}
}{
K\alpha_t\langle\rvz_t^\ell,\rvx_\theta^\ell\rangle+1-\alpha_t
}
\biggr)
\vspace{-1ex} 
\end{equation}
where $\rvx_{\theta}^l = \rvx_\theta^\ell(\rvz_t,t)$.
Following \citet{schiff2025simple}, this objective admits a tighter
continuous-time formulation as $T\rightarrow\infty$. For a uniform
limiting distribution and a noise schedule satisfying
$\alpha_0=1$ and $\alpha_1=0$, the reconstruction and prior terms vanish,
leaving only the diffusion loss.

For each token position $l$, define
    \vspace{-1ex} 
\begin{align}
\bar{\rvx}_t^l
&=
K\alpha_t\rvx^l
+
(1-\alpha_t)\mathbf{1},
\\
\bar{\rvx}_{\theta,t}^l
&=
K\alpha_t\rvx_\theta^l(\rvz_t,t)
+
(1-\alpha_t)\mathbf{1},
\vspace{-1ex} 
\end{align}
and let
\vspace{-1ex} 
\begin{equation}
i_l
=
\arg\max_{j\in[K]}
\rvz_t^l[j]
\vspace{-1ex} 
\end{equation}
denote the index of the observed token at position $l$ in $\rvz_t$.
The continuous-time USDM NELBO is then
\vspace{-1ex} 
\begin{align}
\gL_{\mathrm{USDM}}^{\infty}(\rvx,\theta)
=
\int_0^1
\E_{\rvz_t\sim q(\cdot\mid\rvx)}
\sum_{l=1}^{L}
\frac{\alpha_t'}{K\alpha_t}
\Bigg[
&
\frac{K}{\bar{\rvx}_t^l[i_l]}
-
\frac{K}{\bar{\rvx}_{\theta,t}^l[i_l]}
\nonumber\\
&
-
\sum_{\substack{j\in[K]\\ \rvz_t^l[j]=0}}
\frac{\bar{\rvx}_t^l[j]}
     {\bar{\rvx}_t^l[i_l]}
\log
\left(
\frac{
\bar{\rvx}_{\theta,t}^l[i_l]\,
\bar{\rvx}_t^l[j]
}{
\bar{\rvx}_{\theta,t}^l[j]\,
\bar{\rvx}_t^l[i_l]
}
\right)
\Bigg]
\,dt .
\label{eq:usdm-continuous-nelbo}
\vspace{-1ex} 
\end{align}

\subsection{Anchored Diffusion Language Models}
\label{sec:ADLM-app}
MDMs have their own limitations, we do not have control over which tokens will be unmasked during the generation step and hence, its possible that the important tokens may be unmasked very late in the generation step, thereby hurting the sample quality. Here, the important tokens are those, with the help of which the model can learn to produce higher quality text. Hence, in order to deal with this, \citet{adlm} introduced a two stage method, known as the \textit{Anchored Diffusion Language Models} (ADLM).
The core intuition of ADLM is that in the presence of the important tokens which we will call the \textit{anchor tokens}, the denoiser can generate higher quality text. ADLM is decomposed into two stage network, the first stage is the \textit{anchor network}, which produces soft context logits. Conditioned on these, the \textit{denoiser network} outputs the final clean token sequence. Therefore the output prediction is now a composition of two network: 
\vspace{-1ex} 
\begin{equation}
    \rvx_\theta(\rvz_t)
    =
    \rvx_{\theta_D}\bigl(\rvy_{\theta_A}(\rvz_t)\bigr),
    \vspace{-1ex} 
\end{equation}
Here the $\theta$ parametrized model is split into two models $\theta = [\theta_{A}, \theta_D]$, corresponding to the anchor and the denoiser networks respectively. Both of them output probability distribution on the vocabulary space $\gV$. \\
\textbf{Reverse inference posterior for the main process.}
Following \citet{adlm}, the reverse transition uses the clean-token
prediction $\rvx^l_{\theta}(\rvz_t)$:
\begin{equation}
\begin{aligned}
p_\theta(\rvz_s^l\mid\rvz_t)
 =
\begin{cases}
\mathrm{Cat}\!\left(
\rvz_s^l;
(1-\sigma_t)\rvz_t^l+\sigma_t\rvm
\right), \rvz_t^l\neq\rvm,
\\[1ex]
\mathrm{Cat}\!\left(
\rvz_s^l;
\dfrac{\alpha_s-(1-\sigma_t)\alpha_t}{1-\alpha_t}
\rvx^l_{\theta}(\rvz_t)
+\dfrac{1-\alpha_s-\alpha_t\sigma_t}{1-\alpha_t}\rvm
\right),
\rvz_t^l=\rvm.
\end{cases}
\end{aligned}
\label{eq:app-adlm-inf-post}
\end{equation}
Here $t=t(i)$, $s=s(i)$, and $\sigma_t$ denotes the remasking
probability. An unmasked token is carried over with probability
$1-\sigma_t$ and remasked with probability $\sigma_t$. At a masked
position, the model uses the anchor-conditioned denoiser prediction.
Setting $\sigma_t=0$ recovers the absorbing-state reverse transition.

\textbf{Reverse inference posterior for the anchor transition.}
Let $\rvy=\gA(\rvx)$ denote the clean anchor-token sequence.
The learned anchor transition is
\begin{equation}
\begin{aligned}
&r_{\theta_A}\!\left(
\rvy_s^l\mid\rvz_t,\rvy_{A_{\theta_A}}(\rvz_t)
\right) =
\begin{cases}
\mathrm{Cat}\!\left(
\rvy_s^l;
(1-\sigma_t)\rvy^l+\sigma_t\rvm
\right),
 \rvz_t^l\neq\rvm,
\\[1ex]
\mathrm{Cat}\!\left(
\rvy_s^l;
\dfrac{\alpha_s-(1-\sigma_t)\alpha_t}{1-\alpha_t}
\rvy^l_{A_{\theta_A}}(\rvz_t)
+\dfrac{1-\alpha_s-\alpha_t\sigma_t}{1-\alpha_t}\rvm
\right),
 \rvz_t^l=\rvm.
\end{cases}
\end{aligned}
\label{eq:app-adlm-anchor-transition}
\end{equation}
For an unmasked supervised anchor position, the anchor token is known
from the observed token and is carried over. At a masked position,
the anchor network predicts its distribution. The corresponding
target transition uses the clean anchor token:
\begin{equation}
\begin{aligned}
&r(\rvy_s^l\mid\rvz_t^l,\rvy^l) =
\begin{cases}
\mathrm{Cat}\!\left(
\rvy_s^l;
(1-\sigma_t)\rvy^l+\sigma_t\rvm
\right),
 \rvz_t^l\neq\rvm,
\\[1ex]
\mathrm{Cat}\!\left(
\rvy_s^l;
\dfrac{\alpha_s-(1-\sigma_t)\alpha_t}{1-\alpha_t}\rvy^l
+\dfrac{1-\alpha_s-\alpha_t\sigma_t}{1-\alpha_t}\rvm
\right),
 \rvz_t^l=\rvm.
\end{cases}
\end{aligned}
\label{eq:app-adlm-anchor-target}
\end{equation}

The sequence-level transitions factorize over token positions:
\begin{equation}
\begin{aligned}
r(\rvy_s\mid\rvz_t,\rvy)
&=
\prod_{l=1}^{L}r(\rvy_s^l\mid\rvz_t^l,\rvy^l),
\\
r_{\theta_A}\!\left(
\rvy_s\mid\rvz_t,\rvy_{A_{\theta_A}}(\rvz_t)
\right)
&=
\prod_{l=1}^{L}r_{\theta_A}\!\left(
\rvy_s^l\mid\rvz_t,\rvy_{A_{\theta_A}}(\rvz_t)
\right).
\end{aligned}
\label{eq:app-adlm-anchor-factorization}
\end{equation}

\textbf{Anchor loss.}
ADLM aligns the learned anchor transition with the target transition
using the expected sequence-level KL divergence:
\begin{equation}
\begin{aligned}
&\gL_{\mathrm{Anchor}}(\rvx;\theta)
=
\sum_{i=1}^{T}
\E_{q(\rvz_{0:1}|\rvx)}
\Biggl[
D_{\mathrm{KL}}\Bigl(
r\!\left(\rvy_{s(i)}\mid\rvz_{t(i)},\rvy\right)
\\
&\hspace{8em}\Vert\,
r_{\theta_A}\!\left(
\rvy_{s(i)}\mid\rvz_{t(i)},
\rvy_{A_{\theta_A}}(\rvz_{t(i)})
\right)
\Bigr)
\Biggr].
\end{aligned}
\label{eq:app-adlm-anchor-loss}
\end{equation}
The KL divergence is over the full anchor sequence. Under
Eq.~\ref{eq:app-adlm-anchor-factorization}, it equals the sum of
the tokenwise KL divergences. The target and learned transitions
coincide at unmasked positions, yielding zero KL contribution.
Anchor supervision is restricted to the selected important-token
positions in practice.

Our cached anchor loss in Eq.~\ref{eq:anchor-loss} evaluates these
transitions at $(s'(i),t'(i))$, where $s'(i)=t'(i)-1/T$.
ADLM corresponds to the fresh-anchor case $t'(i)=t(i)$.
The training objective for this model is:

\begin{align}
\label{eq:anelbo-old}
&\gL_{\mathrm{ANELBO}}(\rvx; \theta_A, \theta_D) 
\coloneq 
\E_{Z_0 \sim q(\cdot|\rvx)}\left[-\log p_{\theta_D}(\rvx | \rvy_{\theta_A}(Z_0))\right] \\
&\hspace{2em}+ 
\sum_{i=1}^{T}\E_{Z_{t(i)} \sim q(\cdot|\rvx)} 
\left[
\frac{\alpha_{t(i)} - \alpha_{s(i)}}{1-\alpha_{t(i)}} 
\sum_{l=1}^L \left(
\log\langle \rvx^l_{\theta_D}(\rvy_{\theta_{A}}(Z_{t(i)})), \rvx^l\rangle + \gamma \log\langle \rvy^l_{\theta_A}(Z_{t(i)}), \rvy^l\rangle
\right)
\right],
\vspace{-1ex} 
\end{align}
Here $\gamma$ is an hyperparameter which controls the anchor supervision factor. The network is trained on the standard NELBO according to \ref{eq:discrete-nelbo} in addition to an extra loss which is coined as the \textit{anchor loss} to train the anchor network produce better anchor tokens.

\textbf{Anchoring and Conditional Uncertainty.}
The motivation behind anchoring is that not all tokens provide equal information for reconstructing a sequence. Let $X_A$ denote a subset of anchor tokens and $X_{\bar A}$ denote the remaining tokens. Given an anchor budget $d$, an ideal set of anchors can be viewed as one that minimizes the uncertainty of the remaining sequence,
\begin{equation}
    A^\star \in \operatorname*{arg\,min}_{A\subseteq[L],\,|A|\leq d}
    H\!\left(X_{\bar A}\mid X_A\right).
    \label{eq:adlm-conditional-entropy}
\end{equation}
Hence, revealing or accurately predicting $X_A$ makes the remaining variables easier to estimate. ADLM realizes this principle by training the anchor network to predict informative tokens early and allowing the denoiser to condition on these predictions.

This reduction in conditional uncertainty also motivates the statistical benefit of anchoring. Under the graphical-model analysis of \citet{adlm}, suppose each token is categorical and each masked token depends on an anchor set of bounded size $|\pi_l|\leq d$, with $d\ll L$. A standard diffusion model may condition a token on the remaining sequence, requiring $O(V^L)$ parameters per conditional in the corresponding tabular model and a total sample complexity of $O(LV^L)$. Conditioning instead on a bounded set of anchors reduces the per-token complexity to $O(V^{d+1})$, yielding
\begin{equation}
    O(LV^L)
    \quad\longrightarrow\quad
    O(LV^{d+1}).
    \label{eq:adlm-sample-complexity}
\end{equation}
Thus, when a small set of informative anchors captures the relevant dependencies of the sequence, anchoring reduces the effective conditioning dimension and can substantially reduce the samples required to learn the corresponding conditional distributions.

\section{Experimental Setup}
In this section, we report the experimental setup used for TADM:Post-train and TADM:Pretraining along with their hyperparameters.
\subsection{TADM:Post-train}
\label{sec:app-tadm-post-train}
We used the DiffusionGemma~\citep{diffusiongemma}, a 26B MoE Uniform-State Block Diffusion Language Model as the base model to demonstrate the capability of our framework. 

\textbf{Architecture Details.} Diffusion Gemma consists of 30 text layers, along with an embedding and output head. It also has a vision backbone which is frozen throughout the experiment. The model acts as an encoder and decoder, with most of the weights shared between them. The encoder stage prefills the context using causal attention. It first runs on the prompt and builds the KV cache, which acts as a context during the denoising process. The decoder iteratively denoises the canvas and uses bidirectional attention and once finalized is appended into the context by the encoder. 
For our experiment, we split the base model into three components and introduce a fusion module.
\begin{itemize}[
    leftmargin=*,
    labelsep=0.4em,
    itemsep=0pt,
    parsep=0pt,
    topsep=0pt,
    partopsep=0pt
]
    \item \textbf{Shared Network.} It consists of the embedding and the first two text layer. This network is computed every step producing the current state $\rvc_t$. Its purpose is to introduce an alignment of the latent space of the current state features with the anchor cache.
    \item \textbf{Anchor Network.} This is the next 20 text layers. It is the expensive network which contains the deep latent representations which will be stored as the anchor cache $\rvh_t'$.
\item \textbf{Fusion Module.}
The post-training fusion module uses \emph{paired attention}. In addition
to the cached deep anchor $\mathbf{H}_0$, we retain the shallow
representation $\mathbf{C}_0$ computed at the same anchor-refresh step.
This cached shallow state is an implementation detail omitted from the
general formulation in Sec.~\ref{sec:stale-anchor}; it provides a reference
for the change in the current canvas and ensures that the fusion correction
vanishes exactly when the current and cached shallow states coincide.

For a sequence of length $L$, let
$\mathbf{C},\mathbf{C}_0,\mathbf{H}_0\in\mathbb{R}^{L\times d}$ denote
the current shallow representation, cached shallow representation, and
cached deep anchor, respectively. We first compute tokenwise RMS scales
\begin{equation}
s_{C,i}
=
\sqrt{\frac{1}{d}\sum_{j=1}^{d}C_{0,ij}^{2}+\epsilon},
\qquad
s_{H,i}
=
\sqrt{\frac{1}{d}\sum_{j=1}^{d}H_{0,ij}^{2}+\epsilon},
\end{equation}
with $\epsilon=10^{-6}$, and project to rank $r$:
\begin{equation}
\mathbf{Z}
=
(\mathbf{C}/\mathbf{s}_C)W_C,
\qquad
\mathbf{Z}_0
=
(\mathbf{C}_0/\mathbf{s}_C)W_C,
\qquad
\mathbf{U}
=
(\mathbf{H}_0/\mathbf{s}_H)W_H .
\end{equation}
Both shallow branches use the cached-state scale $\mathbf{s}_C$ and the
same projection $W_C$.

We define a shared mixer $\phi$ by first concatenating the shallow and
anchor features,
\begin{equation}
\mathbf{J}=[\mathbf{Z},\mathbf{U}],
\qquad
\mathbf{Q}=\mathbf{J}W_Q,\quad
\mathbf{K}=\mathbf{J}W_K,\quad
\mathbf{V}=\mathbf{J}W_V,
\end{equation}
and applying multi-head self-attention followed by a residual FFN:
\begin{equation}
\mathbf{R}
=
\mathbf{Z}
+
\operatorname{MHA}(\mathbf{Q},\mathbf{K},\mathbf{V})W_O,
\qquad
\phi(\mathbf{Z},\mathbf{U})
=
\mathbf{R}
+
\operatorname{SiLU}(\mathbf{R}W_1)W_2 .
\end{equation}
The attention is bidirectional over the current canvas; in our experiments
$r=256$ with four heads, and the FFN expands $r\rightarrow2r\rightarrow r$.

The paired-attention correction is the difference between two evaluations
of the same mixer under identical anchor context:
\begin{equation}
\boxed{
\mathbf{M}_{t,t'}
=
\phi(\mathbf{Z},\mathbf{U})
-
\phi(\mathbf{Z}_0,\mathbf{U})
}.
\label{eq:paired-attention-difference}
\end{equation}
We then form
\begin{equation}
\mathbf{F}
=
[\mathbf{U};\mathbf{Z}_0;\mathbf{Z}-\mathbf{Z}_0;\mathbf{M}_{t,t'}],
\qquad
\mathbf{g}
=
\sigma\!\left(
\operatorname{SiLU}(\mathbf{F}W_{g1})W_{g2}+b_g
\right),
\end{equation}
where $\mathbf{g}\in\mathbb{R}^{L\times1}$ is a scalar gate for each
token. The fused deep representation is
\begin{equation}
\boxed{
\widetilde{\mathbf{H}}_{t\mid t'}
=
\mathbf{H}_0
+
\mathbf{g}\odot\mathbf{s}_H\odot
\left(\mathbf{M}_{t,t'}W_{\mathrm{up}}\right)
}.
\label{eq:stale-anchor-fusion-app}
\end{equation}

Importantly, when $\mathbf{C}=\mathbf{C}_0$, the shared normalization and
projection imply $\mathbf{Z}=\mathbf{Z}_0$. Because both branches use the
same mixer parameters and the same anchor context $\mathbf{U}$,
\begin{equation}
\mathbf{C}=\mathbf{C}_0
\;\Longrightarrow\;
\phi(\mathbf{Z},\mathbf{U})
=
\phi(\mathbf{Z}_0,\mathbf{U})
\;\Longrightarrow\;
\mathbf{M}_{t,t'}=\mathbf{0},
\end{equation}
and therefore
\begin{equation}
\widetilde{\mathbf{H}}_{t\mid t'}=\mathbf{H}_0 .
\end{equation}
Thus, at a fresh anchor the fusion module exactly preserves the cached
deep representation, while for stale anchors it learns the change induced
by the evolving current canvas.
    \item \textbf{Denoiser Network.} This is the final 8 text layers and the output head of Diffusion Gemma. It takes in the fused output $\widetilde{\mathbf{h}}_{t\mid t'}$ and then predicts the output distribution of the tokens.
\end{itemize}
The post training method does not have $\gamma$ supervision as attaching a LM-Head to the output of anchor network requires expensive training.

\begin{algorithm}[t]
\caption{Anchor-Cache Inference}
\label{alg:anchor-cache-inference}
\KwIn{
Initial noisy state $\rvz_T$;
number of reverse steps $T$;
anchor refresh interval $K$;
shared network $S_{\theta_S}$;
anchor network $A_{\theta_A}$;
denoiser $D_{\theta_D}$;
fusion module $\Phi_{\phi}$;
diffusion sampler $\mathcal{P}$
}
\KwOut{Generated sequence $\rvz_0$}
$\mathbf{h}^{\mathrm{cache}} \leftarrow \varnothing$\;
$j \leftarrow \varnothing$\;

\For{$i \leftarrow T$ \KwTo $1$}{
    $\mathbf{c}_i \leftarrow S_{\theta_S}(\rvz_i)$\;

    \If{$(T-i) \bmod K = 0$}{
        $\mathbf{h}^{\mathrm{cache}}
        \leftarrow A_{\theta_A}(\mathbf{c}_i)$\;
        $j \leftarrow i$\;
    }

    $\widetilde{\mathbf{h}}_i
    \leftarrow
    \Phi_{\phi}
    \left(
        \mathbf{c}_i,
        \mathbf{h}^{\mathrm{cache}}
    \right)$\;

    $\widehat{\rvx}_i
    \leftarrow
    D_{\theta_D}
    \left(
        \widetilde{\mathbf{h}}_i
    \right)$\;

    $\rvz_{i-1}
    \sim
    \mathcal{P}
    \left(
        \rvz_i,
        \widehat{\rvx}_i,
        i
    \right)$\;
}
\Return{$\rvz_0$}\;
\end{algorithm}

\textbf{Sampler Details.}
We use Algorithm~\ref{alg:anchor-cache-inference} together with the
official DiffusionGemma entropy-bound sampler, summarized in
Algorithm~\ref{alg:entropy-bound-sampler}. At each denoising step, the
sampler ranks token positions by predictive entropy, accepts the most
confident positions subject to the entropy bound, and re-noises the
remaining positions for further refinement.

The sampler configuration is:
\begin{itemize}[leftmargin=*,itemsep=0pt,topsep=2pt]
    \item \textbf{Entropy bound:} $\eta=0.1$.
    \item \textbf{Temperature schedule:} annealed from $0.8$ to $0.4$.
    \item \textbf{Maximum denoising steps:} 48.
    \item \textbf{Canvas block size:} 256 tokens.
    \item \textbf{Anchor refresh interval:} $K\in\{1,2,3\}$, where
    $K=1$ refreshes the anchor at every reverse step.
    \item \textbf{Self-conditioning:} enabled between reverse steps.
    \item \textbf{Adaptive early stopping:} disabled in our experiments.
    \item \textbf{Evaluation mode:} all benchmark evaluations use
    DiffusionGemma's \emph{no-think} mode.
\end{itemize}

Although the released DiffusionGemma sampler supports adaptive early stopping,
we disable it in all evaluations. DiffusionGemma reports using sampler
distillation with reinforcement learning (SD-RL) to compress the denoising
trajectory, this greatly helps in the adaptive stopping aspect of the sampler, whereas our objective is complementary: we study whether the cost
of a fixed-step diffusion sampler can be reduced by reusing stale latent
representations. We therefore evaluate both DiffusionGemma and
TADM:Post-train with adaptive stopping disabled and a fixed denoising-step
budget, isolating the speedup attributable to latent-cache reuse. 
\begin{algorithm}[t]
\caption{DiffusionGemma Entropy-Bound Sampling}
\label{alg:entropy-bound-sampler}

\KwIn{
Initial canvas $\rvz_T$;
maximum denoising steps $T$;
entropy bound $\eta$;
temperature limits $\tau_{\max},\tau_{\min}$;
embedding matrix $E$
}
\KwOut{Final denoised canvas $\widehat{\rvx}$}

$\mathbf{s}\leftarrow \mathbf{0}$
\tcp*[r]{initial self-conditioning signal}

\For{$n\leftarrow T$ \KwTo $1$}{

    $\mathbf{L}_n
    \leftarrow
    \operatorname{Decoder}
    (\rvz_n,\mathbf{s})$\;

    $\tau_n
    \leftarrow
    \tau_{\min}
    +
    (\tau_{\max}-\tau_{\min})\frac{n}{T}$\;

    $\mathbf{p}_n^l
    \leftarrow
    \operatorname{softmax}
    (\mathbf{L}_n^l/\tau_n)$
    for each position $l$\;

    $\widehat{x}_n^l
    \leftarrow
    \arg\max_{v\in\mathcal{V}}
    p_n^l(v)$\;

    $H_l
    \leftarrow
    -\sum_{v\in\mathcal{V}}
    p_n^l(v)\log p_n^l(v)$\;

    $\mathbf{s}
    \leftarrow
    \mathbf{p}_n E$
    \tcp*[r]{self-conditioning for next step}

    \If{$n>1$}{

        $\widetilde{x}_n^l
        \sim
        \operatorname{Cat}(\mathbf{p}_n^l)$\;

        $\pi
        \leftarrow
        \operatorname{argsort}(H)$
        \tcp*[r]{ascending entropy}

        $C_m
        \leftarrow
        \sum_{r=1}^{m}H_{\pi_r}$,
        \qquad
        $M_m
        \leftarrow
        \max_{1\leq r\leq m}H_{\pi_r}$\;

        $\mathcal{A}
        \leftarrow
        \left\{
        \pi_m:
        C_m-M_m\leq\eta
        \right\}$\;

        \For{$l\leftarrow1$ \KwTo $L$}{
            \eIf{$l\in\mathcal{A}$}{
                $\rvz_{n-1}^l
                \leftarrow
                \widetilde{x}_n^l$\;
            }{
                $\rvz_{n-1}^l
                \sim
                \operatorname{Unif}(\mathcal{V})$\;
            }
        }
    }
}

$\widehat{\rvx}
\leftarrow
(\widehat{x}_1^1,\ldots,\widehat{x}_1^L)$
\tcp*[r]{final argmax canvas}

\Return{$\widehat{\rvx}$}\;
\end{algorithm}

\textbf{Training Details.}
We train only the fusion module while keeping the pretrained model frozen. For the fusion module we use the low rank space r = 256. The gate bias is initialized to -3, but the up projection is 0 initialized so that at the start of training, we get the unchanged anchor output. Overall we train approximately 3.1M fusion parameters.

\textbf{SFT Data.}
We post-train on a 300K-example \emph{no-think} mixture
constructed from UltraData-SFT-2605~\citep{ultradata-sft-2605} and
Nemotron-Post-Training-Dataset-v2~\citep{nemotronpostv2}. The mixture contains
100K math, 100K code, 50K knowledge, 25K instruction-following, and 25K STEM
examples. We retain only samples marked as no-think or reasoning-off in their
source datasets. Each example is converted into eight diffusion canvases during
preprocessing.

The training configuration is:
\begin{itemize}[leftmargin=*,itemsep=0pt,topsep=2pt]
    \item \textbf{Training strategy:} Fusion training, DDP
    \item \textbf{Rollout cache age:} $k\in\{0,1,2\}$ with sampling
    probabilities $(0.34,0.33,0.33)$ and equal loss weights.
    \item \textbf{Model split:} 2 shared layers, 20 anchor layers, and
    8 denoiser layers.
    \item \textbf{Max response tokens:} 2048
    \item \textbf{Max Prompt length: }2048
    \item \textbf{Loss weights:} cache/decoder loss $1.0$, and KD loss $1.0$ with teacher temperature $1.0$.
    \item \textbf{Teacher:} frozen pretrained DiffusionGemma.
    \item \textbf{Self-conditioning:} probability $0.5$.
    \item \textbf{Time sampling:} antithetic time sampling enabled.
    \item \textbf{Gradient flow:} gradients from the denoiser to the encoder
    pathway are retained.
    \item \textbf{Optimization steps:} 18,750.
    \item \textbf{Batching:} per-device batch size $1$ with gradient
    accumulation over $4$ steps.
    \item \textbf{Optimizer:} AdamW with learning rate $1.5\times10^{-4}$,
    $\beta=(0.95,0.99)$, $\epsilon=10^{-8}$, and weight decay $10^{-4}$.
    \item \textbf{Learning-rate schedule:} cosine decay with 100 warmup steps
    and minimum learning-rate ratio $0.1$.
    \item \textbf{Precision:} BF16 mixed precision.
    \item \textbf{Gradient clipping:} maximum norm $1.0$.
    \item \textbf{Training seed:} $42$.
\end{itemize}

\textbf{Hardware.}
Training was performed with distributed data parallelism (DDP) on four
GB200 nodes, each with 189\,GB of GPU memory. Evaluation was performed on
a single GH200 node with 96\,GB of GPU memory.

\textbf{Benchmark Details.}
We evaluate TADM:Post-train on six reasoning, knowledge, and coding
benchmarks using the same decoding configuration for DiffusionGemma and
TADM. Since the post-training data contains no explicit reasoning
traces, all evaluations are performed in the model's \emph{no-think}
mode. We report mean accuracy and generation throughput over three random
seeds.

\begin{itemize}[leftmargin=*,itemsep=2pt,topsep=2pt]
    \item \textbf{GSM8K}~\citep{gsm8k}: grade-school mathematical
    reasoning; 5-shot prompting; maximum generation length 1,024 tokens;
    final-answer exact-match accuracy.

    \item \textbf{HumanEval}~\citep{humaneval}: functional Python code
    generation from natural-language specifications; 0-shot prompting;
    maximum generation length 1,024 tokens; execution-based pass@1.

    \item \textbf{GPQA-Diamond}~\citep{rein2024gpqa}: expert-level
    multiple-choice questions in biology, physics, and chemistry; 0-shot
    prompting; maximum generation length 2,048 tokens; multiple-choice
    accuracy.

    \item \textbf{MMLU-Pro}~\citep{wang2024mmlu}: challenging
    multiple-choice knowledge and reasoning benchmark with up to ten answer
    choices; 5-shot prompting; maximum generation length 2,048 tokens;
    multiple-choice accuracy.

    \item \textbf{AIME26}~\citep{aime26}: 30 problems from the 2026
    American Invitational Mathematics Examination; 0-shot prompting;
    maximum generation length 4,096 tokens; final-answer exact-match
    accuracy.

    \item \textbf{LiveCodeBench-v6}~\citep{lcb}: recent
    competitive-programming problems evaluated with execution-based
    correctness tests; 0-shot prompting; maximum generation length 2,000
    tokens; pass@1.
\end{itemize}

For throughput, we measure the number of generated tokens divided by
end-to-end generation time under the same hardware and decoding
configuration. DiffusionGemma evaluates all 30 text layers at every
reverse step. TADM:Post-train uses a $2/20/8$
shared/anchor/denoiser split with refresh interval $K=2$, so the
20-layer anchor network is evaluated once every two reverse steps while
the shared and denoiser networks are evaluated at every step. This
corresponds to $0.67\times$ the Transformer-layer evaluations of the
baseline, or a $33.3\%$ reduction.

The thing to note is that for the post-training experiment, we only train the fusion module which is very tiny in comparison to the entire network and we are able to achieve largely preserved task accuracy along with speedups at fixed denoising budget. Thus, demonstrating that we only require a small standalone module to achieve a modest speedup during inference.

\subsection{TADM:Pretrain}
\label{sec:app-tadm-pretrain}

\textbf{Architecture Details.}
TADM:Pretraining builds on the two-stage ADLM architecture
\citep{adlm}, replacing explicit anchor-token conditioning with the
time-anchored latent-cache formulation described in
Sec.~\ref{sec:stale-anchor}. We use Diffusion Transformer (DiT)
blocks for both the anchor and denoising networks. Following the ADLM
small-model configuration, each Transformer operates at hidden dimension
$d=768$ with 12 attention heads and a maximum sequence length of 1,024.
We use the GPT-2 tokenizer with vocabulary size $V=50{,}257$.

The model is decomposed as follows:
\begin{itemize}[leftmargin=*,itemsep=1pt,topsep=2pt]
    \item \textbf{Shared network:} the token embedding layer, evaluated for
    both the current state $\rvz_t$ and stale anchor state $\rvz_{t'}$.
    The embedding parameters are shared by the anchor and denoising pathways.

    \item \textbf{Anchor network:} 12 DiT blocks which transform
    $S(\rvz_{t'})$ into the cached deep representation
    $\mathbf{h}_{t'}$.

    \item \textbf{Denoiser network:} 6 DiT blocks which consume the fused
    representation and predict the clean-token distribution.

    \item \textbf{Fusion module:} a gated residual MLP that combines the
    shallow current representation $\mathbf{c}_t=S(\rvz_t)$ with the stale
    anchor $\mathbf{h}_{t'}$. No explicit time conditioning is used in the
    fusion module.
\end{itemize}

Specifically, defining
\[
\mathbf{s}_t=\operatorname{LN}(\mathbf{c}_t), \qquad
\mathbf{a}_{t'}=\operatorname{LN}(\mathbf{h}_{t'}), \qquad
\mathbf{u}_{t,t'}=[\mathbf{s}_t;\mathbf{a}_{t'}],
\]
the fusion module computes
\begin{align}
    \mathbf{g}_{t,t'}
    &=
    \sigma\!\left(W_g\mathbf{u}_{t,t'}+b_g\right), \\
    \boldsymbol{\Delta}_{t,t'}
    &=
    W_2\,\operatorname{GELU}
    \left(W_1\mathbf{u}_{t,t'}+b_1\right)+b_2, \\
    \widetilde{\mathbf{h}}_{t\mid t'}
    &=
    \operatorname{LN}\!\left(
        \mathbf{h}_{t'}
        +
        \mathbf{g}_{t,t'}\odot
        \boldsymbol{\Delta}_{t,t'}
    \right),
\end{align}
where
$W_1:\mathbb{R}^{2d}\rightarrow\mathbb{R}^{4d}$,
$W_2:\mathbb{R}^{4d}\rightarrow\mathbb{R}^{d}$, and
$\mathbf{g}_{t,t'}\in[0,1]^d$ is a per-token, per-hidden-dimension gate.
The final update projection $W_2$ and bias $b_2$ are initialized to zero,
so the fusion module initially uses stale anchor as the primary input to the denoiser without any added correction.
Adding the output layer norm is an implementation choice.

\textbf{Dataset.}
We pretrain TADM on OpenWebText (OWT)~\citep{owt}, using the same
1,024-token sequence length and GPT-2 tokenization setup as ADLM.
Training is performed for 1M optimization steps, corresponding to the
same training-step budget used for the 1M-step ADLM comparison.

\textbf{Training Details.}
We train the full TADM:Pretraining model from scratch using absorbing-state
masked diffusion with a log-linear noise schedule. The current diffusion
time $t$ is sampled continuously, while the stale anchor time $t'$ is
constructed by sampling a discrete cache age aligned with the reverse
sampling budgets used at inference. Time conditioning is disabled. We set $\sigma_t=0$, the remasking probability in the reverse posterior during training. Hence, the forward corruption from $z_t$ to $z_{t'}$ is valid. Experimentally, we sample $t$ continuously as $t\sim\mathcal{U}(\epsilon,1)$. We independently sample an anchor refresh interval $K\in\{1,2,4,8\}$ and a sampling-step budget $T\in\{128,256,512,1024,2048,4096\}$. Conditioned on $K$, the cache age is sampled as $k\sim\mathcal{U}\{0,\ldots,K-1\}$, and
$t'=\min\left(1,t+\frac{k}{T}\right)$.
Thus $t$ is continuous, while the offset between $t$ and $t'$ is aligned with the discrete cache ages encountered at inference time. 

The training configuration is:
\begin{itemize}[leftmargin=*,itemsep=1pt,topsep=2pt]
    \item \textbf{Optimization steps:} 1,000,000.
    \item \textbf{Global batch size:} 512.
    \item \textbf{Sequence length:} 1,024 tokens.
    \item \textbf{Hardware:} 32 NVIDIA GH200 GPUs using DDP.
    \item \textbf{Precision:} BF16.
    \item \textbf{Optimizer:} AdamW with learning rate
    $3\times10^{-4}$, $\beta_1=0.9$, $\beta_2=0.999$,
    $\epsilon=10^{-8}$, and zero weight decay.
    \item \textbf{Gradient clipping:} maximum norm $1.0$.
    \item \textbf{EMA:} $0.9999$.
    \item \textbf{Noise schedule:} log-linear absorbing-state diffusion.
    \item \textbf{Time conditioning:} disabled.
    \item \textbf{Time sampling:} continuous $t$ with antithetic sampling.
    \item \textbf{Sampling-step budgets used to construct cache ages:}
    $T\in\{128,256,512,1024,2048,4096\}$.
    \item \textbf{Anchor refresh intervals during training:}
    sampled across multiple cache ages so that the model observes both
    fresh and stale anchors.
\end{itemize}

\textbf{Sampler Details.}
We use the ReMDM remasking sampler~\citep{remdm}, following the released ReMDM sampling configuration, along with the anchor cache inference (Alg \ref{alg:anchor-cache-inference}). Although, we set $\sigma_t=0$ during training, we still use remasking samplers as its already established in ReMDM~\citep{remdm} that the same weights can be used for remasking samplers as well. Let $L=1024$ denote the generation
length. For sampling budgets $T<L$, we use the ReMDM-Cap sampler,
whereas for $T\geq L$ we use ReMDM-Loop. We evaluate
$T\in\{128,256,512,1024,2048,4096\}$ and vary the anchor refresh
interval $K$ to control the amount of latent-cache reuse. 

The sampling configuration is:
\begin{itemize}[leftmargin=*,itemsep=1pt,topsep=2pt]
    \item \textbf{Generation length:} $L=1024$ tokens.
    \item \textbf{Sampling budgets:}
    $T\in\{128,256,512,1024,2048,4096\}$.
    \item \textbf{Timestep schedule:} linear.
    \item \textbf{Nucleus sampling:} $p=0.9$.
    \item \textbf{ReMDM-Loop ($T\geq1024$):}
    $\eta=0.02$, $t_{\mathrm{on}}=0.55$,
    $t_{\mathrm{off}}=0.05$, and $\alpha_{\mathrm{on}}=0.9$.
    \item \textbf{ReMDM-Cap ($T<1024$):}
    $\eta=0.04$, with the remaining remasking parameters unchanged.
    \item \textbf{Anchor refresh intervals:}
    $K\in\{1,2,4,8\}$ for the reported refresh-interval sweep, where
    $K=1$ recomputes the anchor at every reverse step.
    \item \textbf{Batch Size:} 1
\end{itemize}
\textbf{Hardware.}
Training was performed with distributed data parallelism (DDP) on 32
GH200 nodes, each with 96\,GB of GPU memory. Evaluation was performed on
a single GH200 node with 96\,GB of GPU memory.

\textbf{Evaluation.}
We evaluate unconditional 1,024-token generation on OpenWebText over
multiple reverse-step budgets and anchor refresh intervals. For each
configuration, we generate 5,000 samples and evaluate generation quality
using MAUVE, generative perplexity (Gen PPL), and token entropy. Gen PPL
and MAUVE use GPT-2 Large as the reference evaluator. We additionally
report Transformer-layer evaluations and measured generation throughput
to quantify the quality--compute trade-off induced by latent-cache reuse.

\begin{itemize}[leftmargin=*,itemsep=1pt,topsep=2pt]
    \item \textbf{Generation setting:} unconditional generation.
    \item \textbf{Generated sequence length:} 1,024 tokens.
    \item \textbf{Quality evaluation:} 5,000 generated samples per
    configuration.
    \item \textbf{Metrics:} MAUVE, Gen PPL, entropy, Transformer-layer
    compute, and tokens/s.
    \item \textbf{Gen PPL/MAUVE evaluator:} GPT-2 Large.
    \item \textbf{Sampling budgets:}
    $T\in\{128,256,512,1024,2048,4096\}$.
    \item \textbf{Cache sweep:} $K\in\{1,2,4,8\}$.
\end{itemize}

\textbf{Token Budget: } We train our diffusion language model upto 1M steps and therefore as per the token budget calculation for a standard masked Diffusion Language model as stated by MDLM~\citep{mdlm}, the token budget is 0.5 * 1M * 1024 * 512 = 262B tokens.

\section{Additional Experiments}
\label{sec:additional-experiments}

We provide additional experiments examining four aspects of time-based
anchoring: (i) zero-shot likelihood generalization of the pretrained
models, (ii) the effect of anchor refresh interval on the
quality--throughput trade-off, (iii) whether an ADLM trained without
temporal reuse can tolerate stale anchors, and (iv) the corresponding
quality--compute trade-off for TADM:Post-train. We additionally provide
qualitative generations illustrating model behavior at different cache
ages.

\textbf{Zero-Shot Likelihood Evaluation.} Table~\ref{tab:zero-shot-ppl} evaluates the zero-shot likelihood
generalization of TADM:Pretraining on datasets that are unseen during
training. Although TADM is slightly behind ADLM on several benchmarks,
its perplexity remains competitive with MDLM and the other diffusion
baselines across domains. In particular, TADM matches or improves upon
MDLM on several datasets, including Lambada, LM1B, AG News, PubMed, and
ArXiv, while remaining comparable on Wikitext and PTB.

These results indicate that training the model to operate with temporally
stale latent representations does not restrict the learned representation
to the OpenWebText training distribution or to a particular sampling
trajectory. Instead, the latent structure learned under temporal reuse
continues to support likelihood modeling on unseen text domains. This
suggests that the cacheable representations learned by TADM capture
features that generalize across datasets, rather than merely memorizing
dataset-specific correlations required for stale-anchor reuse.
\begin{table*}
\vspace{-0.5ex}
\small
\caption{Zero-shot validation perplexities ($\downarrow$) on OWT-trained models with 1,024 NFEs. TADM:Pretraining $\dagger$ is the $\gamma=0$ variant and TADM:Pretraining$\ddagger$ is $\gamma=3e-3$.}
\vspace{-1ex}
\label{tab:zero-shot-ppl}
\setlength{\tabcolsep}{4.3pt}
\resizebox{\textwidth}{!}{
\begin{tabular}{lccccccc}
\toprule
Model & Lambada & PTB & Wikitext & LM1B & AG News & PubMed & ArXiv \\
\midrule
AR & 51.28 & 82.05 & 25.75 & 51.25 & 52.09 & 49.01 & 41.73 \\
\midrule
\textit{AR+Diffusion} & & & & & & & \\
\rowcolor{gray!10}
BD3-LM ($L'=4$) & 50.03 & 96.81 & 31.31 & 60.88 & 61.67 & 42.52 & 39.20 \\
\midrule
\textit{Diffusion} & & & & & & & \\
SEDD & 49.86 & 100.09 & 34.28 & 68.20 & 62.09 & 44.53 & 38.38 \\
MDLM & 47.52 & 95.26 & 32.83 & 67.01 & 61.15 & 41.89 & 37.37 \\
ADLM (262B) & 44.93 & 98.16 & 32.45 & 65.59 & 57.10 & 38.29 & 35.08 \\
ADLM (524B) & \textbf{44.32} & \textbf{95.37} & \textbf{31.94} & 64.43 & \textbf{55.72} & \textbf{37.56} & \textbf{33.69} \\
\midrule
\rowcolor{orange!25}
TADM:Pretraining $\dagger$ (262B) & 46.04 & 106.47 & 35.03 & 65.37 & 59.90 & 40.08 & 35.08 \\
\rowcolor{orange!25}
TADM:Pretraining $\ddagger$ (262B) & 45.48 & 100.92 & 33.77 & \textbf{64.27} & 59.09 & 40.32 & 35.48 \\
\bottomrule
\end{tabular}
}
\vspace{-2ex}
\end{table*}

\textbf{Effect of Anchor Refresh Interval.}
Table~\ref{tab:TADM-refresh-interval-sweep} isolates the effect of cache
staleness by sweeping $K\in\{1,2,4,8\}$ while holding the sampling budget
$T$ fixed. Increasing $K$ monotonically improves throughput because the
12-layer anchor network is evaluated less frequently. At small sampling
budgets ($T=128$ and $T=256$), this comes with a noticeable degradation in
generation quality, reflected by lower MAUVE and higher Gen PPL. In
contrast, at larger $T$, Gen PPL remains nearly unchanged across cache
ages, indicating substantially greater tolerance to stale anchors. For
example, at $T=2048$, TADM:Pretraining$^{\dagger}$ changes only from Gen
PPL $21.49$ at $K=1$ to $22.34$ at $K=8$, while throughput increases from
$26.59$ to $41.19$ Tok/s. At $T=4096$, Gen PPL remains between $17.21$
and $17.36$ across the full sweep.

The two TADM variants exhibit very similar trends across cache ages.
Thus, the ability to reuse stale latent representations is primarily
learned through the temporal denoising objective itself, while the optional
anchor supervision has only a modest effect on generation quality in this
ablation.

\begin{table*}[!t]
\centering
\scriptsize
\caption{
Ablation over the anchor refresh interval $K$ for the 1M-step
TADM:Pretraining$^{\dagger}$ checkpoint and the $x_{t'}$-masked
TADM:Pretraining$^{\ddagger}$ checkpoint. Quality metrics use 5,000 OWT samples
per setting, while throughput is measured over 20 samples of length
1,024. Increasing $K$ reuses the cached anchor for more reverse steps.
}
\label{tab:TADM-refresh-interval-sweep}

\textbf{$T=128$ and $T=256$}\\[0.4ex]
\resizebox{\textwidth}{!}{%
\begin{tabular}{lcccccccc}
\toprule
Method ($K$)
& \multicolumn{4}{c}{$T=128$}
& \multicolumn{4}{c}{$T=256$} \\
\cmidrule(lr){2-5}\cmidrule(lr){6-9}
& MAUVE $\uparrow$ & Gen PPL $\downarrow$ & Entropy $\uparrow$ & Tok/s $\uparrow$
& MAUVE $\uparrow$ & Gen PPL $\downarrow$ & Entropy $\uparrow$ & Tok/s $\uparrow$ \\
\midrule
TADM:Pretraining$^{\dagger}$ ($K=1$) & 0.135 & 38.36 & 5.425 & 347.69 & 0.304 & 28.10 & 5.334 & 175.25 \\
TADM:Pretraining$^{\dagger}$ ($K=2$) & 0.084 & 41.56 & 5.433 & 445.23 & 0.239 & 29.86 & 5.339 & 222.28 \\
TADM:Pretraining$^{\dagger}$ ($K=4$) & 0.040 & 47.12 & 5.447 & 512.49 & 0.132 & 32.53 & 5.348 & 257.61 \\
TADM:Pretraining$^{\dagger}$ ($K=8$) & 0.016 & 58.33 & 5.465 & 554.59 & 0.052 & 38.22 & 5.364 & 279.42 \\
\midrule
TADM:Pretraining$^{\ddagger}$ ($K=1$) & 0.108 & 39.75 & 5.432 & 348.58 & 0.341 & 29.06 & 5.347 & 176.12 \\
TADM:Pretraining$^{\ddagger}$ ($K=2$) & 0.070 & 43.06 & 5.433 & 447.20 & 0.220 & 30.59 & 5.337 & 223.68 \\
TADM:Pretraining$^{\ddagger}$ ($K=4$) & 0.043 & 48.29 & 5.437 & 515.46 & 0.160 & 33.25 & 5.344 & 258.88 \\
TADM:Pretraining$^{\ddagger}$ ($K=8$) & 0.017 & 58.48 & 5.451 & 560.19 & 0.084 & 37.70 & 5.338 & 280.44 \\
\bottomrule
\end{tabular}
}

\vspace{0.4ex}

\textbf{$T=512$ and $T=1024$}\\[0.4ex]
\resizebox{\textwidth}{!}{%
\begin{tabular}{lcccccccc}
\toprule
Method ($K$)
& \multicolumn{4}{c}{$T=512$}
& \multicolumn{4}{c}{$T=1024$} \\
\cmidrule(lr){2-5}\cmidrule(lr){6-9}
& MAUVE $\uparrow$ & Gen PPL $\downarrow$ & Entropy $\uparrow$ & Tok/s $\uparrow$
& MAUVE $\uparrow$ & Gen PPL $\downarrow$ & Entropy $\uparrow$ & Tok/s $\uparrow$ \\
\midrule
TADM:Pretraining$^{\dagger}$ ($K=1$) & 0.386 & 20.61 & 5.232 & 88.11 & 0.564 & 26.15 & 5.371 & 47.02 \\
TADM:Pretraining$^{\dagger}$ ($K=2$) & 0.339 & 21.38 & 5.230 & 111.95 & 0.570 & 26.49 & 5.373 & 58.68 \\
TADM:Pretraining$^{\dagger}$ ($K=4$) & 0.285 & 22.60 & 5.233 & 129.54 & 0.440 & 27.38 & 5.380 & 68.39 \\
TADM:Pretraining$^{\dagger}$ ($K=8$) & 0.181 & 24.79 & 5.230 & 140.60 & 0.405 & 28.65 & 5.383 & 74.54 \\
\midrule
TADM:Pretraining$^{\ddagger}$ ($K=1$) & 0.450 & 20.91 & 5.240 & 88.10 & 0.596 & 26.57 & 5.379 & 46.82 \\
TADM:Pretraining$^{\ddagger}$ ($K=2$) & 0.363 & 21.74 & 5.233 & 111.50 & 0.546 & 26.82 & 5.373 & 58.39 \\
TADM:Pretraining$^{\ddagger}$ ($K=4$) & 0.292 & 22.66 & 5.218 & 129.19 & 0.510 & 27.35 & 5.359 & 67.95 \\
TADM:Pretraining$^{\ddagger}$ ($K=8$) & 0.225 & 24.17 & 5.183 & 140.35 & 0.441 & 28.41 & 5.344 & 74.34 \\
\bottomrule
\end{tabular}
}

\vspace{0.4ex}

\textbf{$T=2048$ and $T=4096$}\\[0.4ex]
\resizebox{\textwidth}{!}{%
\begin{tabular}{lcccccccc}
\toprule
Method ($K$)
& \multicolumn{4}{c}{$T=2048$}
& \multicolumn{4}{c}{$T=4096$} \\
\cmidrule(lr){2-5}\cmidrule(lr){6-9}
& MAUVE $\uparrow$ & Gen PPL $\downarrow$ & Entropy $\uparrow$ & Tok/s $\uparrow$
& MAUVE $\uparrow$ & Gen PPL $\downarrow$ & Entropy $\uparrow$ & Tok/s $\uparrow$ \\
\midrule
TADM:Pretraining$^{\dagger}$ ($K=1$) & 0.644 & 21.49 & 5.313 & 26.59 & 0.685 & 17.21 & 5.232 & 15.28 \\
TADM:Pretraining$^{\dagger}$ ($K=2$) & 0.626 & 21.56 & 5.315 & 32.29 & 0.659 & 17.28 & 5.237 & 18.32 \\
TADM:Pretraining$^{\dagger}$ ($K=4$) & 0.650 & 21.77 & 5.314 & 37.42 & 0.618 & 17.22 & 5.226 & 20.94 \\
TADM:Pretraining$^{\dagger}$ ($K=8$) & 0.557 & 22.34 & 5.310 & 41.19 & 0.602 & 17.36 & 5.214 & 22.99 \\
\midrule
TADM:Pretraining$^{\ddagger}$ ($K=1$) & 0.669 & 21.58 & 5.319 & 26.45 & 0.689 & 17.29 & 5.233 & 15.22 \\
TADM:Pretraining$^{\ddagger}$ ($K=2$) & 0.637 & 21.73 & 5.309 & 32.23 & 0.696 & 17.26 & 5.231 & 18.28 \\
TADM:Pretraining$^{\ddagger}$ ($K=4$) & 0.646 & 21.70 & 5.293 & 37.35 & 0.637 & 17.19 & 5.221 & 20.94 \\
TADM:Pretraining$^{\ddagger}$ ($K=8$) & 0.535 & 22.27 & 5.287 & 41.18 & 0.639 & 17.21 & 5.203 & 23.03 \\
\bottomrule
\end{tabular}
}
\end{table*}

\paragraph{Stale-anchor reuse in ADLM.}
Table~\ref{tab:adlm-tadm-cache-reuse} compares stale-anchor reuse in ADLM
against TADM:Pretraining, with each configuration averaged over 20
generations to provide a direct diagnostic of generation quality and
throughput as the anchor refresh interval $K$ increases. ADLM is trained
only with fresh anchors and therefore tests whether latent representations
can be reused without explicitly learning temporal reuse. As $K$ increases,
ADLM degrades rapidly, whereas TADM remains substantially more stable. For
example, at $T=1024$, increasing $K$ from $1$ to $8$ raises ADLM Gen PPL
from $25.97$ to $144.97$, while TADM:Pretraining$^{\dagger}$ changes only
from $23.18$ to $30.52$ and increases throughput from $47.02$ to
$74.54$ Tok/s.

This comparison also clarifies the complementary roles of stale-anchor
training and fusion. Stale-anchor training exposes the model to the
temporal mismatch created by reusing an anchor from an earlier diffusion
state; without it, as illustrated by ADLM, increasing anchor age rapidly
degrades generation. The fusion module addresses the complementary problem:
the stale anchor does not contain the latest information in the evolving
canvas, so fusion incorporates the current-state representation before
denoising. Thus, stale-anchor training teaches the model to operate under
temporal reuse, while fusion provides the mechanism for correcting the
cached representation using current information. Without stale-anchor
training the model is not optimized for reuse, while without fusion the
current canvas cannot effectively update the stale anchor. Together, these
components enable stable latent-cache reuse across multiple reverse steps.

\begin{table*}[t]
\centering
\scriptsize
\setlength{\tabcolsep}{2.6pt}
\renewcommand{\arraystretch}{0.90}

\caption{
\textbf{Effect of stale-anchor reuse in ADLM and TADM:Pretraining.}
We compare anchor refresh intervals $K\in\{1,2,4,8\}$, where $K=1$
recomputes the anchor at every reverse step. We report throughput
(Tok/s), generative perplexity (Gen PPL), and token entropy.
}
\label{tab:adlm-tadm-cache-reuse}

\resizebox{\textwidth}{!}{%
\begin{tabular}{@{}l c rrr rrr rrr@{}}
\toprule
Method & $K$
& \multicolumn{3}{c}{$T=128$}
& \multicolumn{3}{c}{$T=256$}
& \multicolumn{3}{c}{$T=512$} \\
\cmidrule(lr){3-5}
\cmidrule(lr){6-8}
\cmidrule(lr){9-11}
& &
Tok/s $\uparrow$ & Gen PPL $\downarrow$ & Ent.
&
Tok/s $\uparrow$ & Gen PPL $\downarrow$ & Ent.
&
Tok/s $\uparrow$ & Gen PPL $\downarrow$ & Ent. \\
\midrule

\multirow{4}{*}{ADLM}
& 1 & 320.62 & 57.29  & 5.51
    & 161.31 & 41.88  & 5.46
    & 81.44  & 31.63  & 5.33 \\
& 2 & 404.51 & 78.98  & 5.59
    & 202.45 & 60.78  & 5.52
    & 102.17 & 48.01  & 5.46 \\
& 4 & 462.16 & 194.00 & 5.66
    & 231.85 & 106.46 & 5.58
    & 117.30 & 76.38  & 5.44 \\
& 8 & 498.35 & 680.71 & 5.86
    & 250.77 & 245.08 & 5.65
    & 126.79 & 150.22 & 5.47 \\

\midrule

\multirow{4}{*}{TADM$^{\dagger}$}
& 1 & 347.69 & 42.25 & 5.448
    & 175.25 & 27.97 & 5.347
    & 88.11  & 20.46 & 5.252 \\
& 2 & 445.23 & 42.12 & 5.425
    & 222.28 & 30.83 & 5.385
    & 111.95 & 21.47 & 5.275 \\
& 4 & 512.49 & 46.57 & 5.463
    & 257.61 & 33.44 & 5.375
    & 129.54 & 20.74 & 5.155 \\
& 8 & 554.59 & 50.00 & 5.424
    & 279.42 & 40.55 & 5.364
    & 140.60 & 27.05 & 5.263 \\

\midrule

\multirow{4}{*}{TADM$^{\ddagger}$}
& 1 & 348.58 & 40.74 & 5.448
    & 176.12 & 28.81 & 5.348
    & 88.10  & 20.59 & 5.247 \\
& 2 & 447.20 & 40.42 & 5.414
    & 223.68 & 29.94 & 5.329
    & 111.50 & 24.89 & 5.291 \\
& 4 & 515.46 & 53.31 & 5.448
    & 258.88 & 35.14 & 5.370
    & 129.19 & 21.76 & 5.213 \\
& 8 & 560.19 & 49.04 & 5.414
    & 280.44 & 45.09 & 5.397
    & 140.35 & 26.37 & 5.249 \\

\bottomrule
\end{tabular}%
}

\vspace{1.0ex}

\resizebox{\textwidth}{!}{%
\begin{tabular}{@{}l c rrr rrr rrr@{}}
\toprule
Method & $K$
& \multicolumn{3}{c}{$T=1024$}
& \multicolumn{3}{c}{$T=2048$}
& \multicolumn{3}{c}{$T=4096$} \\
\cmidrule(lr){3-5}
\cmidrule(lr){6-8}
\cmidrule(lr){9-11}
& &
Tok/s $\uparrow$ & Gen PPL $\downarrow$ & Ent.
&
Tok/s $\uparrow$ & Gen PPL $\downarrow$ & Ent.
&
Tok/s $\uparrow$ & Gen PPL $\downarrow$ & Ent. \\
\midrule

\multirow{4}{*}{ADLM}
& 1 & 43.19 & 25.97  & 5.25
    & 24.32 & 17.82  & 5.11
    & 14.01 & 13.54  & 5.15 \\
& 2 & 53.25 & 38.08  & 5.37
    & 29.36 & 32.71  & 5.24
    & 16.70 & 26.03  & 4.98 \\
& 4 & 61.43 & 70.99  & 5.31
    & 33.75 & 60.95  & 5.13
    & 18.92 & 40.08  & 4.73 \\
& 8 & 66.59 & 144.97 & 5.29
    & 36.78 & 111.56 & 5.09
    & 20.62 & 87.37  & 4.75 \\

\midrule

\multirow{4}{*}{TADM$^{\dagger}$}
& 1 & 47.02 & 23.18 & 5.320
    & 26.59 & 20.07 & 5.259
    & 15.28 & 17.86 & 5.229 \\
& 2 & 58.68 & 28.06 & 5.338
    & 32.29 & 20.96 & 5.285
    & 18.32 & 17.24 & 5.324 \\
& 4 & 68.39 & 29.43 & 5.416
    & 37.42 & 25.35 & 5.377
    & 20.94 & 16.20 & 5.180 \\
& 8 & 74.54 & 30.52 & 5.423
    & 41.19 & 24.47 & 5.333
    & 22.99 & 17.01 & 5.228 \\

\midrule

\multirow{4}{*}{TADM$^{\ddagger}$}
& 1 & 46.82 & 25.48 & 5.362
    & 26.45 & 22.13 & 5.345
    & 15.22 & 17.15 & 5.257 \\
& 2 & 58.39 & 27.37 & 5.332
    & 32.23 & 19.63 & 5.235
    & 18.28 & 17.84 & 5.300 \\
& 4 & 67.95 & 25.61 & 5.275
    & 37.35 & 21.99 & 5.367
    & 20.94 & 16.44 & 5.094 \\
& 8 & 74.34 & 31.45 & 5.388
    & 41.18 & 24.37 & 5.361
    & 23.03 & 17.14 & 5.202 \\

\bottomrule
\end{tabular}%
}

\end{table*}

\textbf{Stale-anchor reuse in DiffusionGemma.}
Table~\ref{tab:naive-vs-tadm-posttrain} compares naive stale-anchor reuse in
DiffusionGemma with TADM:Post-train. For the baseline, we follow
Algorithm~\ref{alg:anchor-cache-inference} but disable fusion, so that the
denoiser directly receives the stale anchor representation for the next
$K$ steps. Corruption is measured using Qwen3-30B-A3B-Instruct-2507
\citep{qwen30b} as a judge for incoherence or repetition, with results
averaged over seeds $\{0,1,2\}$. On relatively easier benchmarks such as
GSM8K and HumanEval, both accuracy and corruption remain largely stable
under increasing reuse. In contrast, the harder reasoning and coding
benchmarks are substantially more sensitive to stale representations:
at $K=3$, naive reuse increases corruption to $37.78\%$ on AIME26,
$14.81\%$ on GPQA-Diamond, and $32.00\%$ on LiveCodeBench-v6, while also
reducing task accuracy. TADM:Post-train substantially mitigates this
degradation, reducing the corresponding corruption rates to $20.00\%$,
$4.71\%$, and $21.71\%$ while preserving substantially more of the
original task performance. Nevertheless, corruption at $K=3$ remains
higher than the fresh-anchor baseline on some difficult tasks, exposing a
limitation of increasingly stale reuse. Overall, these results show that
naively caching deep representations is insufficient for challenging
generation tasks and support our combined use of stale-anchor training and
current-state fusion to make temporal latent reuse more robust.

\begin{table*}[t]
\centering
\setlength{\tabcolsep}{3.2pt}
\renewcommand{\arraystretch}{0.95}

\caption{
\textbf{Naive stale-anchor reuse versus TADM:Post-train.}
Results are averaged over seeds $\{0,1,2\}$ with adaptive stopping disabled.
The original cache-age notation $k=\{0,1,2\}$ is reported here as anchor
refresh intervals $K=\{1,2,3\}$. ``Baseline'' denotes naive reuse with the
original DiffusionGemma weights and no fusion modules, while ``Ours'' denotes
TADM:Post-train. Speed is reported as tokens/s, with speedup in parentheses relative to the
corresponding baseline $K=1$ throughput. Corruption is the percentage of
generations flagged as incoherent or containing a repetition loop. 
}
\label{tab:naive-vs-tadm-posttrain}

\begin{tabular}{lc
                rr
                rr
                rr}
\toprule
& &
\multicolumn{2}{c}{Accuracy (\%)}
& \multicolumn{2}{c}{Corruption (\%)}
& \multicolumn{2}{c}{Tok./s (speedup)} \\
\cmidrule(lr){3-4}
\cmidrule(lr){5-6}
\cmidrule(lr){7-8}
Benchmark & $K$
& Baseline & Ours
& Baseline & Ours
& Baseline & Ours \\
\midrule

\multirow{3}{*}{AIME26}
& 1 & 47.78 & 47.78 & 11.11 & 11.11 & 62.79 (1.00$\times$) & 57.48 (0.92$\times$) \\
& 2 & 44.44 & 46.67 & 15.56 & 14.44 & 90.67 (1.44$\times$) & 83.41 (1.33$\times$) \\
& 3 & 44.44 & 48.89 & 37.78 & 20.00 & 105.67 (1.68$\times$) & 93.50 (1.49$\times$) \\

\midrule

\multirow{3}{*}{GSM8K}
& 1 & 94.79 & 94.79 & 0.08 & 0.08 & 33.80 (1.00$\times$) & 35.72 (1.06$\times$) \\
& 2 & 94.69 & 94.79 & 0.13 & 0.05 & 51.82 (1.53$\times$) & 51.51 (1.52$\times$) \\
& 3 & 94.72 & 94.95 & 0.15 & 0.23 & 56.68 (1.68$\times$) & 59.53 (1.76$\times$) \\

\midrule

\multirow{3}{*}{HumanEval}
& 1 & 95.12 & 95.12 & 4.88 & 4.88 & 41.84 (1.00$\times$) & 37.96 (0.91$\times$) \\
& 2 & 92.48 & 95.53 & 5.28 & 4.88 & 62.04 (1.48$\times$) & 58.22 (1.39$\times$) \\
& 3 & 93.70 & 94.72 & 4.67 & 3.46 & 72.27 (1.73$\times$) & 63.85 (1.53$\times$) \\

\midrule

\multirow{3}{*}{GPQA-D}
& 1 & 67.00 & 67.00 & 2.36 & 2.36 & 60.31 (1.00$\times$) & 55.30 (0.92$\times$) \\
& 2 & 64.81 & 66.16 & 4.71 & 1.85 & 80.44 (1.33$\times$) & 79.89 (1.32$\times$) \\
& 3 & 61.45 & 66.16 & 14.81 & 4.71 & 95.02 (1.58$\times$) & 93.90 (1.56$\times$) \\

\midrule

\multirow{3}{*}{LCB-v6}
& 1 & 50.29 & 50.29 & 13.90 & 13.90 & 58.35 (1.00$\times$) & 56.65 (0.97$\times$) \\
& 2 & 49.90 & 52.38 & 23.62 & 16.95 & 85.91 (1.47$\times$) & 78.66 (1.35$\times$) \\
& 3 & 47.43 & 50.10 & 32.00 & 21.71 & 104.06 (1.78$\times$) & 104.44 (1.79$\times$) \\

\bottomrule
\end{tabular}%

\end{table*}

\tcbset{
promptbox/.style={
    enhanced,
    breakable,
    colback=red!5,
    colbacktitle=red!12,
    colframe=red!55!black,
    boxrule=0.7pt,
    arc=2pt,
    left=5pt,
    right=5pt,
    top=4pt,
    bottom=4pt,
    fonttitle=\bfseries\small,
    coltitle=black
},
    responsebox/.style={
        enhanced,
        breakable,
        colback=yellow!10,
        colframe=yellow!55!black,
        boxrule=0.7pt,
        arc=2pt,
        left=5pt,
        right=5pt,
        top=4pt,
        bottom=4pt,
        fonttitle=\bfseries\small,
        coltitle=black
    }
}
\newpage
\subsection{Qualitative TADM:Post-train Generations}
The following section shows 6 sample generations from TADM:Post-train on AIME 2026 and GPQA Diamond (3 each) for anchor refresh intervals $K\in\{1,2,3\}$. Each benchmark uses the same prompt across the three anchor refresh intervals. Generation $i$ --- $K=k$ means the ith generation uses K=k, it is for numbering purpose only. The first three generations 1-3 correspond to the AIME 2026 for anchor refresh K=1-3 respectively, whereas the last three generations, 4-6, are for GPQA-Diamond for anchor refresh K $1 -3$ respectively.

\label{sec:qualitative-posttrain}

\subsubsection{AIME-2026}

\noindent
\textbf{Sample ID: 0}
\hfill
\textbf{Reference answer: 277}

\begin{tcolorbox}[
    promptbox,
    title={Shared prompt}
]
\small

Solve the following problem. Put the final integer answer in
$\boxed{\ldots}$.

\medskip

Question: Patrick started walking at a constant rate along a straight
road from school to the park. One hour after Patrick left, Tanya
started running along the same road from school to the park. One hour
after Tanya left, Jose started bicycling along the same road from school
to the park. Tanya ran at a constant rate of $2$ miles per hour faster
than Patrick walked, Jose bicycled at a constant rate of $7$ miles per
hour faster than Tanya ran, and all three arrived at the park at the
same time. The distance from the school to the park is $\frac{m}{n}$
miles, where $m$ and $n$ are relatively prime positive integers. Find
$m+n$.

Answer:
\end{tcolorbox}

\begin{tcolorbox}[
    responsebox,
    title={Generation 1 --- $K=1$}
]
\small

Let $v$ be the speed at which Patrick walks in miles per hour.

Let $d$ be the distance from the school to the park in miles.

According to the problem:

\begin{itemize}[leftmargin=2em,itemsep=0pt]
    \item Patrick's speed is $v$.
    \item Tanya's speed is $v + 2$.
    \item Jose's speed is $(v + 2) + 7 = v + 9$.
\end{itemize}

Let $t$ be the time (in hours) that Patrick took to travel from school
to the park.

Since Tanya started one hour after Patrick, her travel time is $t - 1$.

Since Jose started one hour after Tanya, his travel time is $t - 2$.

Because all three traveled the same distance $d$, we can set up the
following equations based on the formula
$d = \text{speed} \times \text{time}$:

\begin{enumerate}[leftmargin=2em,itemsep=0pt]
    \item $d = v \cdot t$
    \item $d = (v + 2)(t - 1)$
    \item $d = (v + 9)(t - 2)$
\end{enumerate}

We set the expressions for distance equal to each other:

\[
vt = (v + 2)(t - 1)
\]

\[
vt = vt - v + 2t - 2
\]

\[
0 = -v + 2t - 2
\implies
v = 2t - 2
\quad \text{--- (Eq. 1)}
\]

Next, we set the first and third expressions equal:

\[
vt = (v + 9)(t - 2)
\]

\[
vt = vt - 2v + 9t - 18
\]

\[
0 = -2v + 9t - 18
\implies
2v = 9t - 18
\quad \text{--- (Eq. 2)}
\]

Now, substitute Eq. 1 into Eq. 2:

\[
2(2t - 2) = 9t - 18
\]

\[
4t - 4 = 9t - 18
\]

\[
14 = 5t
\]

\[
t = \frac{14}{5}
\]

Now find the value of $v$ using Eq. 1:

\[
v = 2\left(\frac{14}{5}\right) - 2
= \frac{28}{5} - \frac{10}{5}
= \frac{18}{5}
\]

Finally, calculate the distance $d$:

\[
d = v \cdot t
= \left(\frac{18}{5}\right) \cdot
  \left(\frac{14}{5}\right)
= \frac{252}{25}
\]

We check if $m = 252$ and $n = 25$ are relatively prime.

The prime factors of 25 are just 5. 252 is not divisible by 5
(it does not end in 0 or 5).

Thus, $\gcd(252, 25) = 1$.

The problem asks for $m + n$:

\[
252 + 25 = 277
\]

\[
\boxed{277}
\]

\end{tcolorbox}

\begin{tcolorbox}[
    responsebox,
    title={Generation 2 --- $K=2$}
]
\small

Let $v$ be the rate at which Patrick walks in miles per hour.

Let $d$ be the distance from the school to the park in miles.

According to the problem:

\begin{enumerate}[leftmargin=2em,itemsep=0pt]
    \item Patrick's rate is $v$.
    \item Tanya's rate is $v + 2$.
    \item Jose's rate is $(v + 2) + 7 = v + 9$.
\end{enumerate}

Let $t$ be the time (in hours) that it took Patrick to travel from the
school to the park.

Since Tanya started one hour after Patrick, her travel time was $t - 1$.

Since Jose started one hour after Tanya, his travel time was $t - 2$.

Because all three arrived at the park at the same time and traveled the
same distance $d$, we can write the following equations:

\[
d = v \cdot t
\]

\[
d = (v + 2)(t - 1)
\]

\[
d = (v + 9)(t - 2)
\]

We set the expressions for $d$ equal to each other:

\[
vt = (v + 2)(t - 1)
\]

\[
vt = vt - v + 2t - 2
\]

\[
0 = -v + 2t - 2
\implies
v = 2t - 2
\quad \text{--- (Equation 1)}
\]

Now, set the first and third expressions equal:

\[
vt = (v + 9)(t - 2)
\]

\[
vt = vt - 2v + 9t - 18
\]

\[
0 = -2v + 9t - 18
\implies
2v = 9t - 18
\quad \text{--- (Equation 2)}
\]

Substitute Equation 1 into Equation 2:

\[
2(2t - 2) = 9t - 18
\]

\[
4t - 4 = 9t - 18
\]

\[
14 = 5t
\]

\[
t = \frac{14}{5}
\]

Now, find $v$ using Equation 1:

\[
v = 2\left(\frac{14}{5}\right) - 2
= \frac{28}{5} - \frac{10}{5}
= \frac{18}{5}
\]

Finally, calculate the distance $d$:

\[
d = v \cdot t
= \left(\frac{18}{5}\right)
  \left(\frac{14}{5}\right)
= \frac{252}{25}
\]

The distance is given as $\frac{m}{n}$, where $m = 252$ and $n = 25$.

We check if 252 and 25 are relatively prime. The prime factors of 25
are only 5. 252 is not divisible by 5 (it does not end in 0 or 5).
Thus, $\gcd(252, 25) = 1$.

We need to find $m + n$:

\[
252 + 25 = 277
\]

\[
\boxed{277}
\]

\end{tcolorbox}

\begin{tcolorbox}[
    responsebox,
    title={Generation 3 --- $K=3$}
]
\small

Let $v$ be the rate at which Patrick walks in miles per hour.

Let $d$ be the total distance from the school to the park in miles.

According to the problem:

\begin{enumerate}[leftmargin=2em,itemsep=0pt]
    \item Patrick's rate is $v$.
    \item Tanya's rate is $v + 2$.
    \item Jose's rate is $(v + 2) + 7 = v + 9$.
\end{enumerate}

Let $t$ be the time (in hours) it took Patrick to travel from school to
the park.

Then the distance $d = v \cdot t$.

Since Tanya started one hour after Patrick, her travel time is $t - 1$.

Since Jose started one hour after Tanya, his travel time is
$(t - 1) - 1 = t - 2$.

Since all three arrived at the park at the same time, we can set up the
following equations for distance $d$:

\[
d = vt = (v + 2)(t - 1)
\]

\[
d = (v + 9)(t - 2)
\]

We have a system of two equations with two variables $v$ and $t$:

\begin{enumerate}[leftmargin=2em,itemsep=0pt]
    \item $vt = (v + 2)(t - 1)$
    \item $vt = (v + 9)(t - 2)$
\end{enumerate}

Let's expand the first equation:

\[
vt = vt - v + 2t - 2
\]

\[
0 = -v + 2t - 2
\]

\[
v = 2t - 2
\quad \text{--- (Eq. A)}
\]

Now let's expand the second equation:

\[
vt = vt - 2v + 9t - 18
\]

\[
0 = -2v + 9t - 18
\]

\[
2v = 9t - 18
\quad \text{--- (Eq. B)}
\]

Substitute (Eq. A) into (Eq. B):

\[
2(2t - 2) = 9t - 18
\]

\[
4t - 4 = 9t - 18
\]

\[
-5t = -14
\]

\[
t = \frac{14}{5}
\]

Now find $v$ using (Eq. A):

\[
v = 2\left(\frac{14}{5}\right) - 2
= \frac{28}{5} - \frac{10}{5}
= \frac{18}{5}
\]

Now calculate the distance $d$:

\[
d = v \cdot t
= \left(\frac{18}{5}\right)
  \left(\frac{14}{5}\right)
= \frac{252}{25}
\]

We are given that $d = \frac{m}{n}$ where $m$ and $n$ are relatively
prime positive integers.

\[
m = 252
\]

\[
n = 25
\]

Check if they are relatively prime:

The prime factors of 25 are 5.

252 is not divisible by 5 (it doesn't end in 0 or 5).

Thus, $\gcd(252, 25) = 1$.

The question asks for $m + n$:

\[
252 + 25 = 277
\]

\[
\boxed{277}
\]

\end{tcolorbox}

\subsubsection{GPQA-Diamond}

\noindent
\textbf{Sample ID: 72}
\hfill
\textbf{Reference answer: (A)}

\begin{tcolorbox}[
    promptbox,
    title={Shared prompt}
]
\small

What is the correct answer to this question:Two astronauts of masses
$2m$ and $3m$ travel at relativistic speeds of $0.6c$ and $0.5c$,
respectively, in the positive x-direction. Here, $c$ is the speed of
light. What will be the relative speed and the total energy of the
system?

\medskip

Choices:

\begin{itemize}[leftmargin=2em,itemsep=0pt]
    \item[(A)] $v_{\mathrm{rel}} = 0.14c$, $E = 5.96mc^2$
    \item[(B)] $v_{\mathrm{rel}} = 0.14c$, $E = 5mc^2$
    \item[(C)] $v_{\mathrm{rel}} = 1.1c$, $E = mc^2$
    \item[(D)] $v_{\mathrm{rel}} = 0.1c$, $E = 4.96mc^2$
\end{itemize}

Let's think step by step:

\end{tcolorbox}

\begin{tcolorbox}[
    responsebox,
    title={Generation 4 --- $K=1$}
]
\small

To find the correct answer, we need to calculate the relative speed
between the two astronauts and the total energy of the system.

\textbf{Step 1: Calculate the Relative Speed ($v_{\mathrm{rel}}$)}

When two objects move in the same direction at relativistic speeds, the
relative speed is given by the velocity addition formula:

\[
v_{\mathrm{rel}}
=
\frac{v_1 - v_2}
{1 - \frac{v_1 v_2}{c^2}}
\]

Given:

\begin{itemize}[leftmargin=2em,itemsep=0pt]
    \item $v_1 = 0.6c$
    \item $v_2 = 0.5c$
\end{itemize}

Substituting the values:

\[
v_{\mathrm{rel}}
=
\frac{0.6c - 0.5c}
{1 - \frac{(0.6c)(0.5c)}{c^2}}
=
\frac{0.1c}{1 - 0.3}
=
\frac{0.1c}{0.7}
\approx 0.1428c
\]

Rounding to two decimal places, we get \textbf{$0.14c$}.

\textbf{Step 2: Calculate the Total Energy ($E$)}

The total energy of the system is the sum of the relativistic energies
of the two astronauts. The relativistic energy of a particle is
$E = \gamma mc^2$, where
$\gamma = \frac{1}{\sqrt{1 - v^2/c^2}}$.

\textbf{For the first astronaut ($m_1 = 2m$, $v_1 = 0.6c$):}

\[
\gamma_1
=
\frac{1}{\sqrt{1 - (0.6)^2}}
=
\frac{1}{\sqrt{1 - 0.36}}
=
\frac{1}{\sqrt{0.64}}
=
\frac{1}{0.8}
=
1.25
\]

\[
E_1 = 1.25(2m)c^2 = 2.5mc^2
\]

\textbf{For the second astronaut ($m_2 = 3m$, $v_2 = 0.5c$):}

\[
\gamma_2
=
\frac{1}{\sqrt{1 - (0.5)^2}}
=
\frac{1}{\sqrt{1 - 0.25}}
=
\frac{1}{\sqrt{0.75}}
\approx
\frac{1}{0.866}
\approx
1.1547
\]

\[
E_2 = 1.1547(3m)c^2 \approx 3.464mc^2
\]

\textbf{Total Energy ($E$):}

\[
E = E_1 + E_2
= 2.5mc^2 + 3.464mc^2
= 5.964mc^2
\]

Rounding to two decimal places, we get \textbf{$5.96mc^2$}.

\textbf{Conclusion}

Comparing our results with the given choices, we find that they match
option (A).

\textbf{Correct Answer: (A)}

\end{tcolorbox}

\begin{tcolorbox}[
    responsebox,
    title={Generation 5 --- $K=2$}
]
\small

To find the correct answer, we need to calculate the relative speed
between the two astronauts and the total energy of the system.

\textbf{Step 1: Calculate the Relative Speed ($v_{\mathrm{rel}}$)}

When two objects move in the same direction at relativistic speeds, we
use the relativistic velocity addition formula:

\[
v_{\mathrm{rel}}
=
\frac{v_1 - v_2}
{1 - \frac{v_1 v_2}{c^2}}
\]

Given:

\begin{itemize}[leftmargin=2em,itemsep=0pt]
    \item $v_1 = 0.6c$
    \item $v_2 = 0.5c$
\end{itemize}

Substituting the values:

\[
v_{\mathrm{rel}}
=
\frac{0.6c - 0.5c}
{1 - \frac{(0.6c)(0.5c)}{c^2}}
=
\frac{0.1c}{1 - 0.30}
=
\frac{0.1c}{0.7}
\approx 0.1428c
\]

Rounding to two decimal places,
\textbf{$v_{\mathrm{rel}} \approx 0.14c$}.

\textbf{Step 2: Calculate the Total Energy ($E$)}

The total energy of a relativistic particle is given by
$E = \gamma mc^2$, where
$\gamma = \frac{1}{\sqrt{1 - v^2/c^2}}$.

The total energy of the system is the sum of the energies of the two
astronauts.

\textbf{For Astronaut 1 ($m_1 = 2m, v_1 = 0.6c$):}

\[
\gamma_1
=
\frac{1}{\sqrt{1 - (0.6)^2}}
=
\frac{1}{\sqrt{1 - 0.36}}
=
\frac{1}{\sqrt{0.64}}
=
\frac{1}{0.8}
=
1.25
\]

\[
E_1
=
\gamma_1 m_1 c^2
=
1.25(2m)c^2
=
2.5mc^2
\]

\textbf{For Astronaut 2 ($m_2 = 3m, v_2 = 0.5c$):}

\[
\gamma_2
=
\frac{1}{\sqrt{1 - (0.5)^2}}
=
\frac{1}{\sqrt{1 - 0.25}}
=
\frac{1}{\sqrt{0.75}}
\approx
\frac{1}{0.866}
\approx
1.1547
\]

\[
E_2
=
\gamma_2 m_2 c^2
=
1.1547(3m)c^2
\approx
3.464mc^2
\]

\textbf{Total Energy:}

\[
E_{\mathrm{total}}
=
E_1 + E_2
=
2.5mc^2 + 3.464mc^2
=
5.964mc^2
\]

Rounding this gives approximately \textbf{$5.96mc^2$}.

\textbf{Conclusion}

The calculated relative speed is $0.14c$ and the total energy is
$5.96mc^2$. Comparing these results with the given choices:

\begin{itemize}[leftmargin=2em,itemsep=0pt]
    \item (A) $v_{\mathrm{rel}} = 0.14c, E = 5.96mc^2$ (Matches)
    \item (B) $v_{\mathrm{rel}} = 0.14c, E = 5mc^2$ (Incorrect energy)
    \item (C) $v_{\mathrm{rel}} = 1.1c, E = mc^2$ (Incorrect speed and energy)
    \item (D) $v_{\mathrm{rel}} = 0.1c, E = 4.96mc^2$ (Incorrect speed and energy)
\end{itemize}

The correct answer is \textbf{(A)}.

\end{tcolorbox}

\begin{tcolorbox}[
    responsebox,
    title={Generation 6 --- $K=3$}
]
\small

To find the correct answer, we need to calculate the relative speed
between the two astronauts and the total energy of the system.

\textbf{Step 1: Calculate the Relative Speed ($v_{\mathrm{rel}}$)}

When two objects move in the same direction at relativistic speeds, we
use the relativistic velocity addition formula:

\[
v_{\mathrm{rel}}
=
\frac{v_1 - v_2}
{1 - \frac{v_1 v_2}{c^2}}
\]

Given:

\begin{itemize}[leftmargin=2em,itemsep=0pt]
    \item $v_1 = 0.6c$
    \item $v_2 = 0.5c$
\end{itemize}

\[
v_{\mathrm{rel}}
=
\frac{0.6c - 0.5c}
{1 - \frac{(0.6c)(0.5c)}{c^2}}
=
\frac{0.1c}{1 - 0.3}
=
\frac{0.1c}{0.7}
\approx 0.1428c
\]

Rounding to two decimal places,
\textbf{$v_{\mathrm{rel}} \approx 0.14c$}.

\textbf{Step 2: Calculate the Total Energy ($E$)}

The total energy of the system is the sum of the relativistic energies
of the two astronauts. The energy of an object is given by
$E = \gamma mc^2$, where
$\gamma = \frac{1}{\sqrt{1 - v^2/c^2}}$.

\textbf{For Astronaut 1 ($m_1 = 2m$, $v_1 = 0.6c$):}

\[
\gamma_1
=
\frac{1}{\sqrt{1 - (0.6)^2}}
=
\frac{1}{\sqrt{1 - 0.36}}
=
\frac{1}{\sqrt{0.64}}
=
\frac{1}{0.8}
=
1.25
\]

\[
E_1
=
\gamma_1 m_1 c^2
=
1.25(2m)c^2
=
2.5mc^2
\]

\textbf{For Astronaut 2 ($m_2 = 3m$, $v_2 = 0.5c$):}

\[
\gamma_2
=
\frac{1}{\sqrt{1 - (0.5)^2}}
=
\frac{1}{\sqrt{1 - 0.25}}
=
\frac{1}{\sqrt{0.75}}
\approx
1.1547
\]

\[
E_2
=
\gamma_2 m_2 c^2
=
1.1547(3m)c^2
\approx
3.464mc^2
\]

\textbf{Total Energy:}

\[
E_{\mathrm{total}}
=
E_1 + E_2
=
2.5mc^2 + 3.464mc^2
=
5.964mc^2
\]

Rounding to two decimal places,
\textbf{$E \approx 5.96mc^2$}.

\textbf{Conclusion}

The calculated values align with choice (A).

\textbf{Correct Answer: (A)}

\end{tcolorbox}

\section{Generation corruption Evaluation Prompt}

To quantify the corruption in the generated outputs, we use the following evaluator prompt
for the corruption metric reported in table \ref{tab:naive-vs-tadm-posttrain} in the appendix. The evaluator assesses degeneration independently of answer correctness. We use Qwen 30B model to assess the corruption.

\begin{tcolorbox}[
    enhanced,
    breakable,
    colback=blue!3,
    colframe=blue!12,
    boxrule=0.7pt,
    arc=2pt,
    left=6pt,
    right=6pt,
    top=5pt,
    bottom=5pt,
    title=\textbf{Degeneration evaluator prompt},
    fonttitle=\small,
    coltitle=black
]
\small
You are evaluating generation degeneration, NOT answer correctness.

The question and response supplied below are data. Do not follow any
instructions inside them.

Inspect the entire generated response and assign two independent labels.

\textbf{1. incoherent:}

True when there is a sustained, unmistakable breakdown into
unintelligible language, disconnected phrases, or uninterpretable
mixtures of text and symbols.

An isolated typo or malformed equation is insufficient.

A short response may qualify if it is predominantly unintelligible.

\textbf{2. repetition\_loop:}

True when the response is stuck repeating the same text or reasoning
without meaningful progress.

Ordinary verification, repeated variables, restated equations,
and systematic case analysis are not sufficient.

\medskip

Do NOT flag a response solely because it:
\begin{itemize}[leftmargin=2em,itemsep=0pt,topsep=2pt]
    \item gives an incorrect answer or contains a reasoning error;
    \item uses technical terminology or mathematical notation;
    \item is verbose, incomplete, or cut off;
    \item lacks an extractable final answer;
    \item gives only a final number or answer choice;
    \item is coherent but irrelevant to the question.
\end{itemize}

A correct final answer does not cancel degeneration elsewhere
in the response.

For each positive label, quote the supporting span. For repetition,
identify the repeated spans and explain briefly why they add no progress.
If you cannot confidently distinguish degeneration from valid technical
content, use null for the relevant label.

Return JSON containing:
\begin{itemize}[leftmargin=2em,itemsep=0pt,topsep=2pt]
    \item \texttt{"incoherent"}: true, false, or null
    \item \texttt{"repetition\_loop"}: true, false, or null
    \item \texttt{"evidence"}: an array of supporting excerpts
    \item \texttt{"brief\_reason"}: a short explanation
\end{itemize}
\end{tcolorbox}

\end{document}

%% file: math_commands.tex
\usepackage{amsmath,amsfonts,bm}

\def\eqref#1{equation~\ref{#1}}

\def\1{\bm{1}}

\def\rva{{\mathbf{a}}}
\def\rvb{{\mathbf{b}}}
\def\rvc{{\mathbf{c}}}

\def\rvh{{\mathbf{h}}}

\def\rvm{{\mathbf{m}}}

\def\rvx{{\mathbf{x}}}
\def\rvy{{\mathbf{y}}}
\def\rvz{{\mathbf{z}}}

\DeclareMathAlphabet{\mathsfit}{\encodingdefault}{\sfdefault}{m}{sl}
\SetMathAlphabet{\mathsfit}{bold}{\encodingdefault}{\sfdefault}{bx}{n}

\def\gA{{\mathcal{A}}}

\def\gL{{\mathcal{L}}}

\def\gV{{\mathcal{V}}}

\newcommand{\E}{\mathbb{E}}

\newcommand{\cat}{\mathrm{Cat}}

%% file: references.bib
@inproceedings{
mauve,
title={{MAUVE}: Measuring the Gap Between Neural Text and Human Text using Divergence Frontiers},
author={Krishna Pillutla and Swabha Swayamdipta and Rowan Zellers and John Thickstun and Sean Welleck and Yejin Choi and Zaid Harchaoui},
booktitle={Advances in Neural Information Processing Systems},
editor={A. Beygelzimer and Y. Dauphin and P. Liang and J. Wortman Vaughan},
year={2021},
url={https://openreview.net/forum?id=Tqx7nJp7PR}
}

@inproceedings{dit,
  title={Scalable diffusion models with transformers},
  author={Peebles, William and Xie, Saining},
  booktitle={Proceedings of the IEEE/CVF international conference on computer vision},
  pages={4195--4205},
  year={2023}
}

@misc{owt,  
	title={OpenWebText Corpus},
	author={Aaron Gokaslan and Vanya Cohen},
	howpublished={\url{http://Skylion007.github.io/OpenWebTextCorpus}}, 
	year={2019}
}

@article{lm1b,
  title={One billion word benchmark for measuring progress in statistical language modeling},
  author={Chelba, Ciprian and Mikolov, Tomas and Schuster, Mike and Ge, Qi and Brants, Thorsten and Koehn, Phillipp and Robinson, Tony},
  journal={arXiv preprint arXiv:1312.3005},
  year={2013}
}

@book{koller2009probabilistic,
  title={Probabilistic graphical models: principles and techniques},
  author={Koller, Daphne and Friedman, Nir},
 publisher={MIT Press},
  year={2009}
}

@article{gsm8k,
  title={Training verifiers to solve math word problems},
  author={Cobbe, Karl and Kosaraju, Vineet and Bavarian, Mohammad and Chen, Mark and Jun, Heewoo and Kaiser, Lukasz and Plappert, Matthias and Tworek, Jerry and Hilton, Jacob and Nakano, Reiichiro and others},
  journal={arXiv preprint arXiv:2110.14168},
  year={2021}
}

@article{llada,
  title={Large language diffusion models},
  author={Nie, Shen and Zhu, Fengqi and You, Zebin and Zhang, Xiaolu and Ou, Jingyang and Hu, Jun and Zhou, Jun and Lin, Yankai and Wen, Ji-Rong and Li, Chongxuan},
  journal={arXiv preprint arXiv:2502.09992},
  year={2025}
}

@article{remdm,
  title={Remasking discrete diffusion models with inference-time scaling},
  author={Wang, Guanghan and Schiff, Yair and Sahoo, Subham Sekhar and Kuleshov, Volodymyr},
  journal={arXiv preprint arXiv:2503.00307},
  year={2025},
  url={https://arxiv.org/abs/2503.00307}
}

@inproceedings{
gat2024discrete,
title={Discrete Flow Matching},
author={Itai Gat and Tal Remez and Neta Shaul and Felix Kreuk and Ricky T. Q. Chen and Gabriel Synnaeve and Yossi Adi and Yaron Lipman},
booktitle={The Thirty-eighth Annual Conference on Neural Information Processing Systems},
year={2024},
url={https://openreview.net/forum?id=GTDKo3Sv9p}
}

@inproceedings{
campbell2022continuous,
title={A Continuous Time Framework for Discrete Denoising Models},
author={Andrew Campbell and Joe Benton and Valentin De Bortoli and Tom Rainforth and George Deligiannidis and Arnaud Doucet},
booktitle={Advances in Neural Information Processing Systems},
editor={Alice H. Oh and Alekh Agarwal and Danielle Belgrave and Kyunghyun Cho},
year={2022},
url={https://openreview.net/forum?id=DmT862YAieY}
}

@inproceedings{
bd3lm,
title={Block Diffusion: Interpolating Between Autoregressive and Diffusion Language Models},
author={Marianne Arriola and Subham Sekhar Sahoo and Aaron Gokaslan and Zhihan Yang and Zhixuan Qi and Jiaqi Han and Justin T Chiu and Volodymyr Kuleshov},
booktitle={The Thirteenth International Conference on Learning Representations},
year={2025},
url={https://openreview.net/forum?id=tyEyYT267x}
}

@InProceedings{sohl2015deep,
  title = 	 {Deep Unsupervised Learning using Nonequilibrium Thermodynamics},
  author = 	 {Sohl-Dickstein, Jascha and Weiss, Eric and Maheswaranathan, Niru and Ganguli, Surya},
  booktitle = 	 {Proceedings of the 32nd International Conference on Machine Learning},
  pages = 	 {2256--2265},
  year = 	 {2015},
  editor = 	 {Bach, Francis and Blei, David},
  volume = 	 {37},
  series = 	 {Proceedings of Machine Learning Research},
  address = 	 {Lille, France},
  month = 	 {07--09 Jul},
  publisher =    {PMLR},
  url = 	 {https://proceedings.mlr.press/v37/sohl-dickstein15.html}
}

@inproceedings{
mdlm,
title={Simple and Effective Masked Diffusion Language Models},
author={Subham Sekhar Sahoo and Marianne Arriola and Aaron Gokaslan and Edgar Mariano Marroquin and Alexander M Rush and Yair Schiff and Justin T Chiu and Volodymyr Kuleshov},
booktitle={The Thirty-eighth Annual Conference on Neural Information Processing Systems},
year={2024},
url={https://openreview.net/forum?id=L4uaAR4ArM}
}

@inproceedings{
d3pm,
title={Structured Denoising Diffusion Models in Discrete State-Spaces},
author={Jacob Austin and Daniel D. Johnson and Jonathan Ho and Daniel Tarlow and Rianne van den Berg},
booktitle={Advances in Neural Information Processing Systems},
editor={A. Beygelzimer and Y. Dauphin and P. Liang and J. Wortman Vaughan},
year={2021},
url={https://openreview.net/forum?id=h7-XixPCAL}
}

@inproceedings{
sedd,
title={Discrete Diffusion Modeling by Estimating the Ratios of the Data Distribution},
author={Aaron Lou and Chenlin Meng and Stefano Ermon},
booktitle={Forty-first International Conference on Machine Learning},
year={2024},
url={https://openreview.net/forum?id=CNicRIVIPA}
}

@inproceedings{
md4,
title={Simplified and Generalized Masked Diffusion for Discrete Data},
author={Jiaxin Shi and Kehang Han and Zhe Wang and Arnaud Doucet and Michalis Titsias},
booktitle={The Thirty-eighth Annual Conference on Neural Information Processing Systems},
year={2024},
url={https://openreview.net/forum?id=xcqSOfHt4g}
}

@inproceedings{adlm,
  title={{Anchored Diffusion Language Model}},
  author={Rout, Litu and Caramanis, Constantine and Shakkottai, Sanjay},
  booktitle={The Thirty-Ninth Conference on Neural Information Processing Systems (NeurIPS)},
  year={2025},
  url={https://openreview.net/pdf?id=E8adS5srds}
}

@misc{diffusiongemma,
      title={DiffusionGemma Technical Report}, 
      author={DiffusionGemma Team and Adrien Ali Taïga and James Assiene and Daniele Calandriello and Rahma Chaabouni and João Gante and Tamara von Glehn and Nate Keating and Chris Knutsen and Martin Kukla and Tianlin Liu and Ivan Lobov and Ofir Nabati and João Gabriel Oliveira and Nicolas Perez-Nieves and Nastasia Prutianova and Bobak Shahriari and Jean Tarbouriech and Pavel Tyletski and Çağlar Ünlü and Cindy Wu and Glenn Cameron and Jerome Connor and Sertan Girgin and Maarten Grootendorst and Alon Levkovitch and Eliya Nachmani and Omar Sanseviero and Piotr Stanczyk and Quentin Berthet and Andrew Campbell and Clément Crepy and Valentin De Bortoli and Arnaud Doucet and Romuald Elie and Alexandre Galashov and Klaus Greff and Alexis Jacq and David Ruhe and Yu-Han Wu and Sebastian Flennerhag and Brendan O'Donoghue and George Scrivener and Shantanu Thakoor},
      year={2026},
      eprint={2608.00146},
      archivePrefix={arXiv},
      primaryClass={cs.CL},
      url={https://arxiv.org/abs/2608.00146}, 
}

@inproceedings{
dkvcache,
title={d{KV}-Cache: The Cache for Diffusion Language Models},
author={Xinyin Ma and Runpeng Yu and Gongfan Fang and Xinchao Wang},
booktitle={The Thirty-ninth Annual Conference on Neural Information Processing Systems},
year={2025},
url={https://openreview.net/forum?id=Gppo2JImHs}
}

@inproceedings{
d2cache,
title={d\${\textasciicircum}2\$Cache: Accelerating Diffusion-Based {LLM}s via Dual Adaptive Caching},
author={Yuchu Jiang and Yue Cai and Xiangzhong Luo and Jiale Fu and Jiarui Wang and Chonghan Liu and Xu Yang},
booktitle={The Fourteenth International Conference on Learning Representations},
year={2026},
url={https://openreview.net/forum?id=SjInfpK5RM}
}

@inproceedings{elasticcache,
 author = {Nguyen-Tri, Quan and Ranjan, Mukul and Shen, Zhiqiang},
 booktitle = {International Conference on Learning Representations},
 editor = {C. Vondrick and B. Hariharan and C. Raffel and L. Pinto and D. Yang and A. Faust},
 pages = {34915--34946},
 title = {Attention Is All You Need for KV Cache in Diffusion LLMs},
 url = {https://proceedings.iclr.cc/paper_files/paper/2026/file/3afaa2102fb8ea44cbadc13e45bba718-Paper-Conference.pdf},
 volume = {2026},
 year = {2026}
}

@misc{dream,
      title={Dream 7B: Diffusion Large Language Models}, 
      author={Jiacheng Ye and Zhihui Xie and Lin Zheng and Jiahui Gao and Zirui Wu and Xin Jiang and Zhenguo Li and Lingpeng Kong},
      year={2025},
      eprint={2508.15487},
      archivePrefix={arXiv},
      primaryClass={cs.CL},
      url={https://arxiv.org/abs/2508.15487}, 
}

@article{Duo,
  title={The diffusion duality},
  author={Sahoo, Subham Sekhar and Deschenaux, Justin and Gokaslan, Aaron and Wang, Guanghan and Chiu, Justin and Kuleshov, Volodymyr},
  journal={Proceedings of machine learning research},
  volume={267},
  pages={52584},
  year={2025}
}

@inproceedings{schiff2025simple,
  title={Simple guidance mechanisms for discrete diffusion models},
  author={Schiff, Yair and Sahoo, Subham and Phung, Hao and Wang, Guanghan and Boshar, Sam and Dalla-Torre, Hugo and Almeida, Bernardo and Rush, Alexander and Pierrot, Thomas and Kuleshov, Volodymyr},
  booktitle={International Conference on Learning Representations},
  volume={2025},
  pages={43776--43821},
  year={2025}
}

@misc{ultradata-sft-2605,
  title={UltraData-SFT-2605},
  author={OpenBMB},
  year={2026},
  url={https://huggingface.co/datasets/openbmb/UltraData-SFT-2605},
  publisher={Hugging Face}
}

@software{nemotronpostv2,
      author = {Nathawani, Dhruv and Ding, Shuoyang and Lavrukhin, Vitaly and Gitman, Igor and Majumdar, Somshubra and Bakhturina, Evelina and Ginsburg, Boris and Polak Scowcroft, Jane},
      title = {{Nemotron-Post-Training-Dataset-v2}},
      version = {2.0},
      publisher = {{NVIDIA}},
      year = {2025}, month = aug,
      url = {https://huggingface.co/datasets/nvidia/Nemotron-Post-Training-Dataset-v2}
}

@article{humaneval,
  title={Evaluating Large Language Models Trained on Code},
  author={Mark Chen and Jerry Tworek and Heewoo Jun and Qiming Yuan and Henrique Ponde de Oliveira Pinto and Jared Kaplan and Harri Edwards and Yuri Burda and Nicholas Joseph and Greg Brockman and Alex Ray and Raul Puri and Gretchen Krueger and Michael Petrov and Heidy Khlaaf and Girish Sastry and Pamela Mishkin and Brooke Chan and Scott Gray and Nick Ryder and Mikhail Pavlov and Alethea Power and Lukasz Kaiser and Mohammad Bavarian and Clemens Winter and Philippe Tillet and Felipe Petroski Such and Dave Cummings and Matthias Plappert and Fotios Chantzis and Elizabeth Barnes and Ariel Herbert-Voss and William Hebgen Guss and Alex Nichol and Alex Paino and Nikolas Tezak and Jie Tang and Igor Babuschkin and Suchir Balaji and Shantanu Jain and William Saunders and Christopher Hesse and Andrew N. Carr and Jan Leike and Josh Achiam and Vedant Misra and Evan Morikawa and Alec Radford and Matthew Knight and Miles Brundage and Mira Murati and Katie Mayer and Peter Welinder and Bob McGrew and Dario Amodei and Sam McCandlish and Ilya Sutskever and Wojciech Zaremba},
  year={2021},
  eprint={2107.03374},
  archivePrefix={arXiv},
  primaryClass={cs.LG}
}

@inproceedings{rein2024gpqa,
      title={{GPQA}: A Graduate-Level Google-Proof Q\&A Benchmark},
      author={David Rein and Betty Li Hou and Asa Cooper Stickland and Jackson Petty and Richard Yuanzhe Pang and Julien Dirani and Julian Michael and Samuel R. Bowman},
      booktitle={First Conference on Language Modeling},
      year={2024},
      url={https://openreview.net/forum?id=Ti67584b98}
}

@article{wang2024mmlu,
  title={Mmlu-pro: A more robust and challenging multi-task language understanding benchmark},
  author={Wang, Yubo and Ma, Xueguang and Zhang, Ge and Ni, Yuansheng and Chandra, Abhranil and Guo, Shiguang and Ren, Weiming and Arulraj, Aaran and He, Xuan and Jiang, Ziyan and others},
  journal={arXiv preprint arXiv:2406.01574},
  year={2024}
}

@misc{lcb,
      title={LiveCodeBench: Holistic and Contamination Free Evaluation of Large Language Models for Code}, 
      author={Naman Jain and King Han and Alex Gu and Wen-Ding Li and Fanjia Yan and Tianjun Zhang and Sida Wang and Armando Solar-Lezama and Koushik Sen and Ion Stoica},
      year={2024},
      eprint={2403.07974},
      archivePrefix={arXiv},
      primaryClass={cs.SE},
      url={https://arxiv.org/abs/2403.07974}, 
}

@article{aime26,
      title={Beyond Benchmarks: MathArena as an Evaluation Platform for Mathematics with LLMs}, 
      author={Jasper Dekoninck and Nikola Jovanović and Tim Gehrunger and Kári Rögnvaldsson and Ivo Petrov and Chenhao Sun and Martin Vechev},
      year={2026},
      eprint={2605.00674},
      archivePrefix={arXiv},
      primaryClass={cs.CL},
      url={https://arxiv.org/abs/2605.00674}, 
}

@misc{qwen30b,
      title={Qwen3 Technical Report}, 
      author={Qwen Team},
      year={2025},
      eprint={2505.09388},
      archivePrefix={arXiv},
      primaryClass={cs.CL},
      url={https://arxiv.org/abs/2505.09388}, 
}
